\DeclareMathOperator*{\rank}{rank}
\DeclareMathOperator*{\MSE}{MSE}
\DeclareMathOperator*{\RMSE}{RMSE}
\DeclareMathOperator*{\Beta}{Beta}
\DeclareMathOperator*{\Unif}{Unif}
\def \E {\mathbb{E}}
\DeclarePairedDelimiter\norm{\lVert}{\rVert}
\definecolor{DarkGreen}{rgb}{0.1,0.5,0.1}
\DeclareMathOperator*{\argmin}{arg\,min}
\newtheorem{definition}{Definition}[section]
\newtheorem{theorem}{Theorem}[section]
\newtheorem{lemma}[theorem]{Lemma}
\newtheorem{corollary}[theorem]{Corollary}
\title{ClusterSC: Advancing Synthetic Control with Donor Selection
}
\author{ Saeyoung Rho\\ 
	Columbia University
    \And
	Andrew Tang\\ 
	Columbia University
 \And
	Noah Bergam\\ 
	Columbia University
 \And
	Rachel Cummings\\ 
	Columbia University
 \And
	Vishal Misra\\ 
	Columbia University
}
\begin{document}
\maketitle

\begin{abstract}
    In causal inference with observational studies, synthetic control (SC) has emerged as a prominent tool.
    SC has traditionally been applied to aggregate-level datasets, but more recent work has extended its use to individual-level data.
    As they contain a greater number of observed units, this shift introduces the curse of dimensionality to SC.
    To address this, we propose Cluster Synthetic Control (ClusterSC), based on the idea that groups of individuals may exist where behavior aligns internally but diverges between groups.
    ClusterSC incorporates a clustering step to select only the relevant donors for the target.
    We provide theoretical guarantees on the improvements induced by ClusterSC, supported by empirical demonstrations on synthetic and real-world datasets.
    The results indicate that ClusterSC consistently outperforms classical SC approaches.
\end{abstract}

\section{Introduction}
\label{sec.introduction}
Synthetic control (SC) has emerged in the econometrics community as a natural extension of the Difference-in-Differences method (D-in-D, \cite{card1993minimum}).
By leveraging time-series data from both pre- and post-intervention periods, SC evaluates the impact of an intervention on a \emph{target unit} by constructing a synthetic counterfactual using a weighted combination of \emph{donor units}, rather than selecting the nearest neighbor as in D-in-D.
Much of the practical usage of SC has been with aggregate-level data, such as assessing the economic impact of government policies or political events at the state or regional level \citep{abadie2003economic, abadie2010synthetic, abadie2015comparative, kreif2016examination}.

Recently, there has been increasing attention to employing SC on disaggregate-level data, observed in contexts like clinical trials with individual health records \citep{thorlund2020synthetic} and economic analyses using individual income data \citep{abadie2021penalized}.
In disaggregate-level datasets, the number of observed donor units can increase dramatically, easily exceeding the number of time-series measurements.
Although more data typically means more information, the dimension of the synthetic control weights is determined by the number of units in the donor data. Hence, increased number of donors may introduce the \emph{curse of dimensionality}, where learning happens in a high-dimensional space with only a few time-series measurements.

In light of this, we revisit the core motivation of synthetic control, which is to construct a \emph{similar} counterpart to the target unit.
What if there is a group of donors that behaves most similarly to the given target unit?
We hypothesize an underlying group-based structure where the latent variables have a certain \emph{structural separation}.
Specifically, we focus on the distribution of the right singular vectors in each unit, and suggest clustering the donor pool before learning SC weights.
We then analyze the impact of selecting a subgroup of donors, rather than the entire donor pool, within the SC framework.

Our contribution is twofold.
First, we introduce ClusterSC, a novel approach to disaggregate-level SC to mitigate high noise and dimensionality issues by incorporating a donor clustering step. 
Second, we provide a theoretical analysis of our algorithm's guarantees, based on the structural assumptions in the latent variable space.
We also validate our approach empirically on synthetic and real-world datasets, demonstrating the improved prediction accuracy achieved by our method. 

Section \ref{sec.background} introduces the synthetic control family of methods and defines relevant notation.
We formalize the problem setup and introduce structural assumptions in Section \ref{sec.setup}.
Our main algorithm is introduced in Section \ref{sec.algo}, with theoretical analyses in Section \ref{sec.theory}.
Finally, Section \ref{sec.empirical} empirically evaluates the performance of our approach on synthetic and real-world datasets.

\section{Synthetic Control (SC) Methods}\label{sec.background}

Before introducing SC methods, we introduce some key notation.
We denote a target unit as a vector $x$ (usually indexed $0$) and a donor pool as a matrix $X \in \mathbb{R}^{n \times T}$ with $n$ donor units (index ranging from $1$ to $n$) and $T$ observations.
Given a matrix $X$, let $X_i$
be the $i$-th row , $X_{i:j}$ be the submatrix constructed by choosing the rows between $i$-th and $j$-th rows, and $X_{i,t}$ be the element in the $i$-th row and the $t$-th column of $X$.
When donors are represented as a set of points, we use $x_i$ to denote the point corresponding to the $i$-th donor unit. 
Assuming an intervention at time $T_0<T$, $X$ can be split into pre-intervention portion $X^-\in \mathbb{R}^{n \times T_0}$ and post-intervention portion $X^+\in \mathbb{R}^{n \times (T-T_0)}$. Similarly for a vector, $x = [x^-, x^+]$ is the pre- and post-intervention split.
We denote the $i$-th singular value of a matrix $X$ by $\sigma_i(X)$ and the $i$-th eigenvalue of a square matrix $X$ by $\lambda_i(X)$.
If needed, we denote the left and right singular vectors of a matrix $X$ as $u_i(X)$ and $v_i(X)$, respectively.
We use $\|\cdot\|$ to denote the spectral norm for a matrix and $\ell_2$ norm for a vector.

\textbf{SC Family of Methods.}
Imagine that a new property tax policy was implemented in New York, but not in other states in the US.
The time series data would include $T$ observations of quarterly housing price index $x_{i,t} \in \mathbb{R}$ for all cities (units) $i \in V$  and for all time points $t\in [T]$. At time $T_0 < T$, only the cities in New York (treated units) $W \subset V$ adopt a new policy (intervention), while other cities outside of New York are not affected (control units, potential donors).
Hence, for each treated unit $i \in W$, we have a pre-intervention time series $x_{i}^- \in \mathbb{R}^{T_0}$ without intervention and post-intervention time series $x_{i}^+\in \mathbb{R}^{T-T_0}$ under the new policy. For a control unit $j \in V \setminus W$, we can use the same notation but the post-intervention time series $x_{j}^+$ was not affected by the intervention.

SC estimates the effect of an intervention on treated units in $W$ by constructing the counterfactual for the post-intervention period. It is important to note that SC constructs a separate model for each treated unit, allowing the causal estimand to be calculated on a per-unit basis. The SC family of methods learns the relationship between a target unit ($i=0$ from $W$) and donor units ($j= 1, \ldots, n$ from $V \setminus W$) using pre-intervention time series data. Assuming this relationship remains stable over time $t \in [T]$, the counterfactual post-intervention time series for the target unit is inferred using donor data from the post-intervention period. Algorithm \ref{alg.sc.family} formally defines the synthetic control family of methods.

\begin{algorithm}
\caption{Synthetic Control Family of Methods}
\begin{algorithmic}\label{alg.sc.family}
\STATE \textbf{Data: } Target time series vector $x_i \in \mathbb{R}^T$ for each treated unit $i \in W$.
Donor data $X \in \mathbb{R}^{n \times T}$ containing all control units $j \in V \setminus W$.
\FOR{$i \in W$}
\STATE \textbf{1. Learn }
$f = \mathcal{M}(X, x_i^-)$
\STATE \textbf{2. Project }
$\hat{m}_{i}^+ = f(X^+)$
\STATE \textbf{3. Infer } the estimated causal effect of the intervention for target $i$ is $x_i^+ - \hat{m}_{i}^+$
\ENDFOR
\end{algorithmic}
\end{algorithm}

In the first step of Algorithm \ref{alg.sc.family}, $\mathcal{M}$ learns weights $f$ to represent the target unit as a linear combination of the donor units.
In the original work on synthetic control, \cite{abadie2003economic} use linear regression with a simplex constraint on the weights (i.e., the regression coefficients should be non-negative and sum to one). They used data on per capita GDP in $n=17$ Spanish regions (aggregate level) to measure the effect of terrorism on Basque Country's per capita GDP.
Later, more advanced variations of synthetic control have been proposed to deal with multiple treated units \citep{dube2015pooling, abadie2021penalized}, to correct bias \citep{ben2021augmented, abadie2021penalized}, to remove simplex constraints \citep{doudchenko2016balancing, rsc}, to ensure differential privacy \citep{dpsc}, to incorporate matrix completion techniques \citep{athey2021matrix, mrsc}, and to consider temporal order \citep{brodersen2015inferring}.

In this paper, we will use Algorithm \ref{alg.sc.core} as our learning method $\mathcal{M}$, which is based on the method proposed by \cite{rsc}. It denoises the donor matrix by retaining only the top $r$ singular values through hard singular value thresholding (HSVT) \citep{cai2010singular, chatterjee2015matrix}, followed by ordinary least squares to obtain the weight vector $f$. This approach is known for its robustness to noisy data, making it well-suited to our objectives.

\begin{algorithm}
\caption{Learn Step of Synthetic Control Algorithm $\mathcal{M}(X, x^-; r)$
}
\begin{algorithmic}\label{alg.sc.core}
\STATE \textbf{Input: } donor data $X$, pre-intervention target data $x^-$, the number of singular values to keep $r$
\STATE \textbf{1. Perform SVD }
\STATE $X = \sum_{i=1}^{T} \sigma_i u_i v_i^{\top}$, $\sigma_i$ in decreasing order.
\STATE \textbf{2. Denoise }
$\hat{M} = \sum_{i=1}^r \sigma_i u_i v_i^{\top} \coloneqq HSVT(X;r)$.
\STATE \textbf{3. Return SC weights}
\STATE 
$\hat{f} = \argmin_{f\in \mathbb{R}^{n}} \|  \hat{M}^{-\top} f - x^-\|$ (SC weights)
\end{algorithmic}
\end{algorithm}

An intuitive way to view SC is a linear regression vertically performed on the dataset. The pre-intervention donor matrix $X^-$ is the regressor and the pre-intervention target time series $x_0^-$ is the regressand, so the $j$-th element of the weight vector $f$ represents the importance of the $j$-th donor unit in explaining the target unit $0$. Since a column of the matrix $X^-$ becomes one sample for learning, we call this a \emph{vertical regression}.

Another way to view SC is as a matrix completion problem with post-intervention target data as missing values.
\citet{athey2021matrix} formalizes SC as a matrix completion method by setting an objective function based on the Frobenius norm of the difference between the latent and the observed matrix.
The core modeling assumption of this approach is that the matrix is approximately low-rank.
This is achieved by assuming a Lipschitz-continuous latent variable model with bounded latent variables \citep{candes2010matrix, candes2012exact, nguyen2019low}.

\textbf{SC on Disaggreagate-level Data}
When applying SC to disaggregate-level data, meeting these assumptions becomes more challenging. For example, there might be a certain \emph{type} of units (such as patients with a certain phenotype) that can be well-approximated by a low-rank matrix, but not when mixed with other units in different types.
When the number of potential donors is small, it may be possible to hand-pick a suitable donor set based on background knowledge, which is usually done for aggregate-level datasets \citep{abadie2003economic, abadie2015comparative}.

However, with disaggregate-level data, researchers must devise more data-driven approaches to select the appropriate donor units for a given target. \citet{abadie2021penalized} used a penalty term to keep the \emph{active units} in the donor pool  small. Other works suggest using Lasso \citep{chernozhukov2021exact} or elastic net \citep{doudchenko2016balancing} regularizers to achieve similar results.
Still, these SC configurations operate in $n$-dimensional spaces, which is less feasible when $n$ is large.

\section{Problem Setup}\label{sec.setup}
In this paper, we focus on applying SC to disaggregate-level data. Given the abundance of donor units, our objective is to develop a pre-processing step for SC that selects the optimal set of donors for a given target unit. In the following subsections, we present a detailed model tailored to this setting.

To assess the performance of SC methods, researchers often construct a \emph{placebo} test \citep{abadie2003economic}, where SC is used to predict post-intervention data in the absence of an intervention, or equivalently, to predict the post-intervention time series of a control unit using other control units as the donor pool.  In these settings, since the target is drawn from the same distribution as the donors, the estimated causal effect (from Algorithm \ref{alg.sc.family}) should be zero. To more easily articulate the accuracy of SC methods, we focus on these placebo studies in the remainder of the paper.

\subsection{Model}\label{sec.model}
Let $x_0$ be the target unit and let $X \in \mathbb{R}^{n\times T}$ be the donor data matrix, where each row $x_i$ is a $T$-length time-series measurement.
In light of disaggregate-level data, we assume $n \gg T$ (i.e., $X$ is a tall matrix).
We assume that the true data generation model comes from a latent variable model, plus some observation noise. 
That is, $X = M + E$, where $M_{i,t}$ is the true (deterministic) signal with entries bounded $-1 \leq M_{i,t} \leq 1$, and $E_{i,t}$ is mean-zero noise with finite variance $s^2$, for all $i \in \{0, \cdots, n\}$ and $t \in [T]$.
Similarly, we assume the target $x_0 = m_0+\epsilon_0$ with zero-mean finite-variance ($s^2$) noise $\epsilon_0$.

Consistent with the synthetic control literature, we assume the entries of \(M\) are generated by a latent variable model, i.e., $M_{i,t} = g(\theta_i, \rho_t)$ where $\theta_i$ and $\rho_t$ are finite-dimensional latent vectors \citep{ben2021augmented, arkhangelsky2021synthetic, abadie2021using, rsc, mrsc, athey2021matrix}. Since we focus on placebo studies, we assume this holds for the target unit as well: $m_{0,t} = g(\theta_0, \rho_t) \; \forall t \in [T]$.\footnote{In the case of target unit that experienced an intervention, this would not necessarily hold for $t>T_0$.} We assume \(g\) is $L$-bilipschitz continuous, so that cluster structure in the latent variables is recoverable by our algorithm (see Section \ref{sec.theory.subgroup}).
Then, $M$ is known to be well-approximated by a low-rank matrix \citep{chatterjee2015matrix} with \(\rank(M) = O(\log T)\) \citep{udell2019big}. We denote \(\rank(M) =r\) and assume \(r < T\).
Finally, we assume that there exists a vector $f^*$ with $\|f^*\| \leq \mu$ for some $\mu>0$, satisfying $M_{0,t} = M_{1:n, t}^\top f^*$.

\subsection{Existence of Subgroups}\label{sec.subgroup}

Our motivation comes from the idea that the donors may have some relevant subgroups in the population or underlying cluster structure, and the target unit belongs to one of these clusters.
We formalize this by assuming a centroid-based separation structure on the row latent variables $\Theta = \{\theta_i:i\in [n]\}$. 
Let $P = \{P_j\}_{j\in [k]}$ be a \(k\)-partition of $\Theta$ (i.e., \(P_1\sqcup ...\sqcup P_k = \Theta\)), with induced centers $\{c_j\}_{j\in [k]} = \{\frac{1}{|P_j|}\sum_{\theta\in P_j} \theta\}_{j\in [k]}.$
Then, we define the $k$-means cost
$\Delta_k^2(\Theta; P) = \sum_{i=1}^n \min_{j\in [k]}\|\theta_i - c_j\|^2$, which captures the average distance from the cluster center to members of the cluster. We denote \(\Delta_k^2(X) = \min_{P} \Delta_k^2(X; P)\) as the cost of an optimal $k$-means solution of input $X$.
Finally, we assume the $\varepsilon$-separation condition on $\Theta$, as in Definition \ref{def.epsseparation}.
\begin{definition}[$\varepsilon$-separation]\label{def.epsseparation}
We say that $\Theta  = (\theta_1,...,\theta_n)\subset \mathbb{R}^d$ is $\varepsilon$-separated with $k$ clusters if for some integer $k \geq 2$ and $\varepsilon \in (0,1)$,
\begin{equation}\label{eq.assump.separation}
\Delta_k^2(\Theta) \leq \varepsilon^2\Delta_{k-1}^2(\Theta).
\end{equation}
\end{definition}
This captures the idea that \(k\) clusters fit the data significantly better than \(k-1\) clusters (in the spirit of the ``elbow method'' heuristic).
For example, this condition would be satisfied if $\Theta$ were generated by a sufficiently separated mixture of $k$ distributions.
This modeling assumption on the existence of subgroups provides us with a formal setup to show the ability of our algorithm to approximate the clusters in the latent variable space.

\section{Cluster Synthetic Control (ClusterSC) Algorithm}\label{sec.algo}
In this section, we present Cluster Synthetic Control (ClusterSC, Algorithm \ref{alg.csc}), which integrates a donor-clustering step into the synthetic control framework. The clustering subroutine is designed to identify structural patterns within the donor pool, ensuring that units within the same cluster exhibit similar behavior while differing across clusters. Given a target unit, the algorithm selects the most relevant cluster, after which synthetic control is applied using only the chosen donors.

The core intuition behind our ClusterSC algorithm is that using more donor units corresponds to higher-dimensional inputs in the linear regression step of synthetic control, which in turn leads to higher-dimensional noise and more instability. Therefore, we want to restrict to only the most relevant donors (via clustering) and thereby lower the dimension of the regression.
We accomplish this in a two-step approach:

\begin{itemize}
    \item Algorithm \ref{alg.cluster} is a clustering step that partitions the donor units using \(k\)-means clustering. This enables the donor selection algorithm to later identify the correct donor cluster for the target unit. 
    \item With the identified clusters, Algorithm \ref{alg.csc} finds a matching subgroup for a given target and performs SC using only this subgroup as donor units. This subgroup specialization leads to a more accurate SC predictions with lower computational cost.
\end{itemize}

\begin{algorithm}[h]
\caption{Clustering Algorithm $\mathcal{C}(X; r)$}
\begin{algorithmic}\label{alg.cluster}
\STATE \textbf{Input: } Donor matrix $X$, approximate rank $r$
\STATE \textbf{1. Perform SVD}
\STATE $X = U \Sigma V^\top$ 
\STATE $\tilde{M} = U \Sigma_r V^\top :=HSVT_r(X) \quad$ \# Hard Singlular Value Thresholding
\STATE $\tilde{U} = U \Sigma_r \quad$ \# Features used for clustering
\STATE \textbf{2. Perform $k$-means clustering} \(n^{O(1)}\) steps of Lloyd's method on the rows of $\tilde{U}$.
\STATE \textbf{3. Return} cluster centers and $V$
\end{algorithmic}
\end{algorithm}

Algorithm \ref{alg.cluster} uses the assumption that the signal matrix $M$ is low-rank with rank $r$, hence the noisy version $X$ is approximately low-rank. It first performs a singular value decomposition (SVD) $X = U \Sigma V^\top = \sum_{i=1}^r s_i u_i v_i^\top$, where the $v_i$'s represent the $r$ basis row vectors; that is, any rows in $X$ can be expressed as a linear combination of the $v_i$'s.
Define $\tilde{U}=U\Sigma$.
Then, for the $j$-th \emph{row} of $\tilde{U}$, $\tilde{U}_{j,i}$ can be interpreted as the number of basis vectors $v_i$ that are used to describe row $X_j$.

\begin{wrapfigure}{l}{0.5\textwidth}
  \begin{center}
\includegraphics[width=\textwidth]{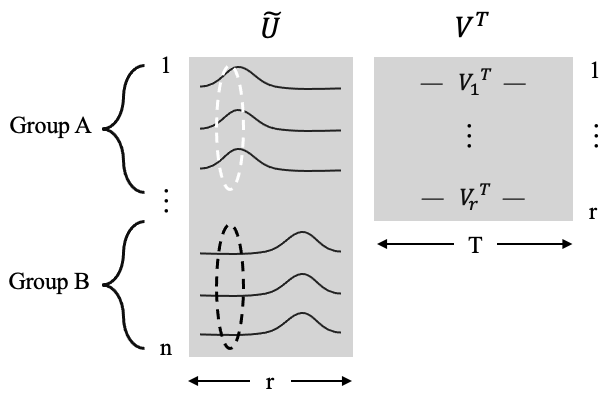}
  \end{center}
\caption{Visualization of the distribution of rows in $\tilde{U}$ with two different subgroups in the donor units. Each row $\tilde{U}_i$ can be interpreted as an embedding of the unit $i$, representing the composition of right singular vectors for that row. 
}\label{fig.u.tilde}
\end{wrapfigure}

What if there are two groups with very different distributions of $U_j$'s? Figure \ref{fig.u.tilde} visualizes this difference with the white dashed circle and the black dashed circle illustrating the differences in groups A and B in a single column: for each column of $\tilde{U}$, we expect the distribution to be different across groups but similar within each group.
Performing a clustering algorithm (e.g., $k$-means in our case) on $\tilde{U}$, we are separating units based on the use of right singular vectors $v_i$.

Our ClusterSC in Algorithm \ref{alg.csc} incorporates Algorithm \ref{alg.cluster} as a pre-processing step on the donor pool. The second step of Algorithm \ref{alg.csc} computes $\tilde{u}$, a counterpart of $\tilde{U}$ for the target, and finds the matching cluster for the target.  A donor matrix $A$ is then constructed using data from the selected cluster. Note that only pre-intervention data are used for these steps, based on the motivating use-case of SC, where post-intervention data are assumed to be missing for the target unit.

For the Learn step of 
synthetic control algorithm within Algorithm \ref{alg.csc}, we adopt Algorithm \ref{alg.sc.core} since de-noising with HSVT before the regression has been shown to be robust to noise \citep{rsc}, although other preferred methods could be used in this step instead. This SC step takes the selected sub-group donor matrix $A$ as an input, instead of the whole donor set $X$.

\begin{algorithm}
\caption{ClusterSC $(X, x_0; r)$}
\begin{algorithmic}\label{alg.csc}
\STATE \textbf{Input: } Donor matrix $X$, target data $x_0$, approximate rank $r$.
\STATE \textbf{1. Learn clusters}
\STATE $c_1, \cdots, c_k$, $V \leftarrow \mathcal{C}(X;r)$ (Algorithm \ref{alg.cluster}, $c_t$ are cluster centers)
\STATE \textbf{2. Find target's matching donor cluster $t$}
\STATE $\tilde{u} = V^{-\top} x_0^-$
\STATE $t = \arg\min_{t'} \|c_{t'}-\tilde{u}\|_2$ ($t$ is target's cluster label)
\STATE \textbf{3. Construct donor matrix $A$ and denoise}
\STATE $A = X_{C_t}$ ($C_t$ is the set of units in cluster $t$)
\STATE $\hat{M}_{C_t} = HSVT(A; r)$ (Denoise selected donor)
\STATE \textbf{4. SC: Learn}
$\hat{f} \leftarrow \mathcal{M}(A, x_0^-; r)$ (Algorithm \ref{alg.sc.core})
\STATE \textbf{5. SC: Project}
$\hat{m}_0^+ \leftarrow \hat{f}(\hat{M}_{C_t}^+)$ 
\STATE \textbf{6. SC: Infer } the estimated causal effect of the intervention for the target is $x_0^+ - \hat{m}_{0}^+$
\end{algorithmic}
\end{algorithm}

\section{Theoretical Guarantees}\label{sec.theory}

In this section, we provide theoretical guarantees on the performance of ClusterSC (Algorithm \ref{alg.csc}) by showing the accuracy of identifying subgroups (Section \ref{sec.theory.subgroup}) and the impact of subgroup specialization via ClusterSC on the prediction accuracy (Sections \ref{sec.changes} and \ref{sec.theory.improve}).

Following the notation introduced in Section \ref{sec.model}, let $X = M+E_M$ be a $n \times T$ donor pool matrix and $A = S+E_S$ be a sub-matrix constructed by taking $n_A$ rows of $X$ based on the ClusterSC output.
Let the low-rank signal matrices have $\rank(M)=r$ and $\rank(S) =r_S$.
Then, we say the approximate-rank of $X$ is $r$, and we define the $(r+1)$-th singular value of a matrix $X$ as $\sigma^*_X = \sigma_{r+1}(X)$. 

The pre-intervention mean squared error of synthetic control estimator
is given by:
$
   \MSE(\hat{m}^-;X) = \mathbb{E}[\frac{1}{T_0}\| m^- - \hat{M}^{-\top}\hat{f}\|^2],
$
where \(\hat{f}\leftarrow \mathcal{M}(X, x^-)\).
Likewise, the post-intervention error is 
$
   \MSE(\hat{m}^+;X) = \mathbb{E}[\frac{1}{T-T_0}\| m^+ - \hat{M}^{+\top}\hat{f}\|^2].
$
RMSE is defined by taking a squared root inside the expectation of either expression.
We are interested in the change in \(\MSE\) (or RMSE) when replacing \(X\in \mathbb{R}^{n\times T}\) with its subset \(A\in \mathbb{R}^{n_A\times T}\).

\subsection{Accuracy of Subgroup Identification}\label{sec.theory.subgroup}

In this section, we show that existing subgroups in $\Theta$-space are well-approximated by Algorithm \ref{alg.cluster}.
We need to show that the cluster structure assumed only in the $\Theta$ space will continue to hold in $\tilde{U}$ space, where the clustering is performed in Algorithm \ref{alg.cluster}. To do so, we first show that this structure is well-preserved in $M$ via bilipschitz mapping (Section \ref{sec.theta.m}).
Then, we show $\tilde{M}$, hard singular value thresholding (HSVT) applied to $X=M+E$, is close to the signal matrix $M$ (Section \ref{sec.m.mtilde}).
Finally, we show that clustering with $\tilde{U}$ features as in Algorithm \ref{alg.cluster} can well-approximate clusters in $\tilde{M}$ (Section \ref{sec.mtilde.utilde}). All omitted proofs from this section are presented in Appendix \ref{app.proofs}.

Some new notation is needed to discuss the clustering results.
To measure the accuracy of approximation, we say partition $P^A$ is $\epsilon$-approximated by partition $P^B$ if the two partitions agree with each other for all but $\epsilon$ fraction of the points. We use $A \ominus B = (A \setminus B)\cup (B \setminus A)$ to denote the symmetric difference between sets $A$ and $B$.
For a set of points $A={a_1, \ldots, a_n}$, we define the $k$-mean optimal cluster centers $C^A = \{c^A_i\}_{i=1}^{k}$ and the induced Voronoi partition $P^A = \{P^A_i\}_{i=1}^{k}$.
The optimal $k$-means objective is defined as
$
\Delta_k^2(A)
= \sum_{i\in[n]} \min_{j\in[k]} ||a_i-c^A_j||^2 
= \sum_{l\in[k]} \frac{1}{2|P^A_l|}\sum_{i,j \in P^A_l} ||a_i-a_j||^2.
$
When a partition $\hat{P}$ is specified, the optimal $k$-means objective can be written as 
$
\Delta_k^2(A; \hat{P})
= \sum_{l\in[k]} \frac{1}{2|\hat{P}_l|}\sum_{i,j \in \hat{P}_l} ||a_i-a_j||^2,
$
with the centers recalculated as a mean of each partition.

The main result of this section is Theorem \ref{thm.cluster_consistency}, which combines all three steps (Lemmas \ref{lem.cluster_1}, \ref{lem.cluster_2}, and \ref{lem.cluster_3}) to show that Algorithm \ref{alg.cluster} well-approximates the optimal $k$-means $P^\Theta$.
Each lemma bounds the symmetric difference between the partitions $P^\Theta$, $P^M$, $P^{\tilde M}$ and the partition learned by clustering subroutine (Algorithm \ref{alg.cluster}).
By summing up the differences in each step, we can guarantee that the initial partition $P^\Theta$ and the algorithmic output will disagree by at most half of the sum of the symmetric difference.

\subsubsection{Bilipschitz mapping of cluster structure in $\Theta$ to $M$}\label{sec.theta.m}

Recall from Section \ref{sec.model} that $\Theta$ is the unobserved latent variables, and $M = g(\Theta)$ is the signal created from $\Theta$ via $L$-bilipschitz mapping $g$. Using the $\varepsilon$-separation structure assumed in $\Theta$, we investigate how the clustering structure in $\Theta$ is preserved in $M$.
Our analysis involves an intermediate labeling $P'$, constructed as follows.
Take the labeling under $P^\Theta$ and consider the distribution of points in $M$ space. Let \(c' = (c_1',...,c_k')\) be the centers induced by \(P^\Theta\) on \(M\), i.e., $c'_i =\frac{1}{|P_i^\Theta|}\sum_{l\in P_i^\Theta} m_l$, and $P'$ be the Voronoi partition induced by $c'$. 
We will use $P'$ as an intermediate step to show that $P^\Theta$ and $P'$ are similar using the $\varepsilon$-separation structure (Definition \ref{def.epsseparation}) of $\Theta$. We will then show that $P'$ and $P^M$ are similar using Theorem \ref{thm.ostrovsky}, which states that if $P'$ yields sufficiently small $k$-means cost on $M$, and $M$ is sufficiently separated, then $P'$ is similar to $P^M$, the optimal $k$-means result on $M$. Note that in the case where $P^\Theta$ on $M$ forms a Voronoi partition, then $P'=P^\Theta$, and the first step is not needed.

To apply Theorem \ref{thm.ostrovsky} later, we first require Lemma \ref{lem.bilipseffect}, which gives the relationship between the optimal $k$-means objective with input $M=g(\Theta)$ and with input $\Theta$.

\begin{restatable}{lemma}{bilipseffect}\label{lem.bilipseffect}
For any $L$-bilipschitz function $g$, $(1/L^2)\Delta^2_k(\Theta) \leq \Delta^2_k(g(\Theta)) \leq L^2 \Delta^2_k(\Theta)$.
\end{restatable}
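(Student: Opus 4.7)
The plan is to avoid the centroid formulation of $k$-means cost altogether (since an $L$-bilipschitz map need not commute with averaging, so $g$ of a centroid is not the centroid of the image), and to work instead with the equivalent pairwise-distance formulation already introduced in the paper:
\[
\Delta_k^2(A; P) = \sum_{l\in[k]} \frac{1}{2|P_l|}\sum_{i,j \in P_l} \|a_i - a_j\|^2.
\]
The advantage is that the bilipschitz bound applies termwise: for each pair $(i,j)$, $(1/L)\|\theta_i-\theta_j\| \leq \|g(\theta_i)-g(\theta_j)\| \leq L\|\theta_i-\theta_j\|$, and squaring then summing preserves the inequality cluster by cluster.

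First, I would establish the partition-wise statement: for any partition $P$ of the index set $[n]$,
\[
\tfrac{1}{L^2}\Delta_k^2(\Theta; P) \;\leq\; \Delta_k^2(g(\Theta); P) \;\leq\; L^2\,\Delta_k^2(\Theta; P),
\]
by applying the squared bilipschitz bound inside the double sum over pairs $i,j \in P_l$ and then summing over $l$. This is a direct substitution, and makes no use of the partition being optimal for either side.

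Next, I would transfer this partition-wise inequality to the optimal costs by a standard ``swap the witness'' argument. Let $P^\Theta$ be an optimal $k$-means partition for $\Theta$ and $P^{g(\Theta)}$ an optimal $k$-means partition for $g(\Theta)$. For the upper bound, use $P^\Theta$ as a suboptimal witness for $g(\Theta)$:
\[
\Delta_k^2(g(\Theta)) \leq \Delta_k^2(g(\Theta); P^\Theta) \leq L^2\,\Delta_k^2(\Theta; P^\Theta) = L^2\,\Delta_k^2(\Theta).
\]
For the lower bound, use $P^{g(\Theta)}$ as a suboptimal witness for $\Theta$:
\[
\Delta_k^2(\Theta) \leq \Delta_k^2(\Theta; P^{g(\Theta)}) \leq L^2\,\Delta_k^2(g(\Theta); P^{g(\Theta)}) = L^2\,\Delta_k^2(g(\Theta)),
\]
which rearranges to $(1/L^2)\Delta_k^2(\Theta) \leq \Delta_k^2(g(\Theta))$.

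Honestly, there is no real obstacle here; the only subtlety to flag is why one cannot simply push $g$ through the centroid-based definition. Writing the cost in the pairwise form (which the paper has already stated is equivalent) sidesteps that issue cleanly, and the rest is the standard two-way ``evaluate the optimum of one side at the optimizer of the other'' trick.
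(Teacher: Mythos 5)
Your proposal is correct and follows essentially the same route as the paper's proof: both work with the pairwise-distance form of the $k$-means cost, apply the squared bilipschitz bound termwise within each cluster, and transfer to the optimal costs by evaluating each side's cost at the other side's optimal partition (the paper simply writes the two witness-swaps as one chain from $\Delta_k^2(\Theta)$ up to $L^4\Delta_k^2(\Theta)$). Your remark about why the centroid formulation cannot be used directly is a nice clarification but does not change the argument.
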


Using this result, we can show in Lemma \ref{lem.kmeanscostbound} that the conditions for Theorem \ref{thm.ostrovsky} are satisfied, namely that $M$ is $L^2\varepsilon$-separated and the $k$-means cost of partition $P^\Theta$ calculated on $M$ is bounded by $L^4\varepsilon^2 \Delta_{k-1}^2(M)$.

\begin{restatable}{lemma}{kmeanscostbound}\label{lem.kmeanscostbound}
$M=g(\Theta)$ is $L^2\varepsilon$-separated with $k$ clusters, and $\Delta^2_k(M;P^\Theta) \leq L^4 \varepsilon^2 \Delta^2_{k-1}(M)$.
\end{restatable}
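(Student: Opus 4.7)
The plan is to establish both claims of the lemma by combining the $\varepsilon$-separation hypothesis on $\Theta$ with the bilipschitz bounds from Lemma \ref{lem.bilipseffect}. The overall strategy is to push the $k$-cost upward into $M$-space via the Lipschitz bound, invoke the separation assumption to convert it to an $(k-1)$-cost on $\Theta$, and then pull the $(k-1)$-cost back into $M$-space via the inverse-Lipschitz bound.

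For the second claim, $\Delta_k^2(M; P^\Theta) \leq L^4\varepsilon^2 \Delta_{k-1}^2(M)$, I would use the pairwise formulation of the $k$-means objective given in the excerpt:
\[
\Delta_k^2(M;P^\Theta) = \sum_{l\in[k]} \frac{1}{2|P_l^\Theta|}\sum_{i,j\in P_l^\Theta} \|m_i - m_j\|^2.
\]
Since $g$ is $L$-bilipschitz, $\|m_i-m_j\|^2 = \|g(\theta_i)-g(\theta_j)\|^2 \leq L^2\|\theta_i-\theta_j\|^2$. Summing this bound over all within-cluster pairs (and using that $P^\Theta$ is the optimal partition of $\Theta$, so that evaluating the cost at $P^\Theta$ yields $\Delta_k^2(\Theta)$) gives $\Delta_k^2(M;P^\Theta) \leq L^2\Delta_k^2(\Theta)$. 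Applying $\varepsilon$-separation of $\Theta$ yields $L^2\Delta_k^2(\Theta)\leq L^2\varepsilon^2\Delta_{k-1}^2(\Theta)$, and then the lower bound in Lemma \ref{lem.bilipseffect} (rearranged as $\Delta_{k-1}^2(\Theta)\leq L^2\Delta_{k-1}^2(M)$) delivers the desired factor of $L^4\varepsilon^2$.

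For the first claim, that $M$ is $L^2\varepsilon$-separated with $k$ clusters, I note that $\Delta_k^2(M)\leq \Delta_k^2(M;P^\Theta)$ by optimality of $P^M$, so the bound above immediately yields $\Delta_k^2(M)\leq L^4\varepsilon^2 \Delta_{k-1}^2(M) = (L^2\varepsilon)^2\Delta_{k-1}^2(M)$, which is exactly the defining inequality of $L^2\varepsilon$-separation. (One should note that for the conclusion to fall in the $(0,1)$ range required by Definition \ref{def.epsseparation}, the implicit regime is $L^2\varepsilon<1$; this is consistent with the way the separation parameter deteriorates under bilipschitz distortion.)

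There is no real obstacle here: both claims reduce to bookkeeping with the two inequalities $\|m_i-m_j\|\leq L\|\theta_i-\theta_j\|$ and its inverse, combined with the $\varepsilon$-separation hypothesis. The only mild subtlety is remembering that $\Delta_k^2(\Theta;P^\Theta)=\Delta_k^2(\Theta)$ because $P^\Theta$ is by construction the optimal $k$-means partition of $\Theta$, so no slack is introduced when we evaluate the cost at $P^\Theta$ on the $\Theta$ side.
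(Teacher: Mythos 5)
Your proof is correct and follows essentially the same route as the paper: the chain $\Delta_k^2(M;P^\Theta)\leq L^2\Delta_k^2(\Theta)\leq L^2\varepsilon^2\Delta_{k-1}^2(\Theta)\leq L^4\varepsilon^2\Delta_{k-1}^2(M)$ is exactly the paper's argument (the paper cites this first inequality from the intermediate steps of the proof of Lemma \ref{lem.bilipseffect}, whereas you re-derive it from the pairwise formulation, which is the same computation). Deducing the $L^2\varepsilon$-separation of $M$ from $\Delta_k^2(M)\leq\Delta_k^2(M;P^\Theta)$ is a harmless reordering of the paper's presentation, and your remark about needing $L^2\varepsilon<1$ matches conditions the paper imposes later.
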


\begin{proof}
We first show the $L^2 \varepsilon$-separation of $M$ by comparing the the cost of clustering $M=g(\Theta)$ with $k$ and $k-1$ clusters.
\begin{align*}
    \Delta^2_k(M)
    &\leq L^2 \Delta^2_k(\Theta) &&\text{(Lemma \ref{lem.bilipseffect})} \\
    &\leq L^2 \varepsilon^2 \Delta^2_{k-1}(\Theta) && \text{(Modeling assumption of Eq \eqref{eq.assump.separation})} \\
    &\leq L^4\varepsilon^2 \Delta^2_{k-1}(M). && \text{(Lemma \ref{lem.bilipseffect})}
\end{align*}
To proof the second claim of the lemma, we observe that the proof of Lemma \ref{lem.bilipseffect} shows that the first line decomposes to
\[
\Delta^2_k(M) \leq \Delta^2_k(M; P^\Theta) \leq L^2 \Delta^2_k(\Theta),
\]
as shown in Equations \eqref{eq.kmeans.bounding.2}, \eqref{eq.kmeans.bounding.intermediate}, and \eqref{eq.kmeans.bounding.3} of the proof.
Taking the second term and again applying Lemma \ref{lem.bilipseffect}, we obtain that $\Delta^2_k(M; P^\Theta) \leq L^4\varepsilon^2 \Delta^2_{k-1}(M)$.
\end{proof}

With this in mind, we now focus on bounding the difference between $P^\Theta$ and $P'$.
First, define
$r_i^2(A):=  \frac{1}{|P_i^A|}\sum_{l\in P_i^A} \|A_l - c_i^A\|^2$, the mean squared error of cluster $P^A_i$, for any set of points $A$. Then, \cite{ostrovsky2013effectiveness} gives a useful lemma to bound these errors.

\begin{lemma}[\citep{ostrovsky2013effectiveness}]\label{lem.ri.bound} Let $\Theta$ be $\varepsilon$-separated. Then, for every $i \in [k]$, we have 
$r_i^2(\Theta) \leq \frac{\varepsilon^2}{1-\varepsilon^2} \min_{j\neq i} \|c_i^\Theta - c_j^\Theta\|^2.$
\end{lemma}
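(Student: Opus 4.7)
The plan is to use the $\varepsilon$-separation inequality in combination with a clean comparison between the optimal $k$-clustering and a candidate $(k-1)$-clustering obtained by merging two clusters. The key identity I will rely on is the parallel axis theorem: for any partition blocks $P_i^\Theta, P_j^\Theta$ with centroids $c_i^\Theta, c_j^\Theta$ and sizes $n_i,n_j$, the cost of replacing them by their union (with the merged centroid $\tfrac{n_i c_i^\Theta+n_j c_j^\Theta}{n_i+n_j}$) is
\[
\sum_{l\in P_i^\Theta} \|\theta_l - c_i^\Theta\|^2 + \sum_{l\in P_j^\Theta} \|\theta_l - c_j^\Theta\|^2 + \frac{n_i n_j}{n_i+n_j}\|c_i^\Theta - c_j^\Theta\|^2.
\]

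First, I would fix an arbitrary $i\in[k]$ and let $j\in[k]\setminus\{i\}$ be the index attaining $\min_{j\neq i}\|c_i^\Theta - c_j^\Theta\|^2$. Merging clusters $i$ and $j$ in the optimal $k$-partition $P^\Theta$ produces a feasible $(k-1)$-clustering whose cost, by the identity above, is exactly $\Delta_k^2(\Theta) + \tfrac{n_i n_j}{n_i+n_j}\|c_i^\Theta - c_j^\Theta\|^2$. Hence
\[
\Delta_{k-1}^2(\Theta) \leq \Delta_k^2(\Theta) + \frac{n_i n_j}{n_i+n_j}\|c_i^\Theta - c_j^\Theta\|^2 \leq \Delta_k^2(\Theta) + \min(n_i,n_j)\,\|c_i^\Theta - c_j^\Theta\|^2.
\]

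Next I would invoke $\varepsilon$-separation to turn this into a bound on $r_i^2(\Theta)$. Rearranging the previous display using $\Delta_k^2(\Theta)\leq \varepsilon^2 \Delta_{k-1}^2(\Theta)$ yields
\[
(1-\varepsilon^2)\Delta_{k-1}^2(\Theta) \leq \min(n_i,n_j)\,\|c_i^\Theta - c_j^\Theta\|^2,
\]
so in particular
\[
\Delta_k^2(\Theta) \leq \varepsilon^2 \Delta_{k-1}^2(\Theta) \leq \frac{\varepsilon^2}{1-\varepsilon^2}\min(n_i,n_j)\,\|c_i^\Theta - c_j^\Theta\|^2.
\]
Since $n_i r_i^2(\Theta)$ is one of the nonnegative summands that make up $\Delta_k^2(\Theta)$, we have $n_i r_i^2(\Theta) \leq \Delta_k^2(\Theta)$, so dividing by $n_i$ and using $\min(n_i,n_j)/n_i\leq 1$ gives $r_i^2(\Theta) \leq \tfrac{\varepsilon^2}{1-\varepsilon^2}\|c_i^\Theta - c_j^\Theta\|^2$, which is the claim by choice of $j$.

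There is no real obstacle: the argument is essentially an algebraic manipulation once the merge identity is in hand. The one subtlety worth being explicit about is that the $(k-1)$-clustering produced by merging is only a candidate (its centroid and Voronoi refinement may differ from the optimal $(k-1)$-means), which is exactly why the inequality goes in the right direction ($\Delta_{k-1}^2(\Theta)$ is a minimum over all $(k-1)$-partitions, hence bounded above by any specific one). I would state this one-liner explicitly to justify the first chained inequality.
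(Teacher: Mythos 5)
Your argument is correct. Note that the paper itself gives no proof of this lemma: it is imported verbatim from \citet{ostrovsky2013effectiveness}, so there is no in-paper derivation to compare against. Your proof is essentially the standard ORSS argument, with one small variation: you build the candidate $(k-1)$-clustering by merging $P_i^\Theta$ and $P_j^\Theta$ and charging the merge via the exact parallel-axis identity, which gives the increment $\tfrac{n_i n_j}{n_i+n_j}\|c_i^\Theta-c_j^\Theta\|^2$, whereas the original argument simply reassigns all points of $P_i^\Theta$ to the fixed center $c_j^\Theta$, giving the (slightly looser) increment $n_i\|c_i^\Theta-c_j^\Theta\|^2$. Both routes yield $\Delta_{k-1}^2(\Theta)\leq \Delta_k^2(\Theta)+C\|c_i^\Theta-c_j^\Theta\|^2$ for an appropriate $C\leq n_i$, and from there the chain ($\varepsilon$-separation, then $n_i r_i^2(\Theta)\leq \Delta_k^2(\Theta)$, then dividing by $n_i$) is identical; your tighter constant $\tfrac{n_i n_j}{n_i+n_j}$ is discarded in the final step anyway, so the two arguments deliver the same bound. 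Your closing remark correctly identifies the only point needing care --- that the merged partition is merely feasible, so it upper-bounds the optimal $(k-1)$-means cost as the paper defines $\Delta_{k-1}^2(\Theta)=\min_{P}\Delta_{k-1}^2(\Theta;P)$ --- and your intermediate inequality $(1-\varepsilon^2)\Delta_{k-1}^2(\Theta)\leq \min(n_i,n_j)\|c_i^\Theta-c_j^\Theta\|^2$ uses $\varepsilon\in(0,1)$, which the model assumes, so the division by $1-\varepsilon^2$ is legitimate.
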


Using this, we can bound the distance between the centers $c'$ and the bilipschitz-map of centers in $\Theta$, $g(c^\Theta)$.

\begin{restatable}{lemma}{pprime}\label{lem.p.prime.1}
For all \(i\in [k]\), \(\|g(c_i^\Theta) - c'_i\| \leq L \cdot r_i(\Theta)\). 
\end{restatable}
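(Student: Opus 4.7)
The plan is to directly compute $g(c_i^\Theta) - c_i'$, bound it by the triangle inequality, and then use two standard tools in sequence: the $L$-Lipschitz upper bound on $g$, and Jensen's inequality (equivalently, the power-mean / Cauchy–Schwarz inequality) to convert an average of norms into a root-mean-square of norms, which is exactly $r_i(\Theta)$.

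More concretely, write $n_i = |P_i^\Theta|$ and note that the constant vector $g(c_i^\Theta)$ equals its own average over $P_i^\Theta$, so
\begin{equation*}
g(c_i^\Theta) - c_i' \;=\; \frac{1}{n_i}\sum_{l \in P_i^\Theta} \bigl(g(c_i^\Theta) - g(\theta_l)\bigr).
\end{equation*}
Applying the triangle inequality on the right-hand side and then the upper bilipschitz bound $\|g(x) - g(y)\| \leq L\|x - y\|$ termwise yields
\begin{equation*}
\|g(c_i^\Theta) - c_i'\| \;\leq\; \frac{L}{n_i}\sum_{l \in P_i^\Theta} \|c_i^\Theta - \theta_l\|.
\end{equation*}

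The final step is to upgrade this $\ell_1$-style average into an $\ell_2$-style root-mean-square. By Jensen's inequality applied to the concave function $\sqrt{\cdot}$ (or equivalently, Cauchy–Schwarz applied to the all-ones vector and the vector of distances),
\begin{equation*}
\frac{1}{n_i}\sum_{l \in P_i^\Theta} \|c_i^\Theta - \theta_l\| \;\leq\; \sqrt{\frac{1}{n_i}\sum_{l \in P_i^\Theta} \|c_i^\Theta - \theta_l\|^2} \;=\; r_i(\Theta),
\end{equation*}
by definition of $r_i(\Theta)$. Combining the two displays gives the claimed bound $\|g(c_i^\Theta) - c_i'\| \leq L \cdot r_i(\Theta)$.

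I do not anticipate a real obstacle here: the only subtle point is being careful that the $L$ factor comes from the Lipschitz direction of the bilipschitz assumption (not its inverse), and that converting an average of distances into an RMS of distances genuinely requires Jensen rather than just the triangle inequality. Neither requires any of the deeper structure ($\varepsilon$-separation, low-rank signal, etc.) from earlier in the section — this lemma is a purely geometric fact about how an $L$-Lipschitz map displaces the centroid of a finite set.
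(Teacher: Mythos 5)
Your proposal is correct and follows essentially the same argument as the paper's proof: expand $c_i'$ as the average of $g(\theta_l)$ over $P_i^\Theta$, apply the triangle inequality, use the Lipschitz upper bound of the bilipschitz map to pull out the factor $L$, and finish with Jensen's (power-mean) inequality to convert the average distance into the root-mean-square $r_i(\Theta)$. No differences worth noting.
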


Now, we define $core(P^\Theta_i)$, a core set that contains at least a $1-\varepsilon$ fraction of the points in partition $P^\Theta_i$.

\begin{restatable}{lemma}{coreset}\label{lem.core.set}
    Let $core(P^\Theta_i) := \{l\in P_i^\Theta: \|\theta_l - c_i^\Theta\| \leq \sqrt{\frac{\epsilon}{1-\epsilon^2} }\min_{j\neq i} \|c_i^\Theta - c_j^\Theta\|\}.$ Then, for all $i\in[k]$, $|core(P^\Theta_i)| \geq (1-\varepsilon)|P^\Theta_i|$.
\end{restatable}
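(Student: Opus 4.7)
The plan is to derive this directly from Lemma \ref{lem.ri.bound} by a Markov-style argument applied to the squared distances $\|\theta_l - c_i^\Theta\|^2$, since $r_i^2(\Theta)$ is exactly the average of these squared distances over $P_i^\Theta$.

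First, I would recall that by definition
\[
r_i^2(\Theta) = \frac{1}{|P_i^\Theta|}\sum_{l \in P_i^\Theta} \|\theta_l - c_i^\Theta\|^2,
\]
and Lemma \ref{lem.ri.bound} gives the bound $r_i^2(\Theta) \leq \frac{\varepsilon^2}{1-\varepsilon^2}\min_{j\neq i}\|c_i^\Theta - c_j^\Theta\|^2$. Letting $D_i^2 := \min_{j\neq i}\|c_i^\Theta - c_j^\Theta\|^2$, the threshold appearing in the definition of $core(P_i^\Theta)$, when squared, is $\tau := \frac{\varepsilon}{1-\varepsilon^2}D_i^2$. The goal is to bound how many indices $l \in P_i^\Theta$ can have $\|\theta_l - c_i^\Theta\|^2 > \tau$, i.e., how many lie outside the core.

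Next, I would apply Markov's inequality to the non-negative values $\{\|\theta_l - c_i^\Theta\|^2\}_{l \in P_i^\Theta}$: the number of indices exceeding the threshold $\tau$ satisfies
\[
\bigl|\{l \in P_i^\Theta : \|\theta_l - c_i^\Theta\|^2 > \tau\}\bigr| \;\leq\; \frac{|P_i^\Theta| \, r_i^2(\Theta)}{\tau} \;\leq\; \frac{|P_i^\Theta| \cdot \frac{\varepsilon^2}{1-\varepsilon^2} D_i^2}{\frac{\varepsilon}{1-\varepsilon^2} D_i^2} \;=\; \varepsilon\,|P_i^\Theta|.
\]
Taking complements in $P_i^\Theta$ then yields $|core(P_i^\Theta)| \geq (1-\varepsilon)|P_i^\Theta|$, as desired.

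There is no real obstacle here; the only thing to be careful about is matching the two levels of $\varepsilon$ correctly — the core radius is of order $\sqrt{\varepsilon}\,D_i$ while the second-moment bound is of order $\varepsilon\,D_i$, and it is precisely this mismatch of one power that lets Markov convert a variance-type bound into an $(1-\varepsilon)$-mass statement. One should also note that the bound is vacuous (or requires $D_i > 0$) if the centers are degenerate, but since $\varepsilon$-separation with $k \geq 2$ forces distinct centers, this case does not arise.
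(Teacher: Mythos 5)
Your proposal is correct and follows essentially the same argument as the paper: the paper also defines the squared distances $d_l = \|\theta_l - c_i^\Theta\|^2$, notes their average over $P_i^\Theta$ equals $r_i^2(\Theta)$, invokes Lemma \ref{lem.ri.bound}, and applies Markov's inequality at the threshold $\frac{\varepsilon}{1-\varepsilon^2}\min_{j\neq i}\|c_i^\Theta - c_j^\Theta\|^2$ (phrased via a uniform random index rather than your counting formulation, which is equivalent). Your remark about the one-power mismatch between the $\sqrt{\varepsilon}$-radius and the $\varepsilon^2$-type second-moment bound is exactly the mechanism at work.
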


Lastly, we show a useful characteristic of the set $core(P^\Theta_i)$.

\begin{restatable}{lemma}{pprimeagain}\label{lem.p.prime.2}
Choose two distinct partitions $P^\Theta_i$ and $P^\Theta_j$. Then, for all \(l\in core(P^\Theta_i) \),
    \[\|\theta_l - c_j^\Theta\| - \|\theta_l - c_i^\Theta\| 
    \geq \left( 1 - 2\sqrt{\frac{\varepsilon}{1-\varepsilon^2}} \right) \|c_j^\Theta - c_i^\Theta\|.\]
\end{restatable}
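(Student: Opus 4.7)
The plan is to obtain the bound by a straightforward triangle-inequality argument, using the definition of $core(P_i^\Theta)$ as the only nontrivial input. First I would apply the reverse triangle inequality to the center-to-center distance:
\begin{equation*}
\|\theta_l - c_j^\Theta\| \geq \|c_i^\Theta - c_j^\Theta\| - \|\theta_l - c_i^\Theta\|.
\end{equation*}
Rearranging gives
\begin{equation*}
\|\theta_l - c_j^\Theta\| - \|\theta_l - c_i^\Theta\| \geq \|c_i^\Theta - c_j^\Theta\| - 2\|\theta_l - c_i^\Theta\|,
\end{equation*}
so the task reduces to upper-bounding $\|\theta_l - c_i^\Theta\|$ in terms of $\|c_i^\Theta - c_j^\Theta\|$.

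Next I would invoke the membership $l \in core(P_i^\Theta)$. By the definition introduced in Lemma \ref{lem.core.set}, this gives
\begin{equation*}
\|\theta_l - c_i^\Theta\| \leq \sqrt{\frac{\varepsilon}{1-\varepsilon^2}}\, \min_{j'\neq i}\|c_i^\Theta - c_{j'}^\Theta\| \leq \sqrt{\frac{\varepsilon}{1-\varepsilon^2}}\, \|c_i^\Theta - c_j^\Theta\|,
\end{equation*}
where the second inequality just uses that $j\neq i$ is one of the indices in the minimum. Substituting this into the rearranged triangle inequality above and factoring out $\|c_i^\Theta - c_j^\Theta\|$ yields the claimed bound.

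Because both steps are essentially one-liners, there isn't really a hard obstacle here; the only subtlety is making sure the minimum in the $core$ definition is correctly specialized to the particular $j$ appearing in the statement, which is immediate since $j\neq i$. The lemma is really just packaging the geometric fact that a point close to its own center (relative to the inter-center gap) must be noticeably farther from any other center, which will later feed into showing that $P^\Theta$-induced centers on $M$ produce labels matching $P^\Theta$ on the core of each cluster.
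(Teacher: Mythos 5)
Your proposal is correct and follows essentially the same route as the paper's proof: a triangle inequality between $\theta_l$, $c_i^\Theta$, and $c_j^\Theta$, followed by the $core(P_i^\Theta)$ bound on $\|\theta_l - c_i^\Theta\|$ and the observation that the minimum over other centers is at most $\|c_i^\Theta - c_j^\Theta\|$. No gaps.
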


Combining Lemmas \ref{lem.p.prime.1}, \ref{lem.core.set}, and \ref{lem.p.prime.2}, we show that $P'$ and $P^\Theta$ are similar.

\begin{restatable}{lemma}{pprimesmall}\label{lem.p.prime.small}
For small $\varepsilon \leq 0.1$, if $L^2<\frac{\sqrt{1-\varepsilon^2}+\sqrt{\varepsilon}}{2\varepsilon+3\sqrt{\varepsilon}}$, 
then $\sum_{i=1}^k|P_i' \ominus P^\Theta_i| \leq 2\varepsilon n$, where $n=|\Theta$ is the number of donor rows.
\end{restatable}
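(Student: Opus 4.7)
The plan is to reduce the symmetric-difference bound to a claim about points in the core sets. Observe first that a point $l$ contributes to $\sum_i |P_i' \ominus P^\Theta_i|$ precisely when $P^\Theta$ and $P'$ disagree on its assignment, and that each such \emph{misclassified} point contributes exactly $2$ to the sum (once as leaving its true cluster, once as entering a wrong one). Hence it suffices to show that the number of misclassified points is at most $\varepsilon n$. By Lemma \ref{lem.core.set}, $\sum_{i=1}^k |P^\Theta_i \setminus \mathrm{core}(P^\Theta_i)| \leq \sum_i \varepsilon |P^\Theta_i| = \varepsilon n$, so it is enough to show that every point in a core is correctly assigned by $P'$.

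Fix $i\in [k]$, $l\in \mathrm{core}(P^\Theta_i)$, and $j\neq i$. Let $d = \|c_i^\Theta - c_j^\Theta\|$ and $\alpha = \sqrt{\varepsilon/(1-\varepsilon^2)}$; the goal is to show $\|m_l - c'_i\| < \|m_l - c'_j\|$. A triangle inequality, the bilipschitz upper bound, and Lemma \ref{lem.p.prime.1} give $\|m_l - c'_i\| \leq L\|\theta_l - c_i^\Theta\| + L\, r_i(\Theta)$; symmetrically, the reverse triangle and the bilipschitz lower bound give $\|m_l - c'_j\| \geq (1/L)\|\theta_l - c_j^\Theta\| - L\, r_j(\Theta)$. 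It therefore suffices to verify $\|\theta_l - c_j^\Theta\| > L^2\|\theta_l - c_i^\Theta\| + L^2\bigl(r_i(\Theta) + r_j(\Theta)\bigr)$. At this point I would substitute three ingredients: (i) Lemma \ref{lem.p.prime.2} for the sharpened lower bound $\|\theta_l - c_j^\Theta\| \geq \|\theta_l - c_i^\Theta\| + (1-2\alpha)d$; (ii) the definition of the core for $\|\theta_l - c_i^\Theta\| \leq \alpha d$; and (iii) Lemma \ref{lem.ri.bound} for $r_i(\Theta), r_j(\Theta) \leq \alpha\sqrt{\varepsilon}\, d$. After these substitutions $d$ cancels, leaving a closed-form inequality in $L^2$, $\alpha$, and $\sqrt{\varepsilon}$; rearranging then yields the stated threshold on $L^2$.

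The main obstacle is the algebraic bookkeeping needed to match the stated constant $(\sqrt{1-\varepsilon^2}+\sqrt{\varepsilon})/(2\varepsilon + 3\sqrt{\varepsilon})$: five lemmas together with two triangle inequalities all contribute multiplicative factors of $\alpha$ and $\sqrt{\varepsilon}$, and it takes care to fold them into precisely the stated closed form. The smallness hypothesis $\varepsilon \leq 0.1$ presumably enters both to ensure $1-2\alpha > 0$ (so that Lemma \ref{lem.p.prime.2}'s lower bound is non-trivial) and to guarantee the resulting threshold on $L^2$ is strictly greater than $1$, making the assumption non-vacuous given $L\geq 1$. A minor subtlety is that Lemma \ref{lem.ri.bound} controls $r_i(\Theta)$ via $\min_{j'\neq i}\|c_i^\Theta - c_{j'}^\Theta\|$ rather than the specific $d$ at hand, but the min is bounded by $d$ for any fixed $j$, and likewise for $r_j(\Theta)$, so no generality is lost.
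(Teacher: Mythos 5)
Your proposal follows the paper's proof essentially step for step: reduce to showing every point of $core(P^\Theta_i)$ is assigned to cluster $i$ under $P'$ via the two triangle inequalities, the bilipschitz bounds, and Lemmas \ref{lem.p.prime.1}, \ref{lem.core.set}, \ref{lem.p.prime.2}, \ref{lem.ri.bound}, then count the at most $\varepsilon n$ non-core points twice in the symmetric difference (the paper merely phrases the core step as a contradiction, which is cosmetic). One small caveat: with the substitutions exactly as you list them (core bound absorbing the $(L^2-1)\|\theta_l-c_i^\Theta\|$ term), the algebra produces the threshold $L^2 < \frac{\sqrt{1-\varepsilon^2}-\sqrt{\varepsilon}}{\sqrt{\varepsilon}+2\varepsilon}$ rather than literally the stated $\frac{\sqrt{1-\varepsilon^2}+\sqrt{\varepsilon}}{2\varepsilon+3\sqrt{\varepsilon}}$ (the paper instead bounds $\bigl(1-\tfrac{1}{L^2}\bigr)\|\theta_l-c_j^\Theta\|$ using $\|\theta_l-c_j^\Theta\|\le\bigl(1+\sqrt{\tfrac{\varepsilon}{1-\varepsilon^2}}\bigr)\|c_i^\Theta-c_j^\Theta\|$); since for $\varepsilon\le 0.1$ your threshold dominates the stated one (equivalent to $\sqrt{1-\varepsilon^2}\ge 2\sqrt{\varepsilon}+2\varepsilon$), the hypothesis still implies your sufficient condition and the lemma follows, but you should add that one-line comparison rather than claim the constants coincide.
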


Finally, we bound the difference between $k$-means optimal clusters in $\Theta$ and in $M$.
Lemma \ref{lem.cluster_1} uses Lemma \ref{lem.p.prime.small} to show $P^\Theta \approx P'$ and instantiates Theorem \ref{thm.ostrovsky} to show $P' \approx P^M$.
Again, if $P^\Theta$ on $M$ forms a Voronoi partition, then $P^\Theta = P'$ and the error from that step of the analysis would be zero.
Note that the bound on $L$ depends on $\varepsilon$; we need approximately $\varepsilon<0.1$ to ensure $L>1$, and the upper bound on $L$ increases as $\varepsilon$ diminishes.

\begin{restatable}{lemma}{clustconsist}\label{lem.cluster_1}
For small $\varepsilon \leq 0.1$, 
if $L^2 \leq \min(\frac{1}{\sqrt{801}\varepsilon}, \frac{\sqrt{1-\varepsilon^2}+\sqrt{\varepsilon}}{2\varepsilon+3\sqrt{\varepsilon}} )$,
then $\sum_{i=1}^k|P^\Theta_i \ominus P^M_{\sigma(i)}| \leq 8L^2\varepsilon n$ for some bijection $\sigma(i)$ and where $n=|\Theta|$.
\end{restatable}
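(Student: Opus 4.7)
\textbf{Proof Proposal for Lemma \ref{lem.cluster_1}.}

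The plan is to use the intermediate Voronoi partition $P'$ introduced before the lemma and apply the triangle inequality for symmetric differences. Concretely, for any bijection $\sigma$,
\[
\sum_{i=1}^k |P^\Theta_i \ominus P^M_{\sigma(i)}|
\;\leq\; \sum_{i=1}^k |P^\Theta_i \ominus P'_i| \;+\; \sum_{i=1}^k |P'_i \ominus P^M_{\sigma(i)}|,
\]
so it suffices to bound each term and pick $\sigma$ to match the alignment of $P'$ with $P^M$. The two branches of the hypothesis on $L^2$ correspond exactly to the two terms: the bound $L^2 \le \frac{\sqrt{1-\varepsilon^2}+\sqrt{\varepsilon}}{2\varepsilon+3\sqrt{\varepsilon}}$ is exactly what is needed to invoke Lemma \ref{lem.p.prime.small}, and the bound $L^2 \le \frac{1}{\sqrt{801}\varepsilon}$ will be used to verify the hypothesis of Theorem \ref{thm.ostrovsky}.

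First, I would apply Lemma \ref{lem.p.prime.small} directly. Under the assumption $\varepsilon \le 0.1$ and the first branch of the bound on $L^2$, this yields $\sum_{i=1}^k |P^\Theta_i \ominus P'_i| \le 2\varepsilon n$. This handles the first triangle-inequality term.

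Next, for $\sum_{i=1}^k |P'_i \ominus P^M_{\sigma(i)}|$, I would instantiate Theorem \ref{thm.ostrovsky} on $M$, using $P'$ as the candidate partition and $P^M$ as the optimal $k$-means partition. The two preconditions are supplied by Lemma \ref{lem.kmeanscostbound}: $M$ is $L^2\varepsilon$-separated, and $\Delta^2_k(M; P^\Theta) \le L^4\varepsilon^2 \Delta^2_{k-1}(M)$. Because $P'$ is the Voronoi partition induced by the centers $c'$, which are the means of each $P^\Theta_i$ evaluated on $M$, we have $\Delta^2_k(M;P') \le \Delta^2_k(M;P^\Theta)$, so the same cost bound passes through to $P'$. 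The constant $\sqrt{801}$ in the hypothesis is precisely what is needed so that the effective separation parameter $L^2\varepsilon$ falls below the threshold required by Ostrovsky's theorem; this is what the second branch of the hypothesis enforces. The output of Ostrovsky's theorem is a bound of the form $O(L^2 \varepsilon n)$ on the second symmetric-difference sum, and tracking the constants should give $(8L^2 - 2)\varepsilon n$, so that adding the $2\varepsilon n$ from the first step yields the stated $8L^2\varepsilon n$.

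The main obstacle is bookkeeping the constants so that they exactly reproduce the coefficient $8$ in the conclusion and so that the separation threshold required by Theorem \ref{thm.ostrovsky} is captured by the specific quantity $\frac{1}{\sqrt{801}\varepsilon}$. A secondary subtlety is that the bijection $\sigma$ comes from the Ostrovsky step (it aligns $P'$ with $P^M$), while the first symmetric-difference bound from Lemma \ref{lem.p.prime.small} is already stated with matching indices between $P^\Theta$ and $P'$; composing these correctly is straightforward but must be done carefully. No new analytical tool is needed beyond what is already assembled in the preceding lemmas and in Ostrovsky's theorem.
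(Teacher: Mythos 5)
Your proposal follows essentially the same route as the paper's proof: the triangle inequality through the intermediate Voronoi partition $P'$, Lemma \ref{lem.p.prime.small} for $\sum_i |P^\Theta_i \ominus P'_i| \leq 2\varepsilon n$, and Theorem \ref{thm.ostrovsky} (with the cost bound passed from $P^\Theta$ to $P'$ via Lemma \ref{lem.kmeanscostbound} and $\alpha = L^4\varepsilon^2$, which the $\frac{1}{\sqrt{801}\varepsilon}$ branch makes admissible). The constant bookkeeping you flag resolves exactly as you expect: Ostrovsky gives $161L^4\varepsilon^2 n \leq 6L^2\varepsilon n$ using $L^2\varepsilon \leq 1/\sqrt{801}$, and since $L\geq 1$ the total $2\varepsilon n + 6L^2\varepsilon n \leq 8L^2\varepsilon n$.
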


\subsubsection{Approximating $M$ with $\tilde{M}$}\label{sec.m.mtilde}
Next, we show the difference between the points represented by $M$ and $\tilde{M}$, where $\tilde{M} = HSVT(X; r) = \sum_{i=1}^r \sigma_i u_i v_i^\top$ where $\sigma_i$, $u_i$, and $v_i$ are respectively the $i$-th singular value, left singular vector, and right singular vector of $X=M+E$.
To quantify the difference between $M$ and $\tilde{M}$, we define $\eta := \max_{i \in [n]} \| m_i - \tilde{m}_i\|$, where $m_i$ and $\tilde{m}_i$ are respectively the $i$-th rows of $M$ and $\tilde{M}$.

Define $G$ as a set of \emph{good} events where the noise is small enough that $\eta \leq \frac{2s(\sqrt{n} + \sqrt{T})}{\delta}$ for some $\delta>0$, which happens with high probability (at least $1-\delta$).

\begin{restatable}{lemma}{etabound}\label{lem.eta.bound}
Let $\eta = \max_{i \in [n]} \| m_i - \tilde{m}_i\|$. Then with probability $1-\delta$, 
$\eta \leq \frac{2s(\sqrt{n} + \sqrt{T})}{\delta}$.
\end{restatable}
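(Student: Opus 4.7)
The plan is to reduce the row-wise bound on $\eta$ to a spectral-norm bound on the noise matrix $E$, and then to invoke a standard random-matrix concentration inequality combined with Markov's inequality to handle the failure probability $\delta$.

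First, I would observe that for any matrix $B \in \mathbb{R}^{n \times T}$, the $i$-th row satisfies $\|B_i\| = \|e_i^\top B\| \leq \|B\|$, where the norm on the right is the spectral norm. Applying this to $B = \tilde M - M$ gives
\[
\eta = \max_{i \in [n]} \|m_i - \tilde m_i\| \leq \|\tilde M - M\|.
\]
Thus it suffices to control the spectral-norm error of HSVT as an estimator for $M$.

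Next, since $\tilde M = \HSVT(X; r)$ is the best rank-$r$ approximation to $X$ in spectral norm (Eckart--Young), and since $M$ is itself a rank-$r$ matrix, we have $\|\tilde M - X\| \leq \|M - X\| = \|E\|$. Combining with the triangle inequality,
\[
\|\tilde M - M\| \leq \|\tilde M - X\| + \|X - M\| \leq 2\|E\|.
\]
So it suffices to bound $\|E\|$ with probability at least $1 - \delta$.

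Finally, since $E$ is an $n \times T$ matrix with independent (or at least mean-zero, variance-$s^2$) entries, a standard random-matrix bound yields $\E\|E\| \leq s(\sqrt{n} + \sqrt{T})$; this is a classical fact for matrices with independent mean-zero, variance-$s^2$ entries (Latała-type inequalities or Seginer's bound in the i.i.d.\ case). A one-line application of Markov's inequality then gives that with probability at least $1 - \delta$,
\[
\|E\| \leq \frac{\E\|E\|}{\delta} \leq \frac{s(\sqrt n + \sqrt T)}{\delta}.
\]
Chaining this with the two preceding reductions yields $\eta \leq 2s(\sqrt n + \sqrt T)/\delta$, as claimed.

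The only non-routine piece is the bound $\E\|E\| \leq s(\sqrt n + \sqrt T)$; I would cite the appropriate random-matrix result (since the paper's noise model only requires mean-zero and finite variance $s^2$, not sub-Gaussianity, one must be slightly careful about which random-matrix inequality is invoked, but the statement itself is standard). Everything else — the row-to-spectral reduction, the Eckart--Young step, and Markov's inequality — is mechanical.
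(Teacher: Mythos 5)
Your proposal is correct and follows essentially the same route as the paper: the row-to-spectral-norm reduction, the Eckart--Young plus triangle-inequality step giving $\eta \leq 2\|E\|$, the bound $\E\|E\| \leq s(\sqrt{n}+\sqrt{T})$ (the paper invokes Gordon's theorem for Gaussian noise), and a final application of Markov's inequality. Your caveat about which random-matrix inequality applies under the weaker mean-zero, finite-variance noise model is a fair observation, but the paper sidesteps it by citing Gordon's theorem in the Gaussian setting.
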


With Lemma \ref{lem.eta.bound}, we can focus only on the events in $G$ for the remainder of the proof. 
Lemma \ref{lem.mtilde.sep} then shows that conditioned on $G$, a separation structure is preserved in $\tilde{M}$ with a scaled factor.

\begin{lemma}\label{lem.mtilde.sep}
Choose $\delta\in (0,1)$.
If $s < \frac{\delta L^2\varepsilon \Delta_{k-1}(M)}{2\sqrt{n}(\sqrt{n}+\sqrt{T})}$ and $L^2\varepsilon < \frac{1}{2}$, then conditioned on the good event $G$, $\tilde{M}$ is $4L^2\varepsilon$-separated.
\end{lemma}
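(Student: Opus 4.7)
The plan is to transfer the $L^2\varepsilon$-separation of $M$ (given by Lemma \ref{lem.kmeanscostbound}) to $\tilde M$ while paying a constant factor of $4$ to absorb the perturbation induced by replacing $M$ with $\tilde M$. The one perturbation bound I need, used both for the numerator and denominator in the separation inequality, is the following: for any $j$, on the good event $G$,
\begin{equation*}
|\Delta_j(\tilde M) - \Delta_j(M)| \leq \sqrt{n}\,\eta.
\end{equation*}
Once I have this Lipschitz-type bound, the rest is algebraic chaining together with the hypotheses $L^2\varepsilon < 1/2$ and $s < \tfrac{\delta L^2\varepsilon \Delta_{k-1}(M)}{2\sqrt{n}(\sqrt{n}+\sqrt{T})}$.

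First I would establish the perturbation bound. Plug the optimal $j$-means partition $P^M$ of $M$ into $\tilde M$ as a candidate partition, and upper-bound its cost using the $M$-centers $c^M$ (rather than the empirical means of $\tilde M$; since the empirical mean minimizes squared distance for a fixed partition, this can only increase the cost). By the triangle inequality $\|\tilde m_i - c^M_{l(i)}\| \leq \|m_i - c^M_{l(i)}\| + \eta$, and summing:
\begin{equation*}
\Delta_j^2(\tilde M) \leq \sum_i \bigl(\|m_i - c^M_{l(i)}\| + \eta\bigr)^2
\leq \Delta_j^2(M) + 2\eta\sqrt{n}\,\Delta_j(M) + n\eta^2,
\end{equation*}
where I used Cauchy--Schwarz $\sum_i \|m_i - c^M_{l(i)}\| \leq \sqrt{n}\,\Delta_j(M)$. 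Taking square roots gives $\Delta_j(\tilde M) \leq \Delta_j(M) + \sqrt{n}\,\eta$. The reverse direction follows by symmetry (swap the roles of $M$ and $\tilde M$), yielding $\Delta_j(M) \leq \Delta_j(\tilde M) + \sqrt{n}\,\eta$.

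Next I would chain these with Lemma \ref{lem.kmeanscostbound}:
\begin{equation*}
\Delta_k(\tilde M) \leq \Delta_k(M) + \sqrt{n}\,\eta
\leq L^2\varepsilon\,\Delta_{k-1}(M) + \sqrt{n}\,\eta
\leq L^2\varepsilon\,\Delta_{k-1}(\tilde M) + (1+L^2\varepsilon)\sqrt{n}\,\eta.
\end{equation*}
To conclude $4L^2\varepsilon$-separation, it suffices to absorb the excess term into $3L^2\varepsilon\,\Delta_{k-1}(\tilde M)$. Using the lower bound $\Delta_{k-1}(\tilde M) \geq \Delta_{k-1}(M) - \sqrt{n}\,\eta$, this reduces to verifying $(1+4L^2\varepsilon)\sqrt{n}\,\eta \leq 3L^2\varepsilon\,\Delta_{k-1}(M)$. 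With $L^2\varepsilon<1/2$ the prefactor $1+4L^2\varepsilon < 3$, so it further suffices that $\sqrt{n}\,\eta \leq L^2\varepsilon\,\Delta_{k-1}(M)$. Finally, conditioning on $G$ and applying Lemma \ref{lem.eta.bound} together with the hypothesis on $s$ gives $\eta \leq \frac{2s(\sqrt{n}+\sqrt{T})}{\delta} \leq \frac{L^2\varepsilon\,\Delta_{k-1}(M)}{\sqrt{n}}$, which closes the argument.

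The main obstacle is tightness: the bound on $s$ in the lemma exactly matches what this calculation needs, leaving no slack. That forces two choices. First, the perturbation bound must be of the form $\Delta_j(\tilde M)\leq \Delta_j(M)+\sqrt{n}\,\eta$, not a weaker $\sqrt{2}(\Delta_j(M)+\sqrt{n}\,\eta)$ style bound obtained from $(a+b)^2\leq 2a^2+2b^2$; this is where evaluating at $c^M$ (rather than the empirical $\tilde M$-means) and then invoking Cauchy--Schwarz is essential. Second, the constant $4$ on the right-hand side is what allows me to peel off a full $3L^2\varepsilon\,\Delta_{k-1}(\tilde M)$ to absorb the $(1+L^2\varepsilon)\sqrt{n}\,\eta$ slack after the two-sided perturbation bound has been applied to both $\Delta_k$ and $\Delta_{k-1}$. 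Everything else is a mechanical chaining of inequalities.
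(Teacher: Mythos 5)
Your proposal is correct, but it takes a different route from the paper. The paper's proof is two lines: after using Lemma \ref{lem.eta.bound} and the hypothesis on $s$ to get the per-point perturbation bound $\max_i\|\tilde m_i - m_i\| \leq \frac{L^2\varepsilon\,\Delta_{k-1}(M)}{\sqrt{n}}$ on $G$, it invokes Theorem \ref{thm.ostrovsky.ii} (Theorem 5.1(ii) of Ostrovsky et al.) as a black box with separation parameter $L^2\varepsilon$ (from Lemma \ref{lem.kmeanscostbound}), obtaining that $\tilde M$ is $\sqrt{\tfrac{8L^4\varepsilon^2}{1-2L^4\varepsilon^2}}$-separated, and then uses $L^2\varepsilon < \tfrac12$ to bound this by $4L^2\varepsilon$. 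You instead re-derive the needed stability of the $k$-means cost from scratch: the two-sided Lipschitz bound $|\Delta_j(\tilde M) - \Delta_j(M)| \leq \sqrt{n}\,\eta$ (obtained by plugging the optimal partition of one point set, with its centers, into the other, plus triangle inequality and Cauchy--Schwarz), followed by direct chaining and absorption of the $(1+L^2\varepsilon)\sqrt{n}\,\eta$ slack into $3L^2\varepsilon\,\Delta_{k-1}(\tilde M)$ using $1+4L^2\varepsilon < 3$. Your checks are sound: the perturbation bound, the chaining, and the verification that the hypothesis on $s$ together with the definition of $G$ yields exactly $\sqrt{n}\,\eta \leq L^2\varepsilon\,\Delta_{k-1}(M)$ all go through. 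What the paper's route buys is brevity by outsourcing the perturbation analysis to a cited theorem; what yours buys is a self-contained, elementary argument that makes transparent where the factor $4$ and the exact threshold on $s$ come from, at the cost of essentially re-proving a special case of the cited result.
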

\begin{proof}
By Lemma \ref{lem.eta.bound} and the assumption on $s$, we have
\[
\max_i \|\tilde m_i - m_i\| 
\leq \frac{2s(\sqrt{n} +\sqrt{T})}{\delta}
\leq \frac{L^2\varepsilon\Delta_{k-1}(M)}{\sqrt{n}}
\]
with probability $1-\delta$.
Then, conditioned on this event occurring, by Theorem \ref{thm.ostrovsky.ii}, $\tilde{M}$ is 
$\sqrt{\frac{8L^4\varepsilon^2}{1-2L^4\varepsilon^2}}$-separated.
Under the mild assumption of $L^2\varepsilon < \frac{1}{2}$, we can bound
$\sqrt{\frac{8L^4\varepsilon^2}{1-2L^4\varepsilon^2}}
\leq 4L^2\varepsilon$.
We thus conclude that $\tilde{M}$ is $4L^2\varepsilon$-separated.
\end{proof}

Next, we define an intermediate step $\tilde P$ to connect $P^M$ and $P^{\tilde{M}}$.
Let $c_i^M = \frac{1}{|P_i^M|}\sum_{l\in P_i^M} m_l $ be the $k$-means optimal centers of the points represented by $M$.
Define $\tilde P$ as the partition generated by $\{c_i^M\}_{i\in [k]}$ on $\tilde M$.
It is possible both for the membership of points in $\tilde P$ to change in $P^M$, and for the re-calculated centers 
$c^{\tilde{M}}$ to additionally introduce a difference between $\tilde P$ and $P^{\tilde M}$.
The next two lemmas address these changes.

\begin{restatable}{lemma}{noisyintone}\label{lem.noisy.intermediate1}
For any $\delta>0$, if
$s < \frac{ \delta \left(-\sqrt{16T} + \sqrt{16T + \frac{ 6\varepsilon^2 L^4 \Delta^2_{k-1}(\tilde M)}{n} } \right)} {12(\sqrt{n}+\sqrt{T})}$
and \(L^2\varepsilon < \frac{1}{2}\), then in the event of $G$, \(|\tilde P\ominus P^{\tilde M}| \leq 2576 L^4\varepsilon^2 n\).
\end{restatable}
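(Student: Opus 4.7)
\textbf{Proof proposal for Lemma \ref{lem.noisy.intermediate1}.}
The natural strategy is to push the comparison through the Ostrovsky-style machinery already set up for $\tilde M$: Lemma \ref{lem.mtilde.sep} tells us that, on $G$, $\tilde M$ is $4L^2\varepsilon$-separated, so any partition of $\tilde M$ whose $k$-means cost is a small enough fraction of $\Delta^2_{k-1}(\tilde M)$ must agree with $P^{\tilde M}$ on all but $O(L^4\varepsilon^2)\,n$ points (this is the same Ostrovsky-consequence used previously, e.g.\ in Theorem \ref{thm.ostrovsky}). So the whole task reduces to exhibiting $\tilde P$ as such a low-cost partition.

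First I would bound $\Delta^2_k(\tilde M;\tilde P)$ via the signal quantities. Since $\tilde P$ is the Voronoi partition on $\tilde M$ induced by the centers $c^M$, its cost is at most that of any other assignment; in particular, assigning each $\tilde m_l$ to the same cluster as $l$ in $P^M$ gives
\[
\Delta^2_k(\tilde M;\tilde P)\;\le\;\sum_{i=1}^k\sum_{l\in P^M_i}\|\tilde m_l-c_i^M\|^2.
\]
Then by the triangle inequality combined with $\|\tilde m_l-m_l\|\le \eta$ and Cauchy--Schwarz over the $n$ summands,
\[
\Delta_k(\tilde M;\tilde P)\;\le\;\Delta_k(M)+\eta\sqrt{n}\;\le\;L^2\varepsilon\,\Delta_{k-1}(M)+\eta\sqrt{n},
\]
where the last step uses Lemma \ref{lem.kmeanscostbound}. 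Second, I would invert the perturbation in the denominator: the same triangle-inequality/Cauchy--Schwarz argument, applied to the optimal $(k-1)$-means partition of $\tilde M$ evaluated on $M$, gives $\Delta_{k-1}(M)\le \Delta_{k-1}(\tilde M)+\eta\sqrt{n}$. Chaining these two inequalities,
\[
\Delta_k(\tilde M;\tilde P)\;\le\;L^2\varepsilon\,\Delta_{k-1}(\tilde M)+(1+L^2\varepsilon)\,\eta\sqrt{n}.
\]

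Third, I would plug in the high-probability bound $\eta\le 2s(\sqrt n+\sqrt T)/\delta$ from Lemma \ref{lem.eta.bound} and choose the threshold on $s$ so that the additive $\eta\sqrt n$ term is at most (say) a constant multiple of $L^2\varepsilon\,\Delta_{k-1}(\tilde M)$; the quadratic form $\sqrt{16T+6\varepsilon^2 L^4 \Delta_{k-1}^2(\tilde M)/n}$ appearing in the hypothesis on $s$ is exactly what one gets by solving the resulting quadratic in $s$ after squaring. Under the mild assumption $L^2\varepsilon<\tfrac12$, this yields $\Delta^2_k(\tilde M;\tilde P)\le \alpha^2\,\Delta^2_{k-1}(\tilde M)$ for an $\alpha$ of order $L^2\varepsilon$ with explicit constants. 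Finally I would invoke Theorem \ref{thm.ostrovsky} on $\tilde M$ (which is $4L^2\varepsilon$-separated by Lemma \ref{lem.mtilde.sep}) with this approximate-cost partition $\tilde P$, mirroring the symmetric-difference bookkeeping of Lemma \ref{lem.p.prime.small}/\ref{lem.cluster_1}, to conclude $|\tilde P\ominus P^{\tilde M}|\le 2576\,L^4\varepsilon^2 n$.

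The main obstacle is a constant-tracking issue rather than a conceptual one: the final constant $2576$ must come out exactly after combining (i) the factor-of-$2$ losses from squaring $(a+b)^2\le 2a^2+2b^2$ in both the numerator and denominator perturbation bounds, (ii) the factor of $16$ coming from $(4L^2\varepsilon)^2$ in the separation of $\tilde M$, and (iii) the constant inside the Ostrovsky-style conclusion (Theorem \ref{thm.ostrovsky}) when translating ``low cost relative to $(k-1)$-means'' into a cap on symmetric difference. Picking the multiplicative slack in the hypothesis on $s$ so that the noise contribution is dominated by the $L^2\varepsilon\,\Delta_{k-1}(\tilde M)$ term, and then carefully expanding to match the $(k-1)$-means cost of $\tilde M$, should give the stated bound without any further ideas beyond triangle inequality, Cauchy--Schwarz, and the earlier lemmas.
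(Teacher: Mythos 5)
Your proposal is correct in substance and follows the same skeleton as the paper's proof: bound $\Delta_k^2(\tilde M;\tilde P)$ by the cost of assigning $\tilde M$ according to $P^M$ with centers $c^M$, transfer the $(k-1)$-means cost between $M$ and $\tilde M$, invoke Lemma \ref{lem.kmeanscostbound}, use the hypothesis on $s$ (via Lemma \ref{lem.eta.bound} on the event $G$) to make the perturbation terms subordinate to $L^2\varepsilon\,\Delta_{k-1}(\tilde M)$, and finally apply Theorem \ref{thm.ostrovsky} with the $4L^2\varepsilon$-separation from Lemma \ref{lem.mtilde.sep}, which is precisely where $161\cdot 16 = 2576$ comes from. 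The one genuine difference is how you control the perturbation: you apply the triangle (Minkowski) inequality at the level of square-rooted $k$-means costs, giving $\Delta_k(\tilde M;\tilde P)\le \Delta_k(M)+\eta\sqrt n$ and $\Delta_{k-1}(M)\le\Delta_{k-1}(\tilde M)+\eta\sqrt n$, whereas the paper expands the squares and uses the diameter bound $\|m_l-c_i^M\|\le 2\sqrt T$ (valid because $M$ has entries in $[-1,1]$), producing cross terms of the form $n\eta^2+cn\eta\sqrt T$. Your route is cleaner and needs no boundedness of $M$, but it changes one claim you make: the $\sqrt{16T+6\varepsilon^2L^4\Delta_{k-1}^2(\tilde M)/n}-\sqrt{16T}$ structure in the hypothesis on $s$ is \emph{not} what drops out of your argument --- that shape arises from the paper's quadratic in $s$ created by the $\eta^2$ and $\eta\sqrt T$ terms; your condition is effectively linear in $s$. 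This is harmless: on $G$ one has $\eta\sqrt n\le \tfrac{2s\sqrt n(\sqrt n+\sqrt T)}{\delta}$, and the stated bound on $s$ together with $\sqrt{a+b}-\sqrt a\le\sqrt b$ gives $\eta\sqrt n\le \tfrac{1}{\sqrt 6}L^2\varepsilon\,\Delta_{k-1}(\tilde M)$, so with $L^2\varepsilon<\tfrac12$ your chained bound yields $\Delta_k^2(\tilde M;\tilde P)\le \alpha\,\Delta_{k-1}^2(\tilde M)$ for $\alpha=O(L^4\varepsilon^2)$ and the Ostrovsky step goes through; you should state this implication explicitly rather than asserting the hypothesis is ``exactly'' your quadratic. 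One shared loose end (present in the paper as well, and effectively absorbed by the stronger assumption $L^2\varepsilon<1/\sqrt{801}$ used in Lemma \ref{lem.cluster_2}): Theorem \ref{thm.ostrovsky} also requires $\alpha\le\frac{1-401(4L^2\varepsilon)^2}{400}$, which needs $L^2\varepsilon$ appreciably smaller than $\tfrac12$, so a complete write-up should verify this admissibility of $\alpha$.
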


\begin{restatable}{lemma}{noisyinttwo}\label{lem.noisy.intermediate2}
For any $\delta>0$, if \(s \leq \frac{\delta \Big(1-2\sqrt{\frac{L^2\varepsilon}{1-L^4\varepsilon^2}}\Big)\min_{i\neq j} \|c_i^M - c_j^M\|}{4(\sqrt{n} + \sqrt{T})}\), then in the event of $G$, \(|\tilde P \ominus P^M| \leq 2 L^2 \varepsilon n\)
\end{restatable}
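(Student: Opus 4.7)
The plan is to apply the separation machinery of Lemmas~\ref{lem.core.set} and \ref{lem.p.prime.2} directly to $M$ (which is $L^2\varepsilon$-separated by Lemma~\ref{lem.kmeanscostbound}) with its own optimal centers $c^M$, and then transport the implied ``decision-boundary gap'' across the $\eta$-perturbation from $M$ to $\tilde M$ via the triangle inequality. The point is that the partitions $\tilde P$ and $P^M$ use the \emph{same} centers $\{c_i^M\}$; they differ only in which side of the Voronoi boundaries a point lands on when we evaluate at $\tilde m_\ell$ instead of $m_\ell$. So I only need to control how many points could have $\|m_\ell - c_i^M\|$ very close to $\|m_\ell - c_j^M\|$ for some $j\neq i$, since any such point could swap clusters once we add noise of size $\eta$.

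First, I would instantiate the core set of each $P_i^M$: letting $\epsilon':=L^2\varepsilon$, define
\[
\operatorname{core}(P_i^M) := \Big\{\ell \in P_i^M : \|m_\ell - c_i^M\| \leq \sqrt{\tfrac{\epsilon'}{1-\epsilon'^2}}\,\min_{j\neq i}\|c_i^M - c_j^M\|\Big\}.
\]
The analog of Lemma~\ref{lem.core.set} applied to $M$ yields $|\operatorname{core}(P_i^M)| \geq (1-L^2\varepsilon)|P_i^M|$, and the analog of Lemma~\ref{lem.p.prime.2} gives, for every $\ell \in \operatorname{core}(P_i^M)$ and every $j\neq i$,
\[
\|m_\ell - c_j^M\| - \|m_\ell - c_i^M\| \geq \Big(1 - 2\sqrt{\tfrac{L^2\varepsilon}{1-L^4\varepsilon^2}}\Big)\|c_i^M - c_j^M\|.
\]

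Next, I would transfer this gap to $\tilde m_\ell$. Two applications of the triangle inequality give
\[
\|\tilde m_\ell - c_j^M\| - \|\tilde m_\ell - c_i^M\| \geq \|m_\ell - c_j^M\| - \|m_\ell - c_i^M\| - 2\eta.
\]
On the event $G$, Lemma~\ref{lem.eta.bound} ensures $\eta \leq 2s(\sqrt n + \sqrt T)/\delta$, and the hypothesis on $s$ is calibrated precisely so that $2\eta \leq \bigl(1 - 2\sqrt{L^2\varepsilon/(1-L^4\varepsilon^2)}\bigr)\min_{i\neq j}\|c_i^M - c_j^M\|$. Hence the gap remains nonnegative for every $j \neq i$, meaning $c_i^M$ is still the nearest center to $\tilde m_\ell$, so $\ell \in \tilde P_i$. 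In other words, $\operatorname{core}(P_i^M) \subseteq P_i^M \cap \tilde P_i$ for every $i$.

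Finally, I would convert this containment into the symmetric-difference bound. On one side, $P_i^M \setminus \tilde P_i \subseteq P_i^M \setminus \operatorname{core}(P_i^M)$, so $\sum_i |P_i^M \setminus \tilde P_i| \leq L^2\varepsilon n$. On the other side, any $\ell \in \tilde P_i \setminus P_i^M$ lies in $P_j^M$ for some $j \neq i$ and, since Voronoi cells are disjoint, cannot be a core point of $P_j^M$ (it would be forced into $\tilde P_j$ by the argument above); thus $\sum_i |\tilde P_i \setminus P_i^M| \leq \sum_j |P_j^M \setminus \operatorname{core}(P_j^M)| \leq L^2\varepsilon n$. Adding the two directions yields $|\tilde P \ominus P^M| \leq 2L^2\varepsilon n$, as claimed. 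The only delicate step is algebraically matching the condition on $s$ to the gap that survives after subtracting $2\eta$; beyond that, the argument is purely geometric and does not require any new concentration bounds since Lemma~\ref{lem.eta.bound} has already done that work.
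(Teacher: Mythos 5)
Your proposal is correct and follows essentially the same route as the paper: define the core set of each $P_i^M$ via the $L^2\varepsilon$-separation of $M$ (Markov/Lemma~\ref{lem.core.set}-style bound plus the Lemma~\ref{lem.p.prime.2}-style gap), show the gap survives the $2\eta$ perturbation under the stated bound on $s$ on event $G$, conclude $\operatorname{core}(P_i^M)\subseteq\tilde P_i$, and count the symmetric difference (your explicit two-sided accounting is just a cleaner rendering of the paper's ``errors are counted twice'' step, and your direct gap-transfer replaces the paper's proof by contradiction without changing the substance).
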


Finally, we combine Lemmas \ref{lem.noisy.intermediate1} and \ref{lem.noisy.intermediate2} to bound the difference between $P^{\tilde M}$ and $P^M$. Specifically, Lemma \ref{lem.cluster_2} shows that adding observation noise $E$ and then using HSVT to denoise does not substantially change the optimal $k$-means partition of $M$.

\begin{restatable}{lemma}{cluster}\label{lem.cluster_2}
    For $\varepsilon<0.1$ and $\delta \in (0,1)$, if \(L^2\epsilon < 1/\sqrt{801}\),
    $\min_ir_i(M)\geq1/160$,
    and $s<O(\frac{\delta \sqrt{T}}{\sqrt{n}+\sqrt{T}})$,
then, conditioned on $G$, $\sum_{i=1}^k|P^M_i \ominus P^{\tilde M}_{\sigma(i)}| \leq 94L^2\varepsilon^2n$ for some bijection $\sigma(i)$.
\end{restatable}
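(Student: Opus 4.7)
I will chain via the intermediate partition $\tilde P$ of Section \ref{sec.m.mtilde} (the Voronoi partition of the rows of $\tilde M$ induced by the $M$-centers $c^M$), whose labels are inherited from $P^M$. The symmetric-difference triangle inequality then gives, for the bijection $\sigma$ produced by Lemma \ref{lem.noisy.intermediate1},
\[
\sum_{i=1}^k |P^M_i \ominus P^{\tilde M}_{\sigma(i)}| \;\leq\; \sum_{i=1}^k |P^M_i \ominus \tilde P_i| \;+\; \sum_{i=1}^k |\tilde P_i \ominus P^{\tilde M}_{\sigma(i)}|,
\]
so it suffices to bound each summand using the intermediate lemmas.

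\textbf{Verifying hypotheses.} Lemma \ref{lem.mtilde.sep} certifies the $4L^2\varepsilon$-separation of $\tilde M$ required by Lemma \ref{lem.noisy.intermediate1}: its preconditions $L^2\varepsilon < 1/2$ and the necessary smallness of $s$ are implied by $L^2\varepsilon < 1/\sqrt{801}$ and $s = O(\delta \sqrt T/(\sqrt n + \sqrt T))$. The radius lower bound $\min_i r_i(M)\geq 1/160$, combined with Lemmas \ref{lem.kmeanscostbound} and \ref{lem.ri.bound}, yields uniform lower bounds on $\Delta_{k-1}(M)$ and on $\min_{i\neq j}\|c^M_i - c^M_j\|$. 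With these in hand, the hidden constant in the $O(\cdot)$ on $s$ can be chosen so that the $\varepsilon$- and $L$-dependent noise thresholds in Lemmas \ref{lem.noisy.intermediate1} and \ref{lem.noisy.intermediate2} are both satisfied simultaneously.

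\textbf{Main obstacle.} A literal chain of the intermediate bounds gives $|\tilde P \ominus P^M| \leq 2L^2\varepsilon n$ and $|\tilde P \ominus P^{\tilde M}| \leq 2576 L^4\varepsilon^2 n$. The hypothesis $L^2\varepsilon < 1/\sqrt{801}$ only lets me collapse the second summand via $L^4\varepsilon^2 = (L^2\varepsilon)^2 \leq L^2\varepsilon/\sqrt{801}$, producing a total of at most $94 L^2\varepsilon n$---linear in $\varepsilon$, one factor short of the claimed $94 L^2\varepsilon^2 n$. Closing this gap is the heart of the proof, and is where the otherwise-unused hypothesis $\min_i r_i(M)\geq 1/160$ must earn its keep. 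I would revisit the core-set argument underlying Lemma \ref{lem.noisy.intermediate2} (cf.\ Lemma \ref{lem.core.set}), which currently discards an $\varepsilon$-fraction of near-boundary points. Using $\eta = O(\sqrt T)$ from Lemma \ref{lem.eta.bound} against the radius bound $r_i(M)\geq 1/160$, one can restrict attention to the $O(\eta)$-thin annulus around each Voronoi boundary of $\{c_i^M\}$ and then iterate the $\varepsilon$-core construction once more on that annulus, sharpening the mislabeled fraction to $O(L^2\varepsilon^2)$. An analogous refinement absorbs the spurious $L^2$ factor in Lemma \ref{lem.noisy.intermediate1} by writing its boundary crossings through the same noise-to-radius ratio, yielding an $O(L^2\varepsilon^2)$ contribution there as well. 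Plugging these sharpened bounds into the triangle-inequality chain, with constants tracked so that the two pieces sum to at most $94L^2\varepsilon^2 n$, gives the conclusion.
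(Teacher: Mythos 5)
Your chain is exactly the paper's: the triangle inequality through the intermediate partition $\tilde P$, Lemma \ref{lem.noisy.intermediate2} giving $|\tilde P \ominus P^M| \leq 2L^2\varepsilon n$, Lemma \ref{lem.noisy.intermediate1} giving $|\tilde P \ominus P^{\tilde M}| \leq 2576 L^4\varepsilon^2 n$, and the hypothesis $L^2\varepsilon < 1/\sqrt{801}$ to collapse the second term to $92 L^2\varepsilon n$, for a total of $94 L^2\varepsilon n$. Your handling of the hypotheses (using $\min_i r_i(M)\geq 1/160$ and Lemma \ref{lem.ri.bound} to reconcile the two noise thresholds into a single $s = O(\delta\sqrt{T}/(\sqrt n + \sqrt T))$ condition) is also what the paper's proof does, in somewhat more detail on the $s_1$ versus $s_2$ comparison.

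The discrepancy you flag as the ``main obstacle'' is real, but it is not a gap you were expected to close: the paper's own proof of this lemma terminates at $94 L^2\varepsilon n$, i.e., linear in $\varepsilon$, and therefore does not establish the $94 L^2\varepsilon^2 n$ bound as stated. The exponent in the statement appears to be a typo (note that the downstream use in Theorem \ref{thm.cluster_consistency} is unaffected, since the Lemma \ref{lem.cluster_1} contribution $8L^2\varepsilon n$ is already linear in $\varepsilon$, so the final $O(k^2 L^2\varepsilon n)$ bound holds either way). Your sketched repair --- iterating the core-set construction on an $O(\eta)$-thin annulus around the Voronoi boundaries to push the mislabeled fraction from $O(L^2\varepsilon)$ down to $O(L^2\varepsilon^2)$ --- is not carried out in the paper and, as written, is not a proof: the $2L^2\varepsilon n$ term in Lemma \ref{lem.noisy.intermediate2} comes from Markov's inequality applied to squared distances from centers, and sharpening it to quadratic order would require genuinely stronger tail control on the within-cluster distance distribution, not just the radius lower bound $\min_i r_i(M)\geq 1/160$. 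So: your proof is correct and essentially identical to the paper's for the bound the paper actually proves; the additional speculation is aimed at a stronger claim that the paper itself does not substantiate.
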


\subsubsection{Approximating clusters in $\tilde{M}$ using $\tilde{U}$}\label{sec.mtilde.utilde}

Finally, we show the approximation error of the output of Algorithm \ref{alg.cluster} with respect to $P^{\tilde M}$ is small.
Note that $\tilde M$ is an approximation of $M$ via $HSVT(X)$.
We continue to condition on the events in the good set $G$, where $4L^2\varepsilon$-separation is guaranteed in $\tilde M$ (Lemma \ref{lem.mtilde.sep}).
Then, we will translate the $\varepsilon$-separation condition to the proximity condition introduced in \cite{kumar2010clustering}. This proximity condition is defined as below.
\begin{definition}[Proximity Condition \citep{kumar2010clustering}]\label{def.proximity}
Let $X \in \mathbb{R}^{n \times T}$ be the data matrix where the rows $X_i$ are divided into $k$ clusters $P_1, \ldots, P_k$ with corresponding cluster centers $c_1, \ldots, c_k$.
Let $C$ be the $n$ by $T$ matrix with each row $C_i$ as the cluster center of which $X_i$ belongs to (i.e., $c_{\pi(X_i)}$ where $\pi$ denotes a function that coutputs the cluster of $X_i$.)
Define
\[
\Delta_{r,s} = \left( \frac{ck}{\sqrt{|P_r|}} + \frac{ck}{\sqrt{|P_s|}}\right) ||X-C||,
\]
where $c$ is a large enough constant.
We say a point $X_i \in P_r$ satisfies the proximity condition if for any $s \neq r$, the projection of $X_i$ onto the line connecting $c_r$ to $c_s$ is at least $\Delta_{r,s}$ closer to $c_r$ than to $c_s$.
\end{definition}

Theorem \ref{thm.kk.claim} (from \cite{kumar2010clustering}) shows that the $\varepsilon$-separation condition implies that at least a $1-\varepsilon^2$ fraction of points satisfy the proximity condition.
Then, we can apply Theorem \ref{thm.kk.theorem} --- which shows that if at least a $1-\varepsilon^2$ fraction of points satisfy the proximity condition, then all but $O(k^2 \epsilon n)$ points will be correctly partitioned --- to show that our Algorithm \ref{alg.cluster} well-approximates $P^{\tilde M}$.

\begin{lemma}\label{lem.cluster_3}
Let $\hat{P}$ be the partition learned by Algorithm \ref{alg.cluster}.
Conditioned on $G$,
$\sum_{i=1}^k|P_i^{\tilde M} \ominus \hat{P}_{\sigma(i)}| = O(k^2 L^4 \varepsilon^2 n)$ for some bijection $\sigma(i)$.
\end{lemma}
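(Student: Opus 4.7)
The plan is to invoke the proximity machinery of \cite{kumar2010clustering}, via Theorems \ref{thm.kk.claim} and \ref{thm.kk.theorem}, after verifying that clustering the rows of $\tilde U$ is geometrically equivalent to clustering the rows of $\tilde M$.

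First I would observe that $\tilde M = \tilde U V^{\top}$ and that $V$ has orthonormal columns, so right-multiplication by $V^{\top}$ acts as an isometry on the row space. Consequently $\|\tilde M_i - \tilde M_j\| = \|\tilde U_i - \tilde U_j\|$ for every pair $i, j$, and if $C^{\tilde M}$ and $C^{\tilde U}$ denote the matrices of cluster-center rows for corresponding partitions, then $\|\tilde M - C^{\tilde M}\| = \|\tilde U - C^{\tilde U}\|$ in spectral norm as well. Therefore the optimal $k$-means partition of $\tilde U$ coincides with $P^{\tilde M}$ (up to relabeling of cluster indices), and a row of $\tilde U$ satisfies the proximity condition of Definition \ref{def.proximity} if and only if the corresponding row of $\tilde M$ does. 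This reduces the lemma to a statement purely about $\tilde M$.

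Next, conditioned on the good event $G$, Lemma \ref{lem.mtilde.sep} already tells us that $\tilde M$ is $4L^2\varepsilon$-separated. Applying Theorem \ref{thm.kk.claim} with separation parameter $4L^2\varepsilon$ yields that at least a $1-(4L^2\varepsilon)^2 = 1 - 16 L^4\varepsilon^2$ fraction of the rows of $\tilde M$, and hence of $\tilde U$, satisfy the proximity condition relative to $P^{\tilde M}$. Theorem \ref{thm.kk.theorem} then says that Lloyd's method on the rows of $\tilde U$ returns a partition $\hat P$ which disagrees with $P^{\tilde M}$, under some bijection $\sigma$, on at most $O(k^2 \cdot 16 L^4 \varepsilon^2 \cdot n) = O(k^2 L^4 \varepsilon^2 n)$ points, which is the claimed bound.

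The main obstacle is making sure the algorithmic hypotheses of Theorem \ref{thm.kk.theorem} are actually satisfied by Algorithm \ref{alg.cluster}: the Kumar--Kannan-style Lloyd guarantee typically presumes a good spectral initialization and enough iterations for the partition to stabilize. Algorithm \ref{alg.cluster}'s combination of the $r$-truncated SVD embedding $\tilde U$ with $n^{O(1)}$ Lloyd steps is designed to supply exactly this, but one must verify that the top-$r$ projection constitutes the initialization their theorem allows, so that no extra error term enters on top of the $O(k^2 \cdot \text{(fraction failing proximity)} \cdot n)$ count. Once this bookkeeping is in place, chaining the two theorems as above directly yields the $O(k^2 L^4 \varepsilon^2 n)$ bound.
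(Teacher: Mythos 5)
Your proposal is correct and takes essentially the same route as the paper's proof: conditioning on $G$, using Lemma \ref{lem.mtilde.sep} to get $4L^2\varepsilon$-separation of $\tilde M$, then chaining Theorem \ref{thm.kk.claim} (all but a $16L^4\varepsilon^2$ fraction satisfy the proximity condition) with Theorem \ref{thm.kk.theorem} to obtain the $O(k^2L^4\varepsilon^2 n)$ misclassification bound. Your additions---the explicit isometry argument identifying $k$-means on the rows of $\tilde U$ with $k$-means on the rows of $\tilde M$, and the caveat about verifying that Algorithm \ref{alg.cluster} instantiates the Kumar--Kannan algorithm---are points the paper leaves implicit, but they do not change the argument.
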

\begin{proof}
Conditioning on the event of $G$, $\tilde M$ is $4L^2\varepsilon$-separated, and thus all but an $16L^4\varepsilon^2$-fraction of points satisfy the proximity condition.
By instantiating
Theorem \ref{thm.kk.theorem}, then
Algorithm \ref{alg.cluster} can correctly classify all but $16 k^2 L^4\varepsilon^2 n = O(k^2 L^4 \varepsilon^2 n)$ points with respect to $P^{\tilde M}$ in polynomial time.
\end{proof}

Finally, we can prove our main result, Theorem \ref{thm.cluster_consistency}, which shows that the $k$-mean optimal partition in $\Theta$ is well-approximated by the output of Algorithm \ref{alg.cluster}.

\begin{theorem}\label{thm.cluster_consistency}
For $\varepsilon<0.1$ and $\delta \in (0,1)$, 
if $L^2 \leq \min(\frac{1}{\sqrt{801}\varepsilon}, \frac{\sqrt{1-\varepsilon^2}+\sqrt{\varepsilon}}{2\varepsilon+3\sqrt{\varepsilon}} )$,
    $\min_ir_i(M)\geq1/160$,
    and $s<O(\frac{\delta \sqrt{T}}{\sqrt{n}+\sqrt{T}})$,
then with probability \(1-\delta\), we have $\sum_{i=1}^k|P^\Theta_i \ominus \hat{P}_{\sigma(i)}| = O(k^2 L^2 \varepsilon n)$ for some bijection $\sigma(i)$.
\end{theorem}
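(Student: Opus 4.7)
The plan is to chain together Lemmas \ref{lem.cluster_1}, \ref{lem.cluster_2}, and \ref{lem.cluster_3} via a triangle inequality for the symmetric difference of partitions. The three lemmas respectively bound $|P_i^\Theta \ominus P_{\sigma_1(i)}^M|$, $|P_i^M \ominus P_{\sigma_2(i)}^{\tilde M}|$, and $|P_i^{\tilde M} \ominus \hat P_{\sigma_3(i)}|$ for bijections $\sigma_1, \sigma_2, \sigma_3 : [k] \to [k]$, so composing $\sigma = \sigma_3 \circ \sigma_2 \circ \sigma_1$ and applying the identity $|A \ominus C| \leq |A \ominus B| + |B \ominus C|$ twice (summed over $i$) yields
\[
\sum_{i=1}^k |P_i^\Theta \ominus \hat P_{\sigma(i)}| \leq \sum_{i=1}^k |P_i^\Theta \ominus P_{\sigma_1(i)}^M| + \sum_{i=1}^k |P_i^M \ominus P_{\sigma_2(i)}^{\tilde M}| + \sum_{i=1}^k |P_i^{\tilde M} \ominus \hat P_{\sigma_3(i)}|.
\]

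First I would verify that the hypotheses assumed in Theorem \ref{thm.cluster_consistency} ($\varepsilon < 0.1$, $L^2 \leq \min(1/(\sqrt{801}\varepsilon), (\sqrt{1-\varepsilon^2}+\sqrt\varepsilon)/(2\varepsilon+3\sqrt\varepsilon))$, $\min_i r_i(M) \geq 1/160$, and $s$ small) match the hypotheses required by each of the three lemmas; in particular, the constraint $L^2 \leq 1/(\sqrt{801}\varepsilon)$ guarantees $L^2 \varepsilon < 1/\sqrt{801} < 1/2$, which is what Lemmas \ref{lem.cluster_2} and \ref{lem.cluster_3} invoke, while the noise bound $s < O(\delta \sqrt T/(\sqrt n + \sqrt T))$ is chosen precisely so the good event $G$ from Lemma \ref{lem.eta.bound} holds with probability at least $1-\delta$ and so the noise thresholds required inside Lemma \ref{lem.cluster_2} are met. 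Conditioning on $G$ (which costs the $1-\delta$ probability in the final statement), the three lemmas give the bounds $8L^2\varepsilon n$, $94 L^2 \varepsilon^2 n$, and $16 k^2 L^4 \varepsilon^2 n$ respectively.

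Summing and simplifying is then straightforward. Using $L^2 \varepsilon \leq 1/\sqrt{801}$, we absorb $L^4 \varepsilon^2 = (L^2\varepsilon)(L^2\varepsilon)$ into a multiple of $L^2\varepsilon$, so $16 k^2 L^4 \varepsilon^2 n = O(k^2 L^2 \varepsilon n)$; the other two terms are already $O(L^2 \varepsilon n)$, which is subsumed by $O(k^2 L^2 \varepsilon n)$. This yields the claimed bound $\sum_{i=1}^k |P_i^\Theta \ominus \hat P_{\sigma(i)}| = O(k^2 L^2 \varepsilon n)$.

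I do not expect any serious obstacles, since the substantive work is already encapsulated in Lemmas \ref{lem.cluster_1}--\ref{lem.cluster_3}. The only points that require a small amount of care are (i) noting that the symmetric difference composes via the triangle inequality even though each lemma provides its own bijection (so we must explicitly compose them), (ii) checking that the noise threshold $s < O(\delta\sqrt T/(\sqrt n + \sqrt T))$ is sufficient to simultaneously imply the sharper thresholds inside Lemmas \ref{lem.mtilde.sep}, \ref{lem.noisy.intermediate1}, and \ref{lem.noisy.intermediate2} (using that $\Delta_{k-1}(M)$ and $\min_{i\neq j}\|c_i^M - c_j^M\|$ are lower bounded via the assumption on $\min_i r_i(M)$), and (iii) justifying that the complementary event on $G$ absorbs into the $1-\delta$ probability factor in the final statement.
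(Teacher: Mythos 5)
Your proposal is correct and follows essentially the same route as the paper's own proof: condition on the good event $G$, verify the parameter conditions, chain Lemmas \ref{lem.cluster_1}, \ref{lem.cluster_2}, and \ref{lem.cluster_3} via the triangle inequality for symmetric differences with composed bijections, and absorb the $L^4\varepsilon^2$ and $\varepsilon^2$ terms using $L^2\varepsilon < 1$ to obtain $O(k^2 L^2 \varepsilon n)$. The additional care you flag (composing bijections, checking the noise thresholds inside Lemma \ref{lem.cluster_2}) is already handled in the lemmas' own proofs, so nothing further is needed.
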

\begin{proof}
To invoke Lemmas 
\ref{lem.cluster_1}, 
\ref{lem.cluster_2}, and
\ref{lem.cluster_3},
we require the following mild conditions:
\begin{itemize}
    \item The separation parameter $\varepsilon\in (0,1)$ is small enough:  $\varepsilon<0.1$
    \item The bilipschitz parameter $L\in (1, \infty)$ is bounded:
    \[
    L^2 \leq \min(\frac{1}{\sqrt{801}\varepsilon}, \frac{\sqrt{1-\varepsilon^2}+\sqrt{\varepsilon}}{2\varepsilon+3\sqrt{\varepsilon}} )
    \]

Note that this upper bound on $L^2$ goes to infinity as $\varepsilon$ becomes smaller, thus virtually removing the bound on $L$ when $\varepsilon$ is sufficiently small. When $\varepsilon$ is bigger than $\sim 0.011$, the first term dominates.

    \item The standard deviation of noise $s>0$ is bounded above: $s = O(\frac{\delta\sqrt{T}}{\sqrt{n}+\sqrt{T}})$

    \item A good event in $G$ occurs, which we know from Lemma \ref{lem.eta.bound} will happen with probability $1-\delta$.
\end{itemize}

With these reasonable conditions on parameters, we can combine Lemmas 
\ref{lem.cluster_1}, 
\ref{lem.cluster_2}, and
\ref{lem.cluster_3} to respectively bound $\sum_{i=1}^k |P^\Theta_i \ominus P^M_{\sigma_1(i)}|$, $\sum_{i=1}^k  |P^M_{\sigma_1(i)} \ominus P^{\tilde M}_{\sigma_2(i)}|$, and $\sum_{i=1}^k |P_{\sigma_2(i)}^{\tilde M} \ominus \hat{P}_{\sigma_3(i)}|$:
\begin{align*}
\sum_{i=1}^k |P_i^{\Theta} \ominus \hat{P}_{\sigma(i)}|
&\leq \sum_{i=1}^k \left( |P^\Theta_i \ominus P^M_{\sigma_1(i)}| + |P^M_{\sigma_1(i)} \ominus P^{\tilde M}_{\sigma_2(i)}| + |P_{\sigma_2(i)}^{\tilde M} \ominus \hat{P}_{\sigma_3(i)}|\right)\\
&\leq 8L^2 \varepsilon n + 94L^2\varepsilon^2 n +O(k^2 L^4 \varepsilon^2 n)\\
&\leq O(k^2 L^2 \varepsilon n).
\end{align*}
This last step holds because $L^2 \epsilon<1$.
\end{proof}

\subsection{Effects of Subgroup Specialization}\label{sec.changes}

The previous section
shows that the clustering subroutine (Algorithm \ref{alg.cluster}) in ClusterSC well-approximates the subgroup structure in $\Theta$. Next we'll 
analyze the changes in the synthetic control pipeline when ClusterSC (Algorithm \ref{alg.csc}) is used, which selects only donors from the target cluster $A$, instead of the classical SC, which uses the whole donor pool $X$.

With $k\geq 2$ clusters, we expect the following changes that will affect the prediction performance guarantees:
\begin{enumerate}
    \item The number of donor units shrinks, $n_A < n$. \vspace{-1mm}
    \item The rank of the signal matrix shrinks, $r_S \leq r$. \vspace{-1mm}
    \item The largest singular value is suppressed in the HSVT step, $\sigma^*_A < \sigma^*_X$. (Recall $\sigma^*_X=\sigma_{r+1}(X)$)  
\end{enumerate}

While the first two are trivial to see, the third one is not. In this subsection we provide analyses of the gap $\sigma^*_X - \sigma^*_A$ under three different noise settings: Gaussian, sub-Gaussian, and heavy-tailed. 

\subsubsection{Gaussian Noise Setting}\label{s.gaussiannoise}

Theorem \ref{thm.singval.gauss} presents our first result on the singular values, under Gaussian noise $E_{i,t} \sim \mathcal{N}(0, s^2)$. This result shows that the gap between $\sigma^*_X$ and $\sigma^*_A$ will grow with the scale of noise $s$.

\begin{theorem}[Singular Value Concentration with Gaussian Noise]\label{thm.singval.gauss}
Let the noise terms be sampled $E_{i,t} \sim \mathcal{N}(0, s^2)$. If $r<T$ and $n_A < n + 4T -4 \sqrt{nT}$, then
\[\mathbb{E}[\sigma^*_X - \sigma^*_A]  \geq s (\sqrt{n}-\sqrt{n_A} - 2\sqrt{T}).\]
\end{theorem}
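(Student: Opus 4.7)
The plan is to sandwich $\mathbb{E}[\sigma^*_X - \sigma^*_A]$ by separately lower bounding $\mathbb{E}[\sigma_{r+1}(X)]$ and upper bounding $\mathbb{E}[\sigma^*_A]$, then invoking Gordon's (Davidson--Szarek) theorem on the relevant Gaussian blocks.

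For the lower bound on $\sigma_{r+1}(X)$, I will exploit the fact that $M$ is exactly rank-$r$. Let $M = U_M\Sigma_M V_M^\top$ be its thin SVD with $V_M\in\mathbb{R}^{T\times r}$, and extend $V_M$ to an orthogonal matrix $[V_M,V_\perp]\in\mathbb{R}^{T\times T}$, where $V_\perp\in\mathbb{R}^{T\times(T-r)}$ spans the right null space of $M$. Right-multiplication by an orthogonal matrix preserves singular values, so $\sigma_i(X) = \sigma_i([XV_M,\,XV_\perp]) = \sigma_i([XV_M,\,EV_\perp])$ since $MV_\perp = 0$. Writing $XX^\top = (XV_M)(XV_M)^\top + (EV_\perp)(EV_\perp)^\top$ as a sum of PSD matrices and applying Weyl, $\sigma_i(X)^2 = \lambda_i(XX^\top) \geq \lambda_i\bigl((EV_\perp)(EV_\perp)^\top\bigr) = \sigma_i(EV_\perp)^2$. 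In particular $\sigma_{r+1}(X) \geq \sigma_{r+1}(EV_\perp) \geq \sigma_{\min}(EV_\perp)$. Since $V_\perp$ has orthonormal columns and $E$ has i.i.d.\ $\mathcal{N}(0,s^2)$ entries, $EV_\perp\in\mathbb{R}^{n\times(T-r)}$ is again i.i.d.\ $\mathcal{N}(0,s^2)$, and Gordon's theorem yields $\mathbb{E}[\sigma_{\min}(EV_\perp)] \geq s(\sqrt{n}-\sqrt{T-r}) \geq s(\sqrt{n}-\sqrt{T})$.

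For the upper bound, I note that $A = S+E_S$ with $\mathrm{rank}(S)\leq r$, so by Eckart--Young, $\sigma^*_A \leq \sigma_{r+1}(A) = \inf_{\mathrm{rank}(Y)\leq r}\|A-Y\| \leq \|A-S\| = \|E_S\|$. Since $E_S\in\mathbb{R}^{n_A\times T}$ is i.i.d.\ $\mathcal{N}(0,s^2)$, Gordon's theorem gives $\mathbb{E}[\|E_S\|] \leq s(\sqrt{n_A}+\sqrt{T})$. Subtracting the two bounds,
\[
\mathbb{E}[\sigma^*_X - \sigma^*_A] \;\geq\; s(\sqrt{n}-\sqrt{T}) - s(\sqrt{n_A}+\sqrt{T}) \;=\; s(\sqrt{n}-\sqrt{n_A}-2\sqrt{T}),
\]
and the hypothesis $n_A < n+4T-4\sqrt{nT}$ is precisely equivalent to $\sqrt{n_A}<\sqrt{n}-2\sqrt{T}$, which guarantees the bound is non-trivial.

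The main conceptual obstacle is the block lower bound $\sigma_{r+1}(X)\geq \sigma_{r+1}(EV_\perp)$: one has to recognize that rotating by $[V_M,V_\perp]$ isolates the signal into the first $r$ columns so that the noise block $EV_\perp$ alone carries a full PSD contribution to $XX^\top$. A secondary technical point is that the chain $\sigma_{r+1}(EV_\perp)\geq \sigma_{\min}(EV_\perp)$ implicitly requires $r+1 \leq T-r$; this is absorbed by the parameter regime in which the conclusion is informative (small rank relative to $T$), where the hypothesis $\sqrt{n}-2\sqrt{T}>\sqrt{n_A}$ is meaningful. The remaining work is bookkeeping and invocation of the standard expectation bounds on extremal singular values of i.i.d.\ Gaussian matrices.
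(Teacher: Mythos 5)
Your proposal reaches the stated bound, and its upper-bound half is essentially the paper's: both reduce to $\sigma^*_A \le \|E_S\|$ (the paper via Weyl's inequality for singular values together with $\sigma_{r_S+1}(S)=0$, you via Eckart--Young) and then invoke Gordon's theorem to get $\mathbb{E}[\|E_S\|]\le s(\sqrt{n_A}+\sqrt{T})$. One small index slip there: in the paper's notation $\sigma^*_A=\sigma_{r_S+1}(A)$ with $r_S=\rank(S)\le r$, so the inequality ``$\sigma^*_A\le\sigma_{r+1}(A)$'' points the wrong way when $r_S<r$; the fix is immediate --- apply Eckart--Young at level $r_S$ (legitimate since $\rank(S)=r_S$) and the same endpoint $\sigma^*_A\le\|A-S\|=\|E_S\|$ follows. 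The lower-bound half is where you genuinely diverge. The paper expands $X^\top X=M^\top M+2M^\top E+E^\top E$, applies Weyl to eigenvalues while asserting the cross term contributes nothing, and then manipulates expectations of squares. Your rotation by $[V_M,V_\perp]$ removes the cross term entirely: $XX^\top=(XV_M)(XV_M)^\top+(EV_\perp)(EV_\perp)^\top$ is a sum of PSD matrices, Weyl gives $\lambda_{r+1}(XX^\top)\ge\lambda_{r+1}\bigl((EV_\perp)(EV_\perp)^\top\bigr)$ pointwise, and $EV_\perp$ is again i.i.d.\ Gaussian, so Gordon applies directly and even yields the slightly sharper $s(\sqrt{n}-\sqrt{T-r})$. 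This is cleaner than the paper's route: there is no indefinite cross term to control, and no exchange of expectation with a square root is needed, since your inequality holds realization by realization.

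The one genuine caveat is the step $\sigma_{r+1}(EV_\perp)\ge\sigma_{\min}(EV_\perp)$: since $EV_\perp$ has only $T-r$ columns, this step (and the identity $\lambda_{r+1}\bigl((EV_\perp)(EV_\perp)^\top\bigr)=\sigma_{r+1}(EV_\perp)^2$) requires $r+1\le T-r$, i.e.\ $2r<T$; if $2r\ge T$ the eigenvalue you lower-bound by is simply $0$ and the argument gives nothing. Your claim that this is ``absorbed by the parameter regime'' does not hold as stated: the hypothesis $n_A<n+4T-4\sqrt{nT}$ constrains only $n$, $n_A$, and $T$ and places no restriction on $r$, so the theorem as written still allows, say, $r=T-1$, which your chain does not cover. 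In the paper's model this is harmless, since $\rank(M)=O(\log T)\ll T$, but to match the statement you should either add the assumption $2r<T$ explicitly or treat the regime $2r\ge T$ separately; as written, your argument proves the theorem only under that mildly stronger rank condition.
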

\begin{proof}
First, we show that $\mathbb{E}[\sigma^*_A] \leq s(\sqrt{n_A} + \sqrt{T})$.
\begin{align*}
    \mathbb{E}[\sigma^*_A] & =  \mathbb{E}[\sigma_{r_S+1}(S+E_S)] \\
    &\leq \sigma_{r_S+1}(S) + \mathbb{E}[\sigma_1 (E_S)]\\
    & \leq s(\sqrt{n_A} + \sqrt{T}).
\end{align*}
The first inequality is from Weyl's inequality
on singular values (Theorem \ref{thm.weyl.sing}), and $\sigma_{r_S+1}(S)=0$ by construction. The second inequality is from Gordon’s theorem: $\mathbb{E}[\sigma_1(E_s)]\leq s(\sqrt{n_A}+\sqrt{T})$ \citep{vershynin2010introduction}.

Next, we show $\sigma^*_X \geq s(\sqrt{n} - \sqrt{T})$ by analyzing the eigenvalues of $X^\top X$:
\begin{align*}
    \lambda_{r+1}(X^\top X) &= \lambda_{r+1}(M^\top M + 2M^\top E + E^\top E) \\
    &\geq \lambda_{r+1}(M^\top M) +\lambda_{T}(2M^\top E) +\lambda_{T}(E^\top E)\\
    &= \lambda_{T}(E^\top E)
\end{align*}
The first line is from the definition of $X=M+E$, the second line is from Weyl's inequality on eigenvalues (Theorem \ref{thm.weyl.eigen}) and the third line is because $\lambda_{r+1}(M^\top M) = \lambda_{T}(2M^\top E) =0$. 

By taking the expectation of both sides, we see,
\[
\mathbb{E}[\lambda_{r+1}(X^\top X)] \geq \left( s(\sqrt{n} - \sqrt{T}) \right)^2
\]
due to Gordon’s theorem, which says that $\mathbb{E}[\sigma_T(E)]\geq s(\sqrt{n}-\sqrt{T})$.  Since $\sigma^*_X = \sqrt{\lambda_{r+1}(X^\top X)}$, we obtain $\mathbb{E}[\sigma^*_X] = \mathbb{E}[\sqrt{\lambda_{r+1}(X^\top X)}] \geq s(\sqrt{n} - \sqrt{T})$.

Combining these two bounds and rearranging terms, we get the desired difference, and see that the lower bound on $\sigma^*_X$ is greater than the upper bound on $\sigma^*_A$ when $n_A < n + 4T -4 \sqrt{nT}$.
\end{proof}

\subsubsection{Sub-Gaussian Noise Setting}

Next, we consider the the sub-gaussian noise setting (Definition \ref{def.subgauss}) where $\|E_{i,t}\|_{\psi_2}=K$. Our result in this setting, Corollary \ref{cor.singval.subgauss}, follows from Theorem \ref{thm.singval.gauss} by instantiating Theorem \ref{thm.vershynin.539} from \cite{vershynin2010introduction}
 instead of Gordon's theorem.

\begin{definition}[Sub-gaussian norm]\label{def.subgauss}
The sub-gaussian norm of $X$, denoted by $||X||_{\psi_2}$ is defined as,
\[
||X||_{\psi_2} = \sup_{p\geq 1} \frac{1}{\sqrt{p}} \left( \mathbb{E}[|X|^{p}] \right)^{1/p}.
\]
\end{definition}

\begin{theorem}[Theorem 5.39 of \cite{vershynin2010introduction}]\label{thm.vershynin.539}
  Let $A$ be an $N \times n$ matrix whose rows $A_i$ are independent
  sub-gaussian isotropic random vectors in $\mathbb{R}^n$.
  Then for every $t \ge 0$, with probability at least $1 - 2\exp(-ct^2)$ one has
  \begin{equation*}	
  \sqrt{N} - C \sqrt{n} - t \le \sigma_{min}(A) \le \sigma_{max}(A) \le \sqrt{N} + C \sqrt{n} + t.
  \end{equation*}
  Here $C = C_K$, $c = c_K > 0$ depend only on the subgaussian norm 
  $K = \max_i \|A_i\|_{\psi_2}$ of the rows.
\end{theorem}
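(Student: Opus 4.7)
The plan is to follow the standard recipe for concentration of the extreme singular values of a tall random matrix with independent isotropic sub-gaussian rows: reduce the singular-value deviation to an operator-norm concentration statement about $A^\top A$, discretize the supremum via an $\varepsilon$-net on the sphere, and close with Bernstein-type concentration for a sum of centered sub-exponential random variables.

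First, I would observe that if $\bigl\| \tfrac{1}{N}A^\top A - I_n \bigr\| \leq \eta$ for some $\eta \in (0,1)$, then every squared singular value of $A/\sqrt{N}$ lies in $[1-\eta, 1+\eta]$, so every singular value of $A$ lies in $[\sqrt{N}\sqrt{1-\eta},\,\sqrt{N}\sqrt{1+\eta}]$. Using $|\sqrt{1\pm\eta}-1|\leq \eta$ for $\eta\in(0,1)$, the target inequality reduces to showing
\[
\bigl\| \tfrac{1}{N}A^\top A - I_n \bigr\| \leq \max(\delta,\delta^2), \qquad \delta := C_K\sqrt{n/N} + t/\sqrt{N},
\]
with probability at least $1-2\exp(-ct^2)$. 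Isotropy gives $\mathbb{E}[A_iA_i^\top]=I_n$, hence $\mathbb{E}[\tfrac{1}{N}A^\top A]=I_n$, so this is genuinely a concentration of $\tfrac{1}{N}A^\top A$ around its mean.

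Second, I would discretize $S^{n-1}$ by a $1/4$-net $\mathcal{N}$ with $|\mathcal{N}|\leq 9^n$ and use the standard comparison $\|B\| \leq 2\sup_{x\in\mathcal{N}}|\langle Bx,x\rangle|$ for symmetric $B$. For each fixed $x \in \mathcal{N}$,
\[
\Bigl\langle \bigl(\tfrac{1}{N}A^\top A - I_n\bigr)x, x \Bigr\rangle = \frac{1}{N}\sum_{i=1}^N \bigl(\langle A_i, x\rangle^2 - 1\bigr),
\]
a sum of independent centered terms (isotropy gives $\mathbb{E}\langle A_i,x\rangle^2 = \|x\|^2 = 1$). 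Since $\|A_i\|_{\psi_2}\leq K$, each $\langle A_i,x\rangle$ is sub-gaussian with $\psi_2$-norm at most $K$, so $\langle A_i,x\rangle^2$ is sub-exponential with $\psi_1$-norm at most $2K^2$. Bernstein's inequality for sub-exponential sums then yields
\[
\Pr\!\Bigl[ \bigl|\tfrac{1}{N}\textstyle\sum_{i=1}^N(\langle A_i,x\rangle^2 - 1)\bigr| > \varepsilon \Bigr] \leq 2\exp\bigl(-c_K N \min(\varepsilon^2,\varepsilon)\bigr).
\]
A union bound over $\mathcal{N}$ multiplies the right-hand side by $9^n = e^{n\log 9}$; choosing $\varepsilon \asymp \max(\delta,\delta^2)$ makes the exponent $c_K N\min(\varepsilon^2,\varepsilon)$ dominate both $n\log 9$ and $ct^2$, delivering the desired failure probability.

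The main obstacle is the case split forced by the $\min(\varepsilon^2,\varepsilon)$ in Bernstein: in the light-tailed regime $\varepsilon \leq 1$ the bound is $\exp(-cN\varepsilon^2)$, whereas for $\varepsilon>1$ only the linear exponent $\exp(-cN\varepsilon)$ is available, which is precisely why the conclusion must be stated in terms of $\max(\delta,\delta^2)$ rather than $\delta$ alone. Propagating both regimes back through the $\sqrt{1\pm\eta}$ reduction to additive singular value bounds requires some care, and in the degenerate regime $\delta \geq 1$ the stated lower bound $\sigma_{\min}(A) \geq \sqrt{N} - C\sqrt{n} - t$ becomes vacuous and must be handled trivially by nonnegativity rather than by the concentration estimate. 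The only ingredient that distinguishes this argument from the Gordon-based proof of Theorem~\ref{thm.singval.gauss} is the substitution of the Gaussian isoperimetric tail by the sub-exponential Bernstein inequality.
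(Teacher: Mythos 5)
Your argument is correct and is essentially Vershynin's own proof of this result (reduction of singular-value bounds to $\|\tfrac{1}{N}A^\top A - I_n\|\le\max(\delta,\delta^2)$, a $1/4$-net of size at most $9^n$, sub-exponential Bernstein for $\langle A_i,x\rangle^2-1$, and a union bound with $C=C_K$ chosen large enough to absorb $n\log 9$); the paper itself imports this statement from \cite{vershynin2010introduction} without proof, so there is no alternative in-paper argument to compare against. The only caveats are bookkeeping ones that get absorbed into the unspecified constants $C_K,c_K$: the net comparison costs a factor of $2$, so the quadratic-form deviation must be controlled at level $\varepsilon/2$, and the centering step gives a $\psi_1$-bound more like $4K^2$ than $2K^2$.
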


\begin{corollary}[Singular Value Concentration with Sub-gaussian Noise]\label{cor.singval.subgauss}
Let the noise terms satisfy $\|E_{i,t}\|_{\psi_2}=K$. For every $t\geq 0$, if $r<T$ and $n_A < \left(\sqrt{n} - CK^2\sqrt{T} -2t \right)^2$, then with probability at least $1-2 e^{-ct^2}$,
\[\sigma^*_X - \sigma^*_A  \geq \sqrt{n}-\sqrt{n_A} - CK^2\sqrt{T} -2t,\]
where $C>0$ and $c>0$ are constants, and only \(c>0\) depends on the sub-gaussian norm $K = \|E_{i,t}\|_{\psi_2}$.
\end{corollary}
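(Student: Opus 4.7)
The plan is to follow the same two-step structure as the proof of Theorem~\ref{thm.singval.gauss}, but to substitute Gordon's theorem with Theorem~\ref{thm.vershynin.539} for the sub-gaussian tail bounds on the extreme singular values of a random matrix. That is, I would bound $\sigma^*_A$ from above using an upper tail of $\sigma_{\max}$, bound $\sigma^*_X$ from below using a lower tail of $\sigma_{\min}$, and then combine by subtraction via a union bound.

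For the upper bound on $\sigma^*_A$, I would repeat the Gaussian-case argument verbatim: by Weyl's inequality for singular values (Theorem~\ref{thm.weyl.sing}) applied to $A = S + E_S$ and the rank identity $\sigma_{r_S+1}(S) = 0$, we have $\sigma^*_A = \sigma_{r_S+1}(S+E_S) \leq \sigma_1(E_S)$. Since $E_S$ is an $n_A \times T$ matrix whose rows are independent sub-gaussian vectors with $\|E_{i,t}\|_{\psi_2} = K$, Theorem~\ref{thm.vershynin.539} (upper tail) gives $\sigma_1(E_S) \leq \sqrt{n_A} + CK^2\sqrt{T} + t$ with probability at least $1 - 2e^{-ct^2}$, where $C$ is absolute and $c$ depends on $K$ only. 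For the lower bound on $\sigma^*_X$, I would reuse the eigenvalue computation from Theorem~\ref{thm.singval.gauss}: $X^\top X = M^\top M + 2M^\top E + E^\top E$, and Theorem~\ref{thm.weyl.eigen} together with $\lambda_{r+1}(M^\top M) = 0$ and $\lambda_T(2M^\top E) = 0$ yields $\lambda_{r+1}(X^\top X) \geq \lambda_T(E^\top E) = \sigma_T(E)^2$. Taking square roots gives $\sigma^*_X \geq \sigma_T(E)$, and the lower tail of Theorem~\ref{thm.vershynin.539} applied to the $n \times T$ matrix $E$ yields $\sigma_T(E) \geq \sqrt{n} - CK^2\sqrt{T} - t$ with probability at least $1 - 2e^{-ct^2}$.

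Finally, I would union-bound the two high-probability events and subtract to obtain
\[
\sigma^*_X - \sigma^*_A \;\geq\; \sqrt{n} - \sqrt{n_A} - CK^2\sqrt{T} - 2t,
\]
from which the condition $n_A < (\sqrt{n} - CK^2\sqrt{T} - 2t)^2$ immediately guarantees that the right-hand side is positive. The only genuinely delicate bookkeeping step will be matching the probability constants: the raw union bound gives failure probability at most $4e^{-ct^2}$, whereas the corollary states $2e^{-ct^2}$. Since the constant $c$ in Theorem~\ref{thm.vershynin.539} depends only on $K$, I would resolve this either by shrinking $c$ to absorb the factor of two (valid on the relevant range of $t$, at the cost of adjusting the $K$-dependent constant) or by invoking the two tails of Theorem~\ref{thm.vershynin.539} as a single two-sided concentration statement with the $2e^{-ct^2}$ rate already baked in. No other obstacle appears: every other step is an exact analogue of the Gaussian proof, now driven by sub-gaussian isotropy of the rows of $E$ and $E_S$.
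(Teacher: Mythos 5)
Your proposal is correct and takes essentially the same route as the paper: the paper proves Corollary \ref{cor.singval.subgauss} simply by rerunning the proof of Theorem \ref{thm.singval.gauss} with Theorem \ref{thm.vershynin.539} substituted for Gordon's theorem, which is exactly your Weyl-plus-tail-bound argument for $\sigma^*_A$ and $\sigma^*_X$ followed by subtraction. The bookkeeping issues you flag (absorbing a factor of two into the unspecified constants $C$ and $c$ after the union bound over the two events for $E$ and $E_S$) are handled the same way, implicitly, in the paper's statement.
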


\subsubsection{Heavy-tailed Noise Settings}

Finally, we consider settings where noise comes from a heavy-tailed distribution. This is the most challenging of the three settings considered because the random noise terms will be less concentrated around zero, and thus learning from the noisy data will be more difficult. Using the bound in Theorem \ref{thm.vershynin.541} on maximum and minimum singular values of the noise matrix, we can draw a lower bound on the gap of singular values for heavy-tailed distributions. This is used in place of Theorem \ref{thm.vershynin.539} or Gordon's theorem to prove Corollary \ref{cor.singval.heavy}.

\begin{theorem}[Theorem 5.41 of \cite{vershynin2010introduction}]\label{thm.vershynin.541}
  Let $A$ be an $N \times n$ matrix whose rows $A_i$ are independent
  isotropic random vectors in $\mathbb{R}^n$. Let $m$ be a number such that 
  $\|A_i\|_2 \le \sqrt{m}$ almost surely for all $i$. 
  Then for every $t \ge 0$, one has
  \begin{equation*}
  \sqrt{N} - t \sqrt{m} \le \sigma_{min}(A) \le \sigma_{max}(A) \le \sqrt{N} + t \sqrt{m}
  \end{equation*}
  with probability at least $1 - 2 n \cdot \exp(-ct^2)$, 
  where $c>0$ is an absolute constant.  
\end{theorem}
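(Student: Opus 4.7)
The plan is to reduce the control of the extreme singular values of $A$ to a spectral concentration statement for the Gram matrix $A^{\top}A$, and then to invoke a non-commutative Bernstein-type inequality. The key identity is that $\sigma_{\min}(A)^2 = \lambda_{\min}(A^{\top}A)$ and $\sigma_{\max}(A)^2 = \lambda_{\max}(A^{\top}A)$, so it suffices to show that $A^{\top}A$ concentrates around $N\cdot I_n$ in spectral norm with probability at least $1 - 2n\exp(-ct^2)$.

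First I would write $A^{\top}A = \sum_{i=1}^{N} A_i A_i^{\top}$ as a sum of $N$ independent rank-one symmetric random $n\times n$ matrices. Isotropy of the rows gives $\mathbb{E}[A_i A_i^{\top}] = I_n$, so $\mathbb{E}[A^{\top}A] = N I_n$. The almost-sure bound $\|A_i\|_2 \le \sqrt{m}$ yields the uniform spectral-norm bound $\|A_i A_i^{\top}\| \le m$, and from $A_i A_i^{\top} \preceq m I_n$ one obtains the variance bound $\bigl\|\sum_i \mathbb{E}\bigl[(A_i A_i^{\top} - I_n)^2\bigr]\bigr\| \le Nm$ by dominating $(A_i A_i^{\top})^2 = \|A_i\|_2^2\, A_i A_i^{\top} \preceq m\, A_i A_i^{\top}$ and then taking expectations.

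Next I would apply the matrix Bernstein inequality to the centered sum $S = A^{\top}A - N I_n$. With variance parameter $Nm$ and uniform summand norm $m$, the standard bound yields $\|S\| \le C\bigl(\sqrt{Nm}\,t + m t^2\bigr)$ with probability at least $1 - 2n\exp(-c t^2)$ for absolute constants $c, C>0$. The dimensional prefactor $2n$ comes exactly from the ambient dimension of $A^{\top}A$, which is the source of the factor $2n$ in the statement.

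Finally I would translate back to singular values. On the good event, every eigenvalue of $A^{\top}A$ lies in $\bigl[N - C(\sqrt{Nm}\,t + m t^2),\, N + C(\sqrt{Nm}\,t + m t^2)\bigr]$; using the elementary estimates $\sqrt{N \pm x} \le \sqrt{N} \pm x/(2\sqrt{N})$ (valid whenever $x \le N$) and absorbing constants into $c$, this produces $\sqrt{N} - t\sqrt{m} \le \sigma_{\min}(A) \le \sigma_{\max}(A) \le \sqrt{N} + t\sqrt{m}$, as desired. The main obstacle is selecting the correct variant of matrix Bernstein so that the dimensional factor is exactly $n$ (the ambient dimension of $A^{\top}A$) rather than $N$, and checking that the variance-dominated regime $t \lesssim \sqrt{N/m}$ produces the clean $t\sqrt{m}$ scaling in the final bound; outside that regime the claimed lower bound on $\sigma_{\min}$ is either trivial or vacuous, so no separate argument is needed.
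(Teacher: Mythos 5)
This statement is not proved in the paper at all: it is imported verbatim as Theorem 5.41 of \citet{vershynin2010introduction} and used as a black box for Corollary \ref{cor.singval.heavy}, so there is no in-paper proof to compare against. Your proposal essentially reconstructs the original proof from Vershynin's notes: write $A^{\top}A=\sum_{i=1}^{N}A_iA_i^{\top}$, use isotropy to get $\mathbb{E}[A^{\top}A]=NI_n$, bound each summand in spectral norm by $m$ and the total variance by $Nm$ via $\|A_i\|_2^2\le m$, and apply the non-commutative (matrix) Bernstein inequality, whose ambient dimension $n$ is indeed the source of the prefactor $2n$ in the failure probability. All of that is correct and is exactly the intended route.

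The one place to tighten is the final translation from eigenvalues of $A^{\top}A$ back to singular values. The estimate you invoke, $\sqrt{N-x}\le\sqrt{N}-x/(2\sqrt{N})$, goes the wrong way for the $\sigma_{\min}$ side (you need a lower bound on $\sqrt{N-x}$, e.g.\ $\sqrt{N-x}\ge\sqrt{N}-x/\sqrt{N}$), and your remark that outside the regime $t\lesssim\sqrt{N/m}$ nothing more is needed only disposes of $\sigma_{\min}$: the upper bound on $\sigma_{\max}$ still has to be verified when the $mt^2$ term dominates, since then $C\,mt^2/(2\sqrt{N})$ can exceed $t\sqrt{m}$. Both issues vanish if you finish the way Vershynin does, with his Lemma 5.36 (equivalently, the scalar fact that $|z^2-1|\le\max(\delta,\delta^2)$ for $z\ge0$ implies $|z-1|\le\delta$): setting $\delta=t\sqrt{m/N}$, your Bernstein bound reads $\|N^{-1}A^{\top}A-I_n\|\le C\,(\delta+\delta^2)\le 2C\max(\delta,\delta^2)$ with probability at least $1-2n\exp(-ct^2)$, which gives $\sqrt{N}(1-C'\delta)\le\sigma_{\min}(A)\le\sigma_{\max}(A)\le\sqrt{N}(1+C'\delta)$ in both regimes at once, and the constant $C'$ is absorbed into the absolute constant $c$ by rescaling $t$. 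With that substitution your argument is complete and coincides with the cited source's proof.
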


\begin{corollary}[Singular Value Concentration with Heavy-tail Noise]\label{cor.singval.heavy}
Let the noise terms $E_{i,t}$ follow a heavy-tailed distribution.
If $r<T$ and $n_A < n + 4Tt^2 -4 t\sqrt{nT}$, then for every $t\geq 0$, with probability at least $1-2T e^{-ct^2}$,
\[\mathbb{E}[\sigma^*_X - \sigma^*_A]  \geq \sqrt{n}-\sqrt{n_A} - 2t\sqrt{T}.\]
\end{corollary}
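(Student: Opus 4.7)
\textbf{Proof proposal for Corollary \ref{cor.singval.heavy}.}
The plan is to mirror the two-step argument used for Theorem \ref{thm.singval.gauss}, but to replace the Gordon-type concentration bounds with the heavy-tailed bound from Theorem \ref{thm.vershynin.541}. The structural inequalities (Weyl for singular values, Weyl for eigenvalues, and the identity $\sigma^*_X=\sqrt{\lambda_{r+1}(X^\top X)}$) carry over verbatim, so the only new ingredient is how the random noise matrices $E$ and $E_S$ concentrate in the heavy-tailed regime.

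First, I would upper-bound $\sigma^*_A$. By Weyl's inequality on singular values and $\sigma_{r_S+1}(S)=0$, we have $\sigma^*_A = \sigma_{r_S+1}(S+E_S) \leq \sigma_1(E_S)$. Since $E_S$ is $n_A\times T$, I apply Theorem \ref{thm.vershynin.541} to $E_S$ with $N=n_A$ and dimension $T$, yielding $\sigma_1(E_S)\leq \sqrt{n_A}+t\sqrt{T}$ with probability at least $1 - 2T\exp(-ct^2)$. Next, I would lower-bound $\sigma^*_X$. Following the same derivation as in Theorem \ref{thm.singval.gauss}, Weyl on eigenvalues gives $\lambda_{r+1}(X^\top X)\geq \lambda_T(E^\top E) = \sigma_T(E)^2$, and a second application of Theorem \ref{thm.vershynin.541} to $E$ ($N=n$, dimension $T$) yields $\sigma_T(E)\geq \sqrt{n}-t\sqrt{T}$ with probability at least $1 - 2T\exp(-ct^2)$.

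Combining the two high-probability events by a union bound (absorbed into the constant $c$ so that the final probability statement matches $1 - 2T\exp(-ct^2)$), I obtain
\[
\sigma^*_X - \sigma^*_A \;\geq\; (\sqrt{n} - t\sqrt{T}) - (\sqrt{n_A} + t\sqrt{T}) \;=\; \sqrt{n}-\sqrt{n_A}-2t\sqrt{T},
\]
which is the desired gap. (If the statement is read as an expected gap, one additionally takes expectations on the good event, since the factor $\mathbb{E}$ is inherited from the form of Theorem \ref{thm.singval.gauss}.) The condition $n_A < n + 4Tt^2 - 4t\sqrt{nT}$ is exactly the rearrangement of $\sqrt{n_A} < \sqrt{n} - 2t\sqrt{T}$, which is what guarantees the bound is non-vacuous.

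The only real obstacle is checking the hypotheses of Theorem \ref{thm.vershynin.541}: the rows of the noise matrices must be independent isotropic random vectors with an almost-sure norm bound $\sqrt{m}$. Independence across rows follows from the entrywise independence already assumed in the model; isotropy holds after standardizing so $\mathrm{Var}(E_{i,t})=1$ (any scalar $s$ can be absorbed into the statement); and the almost-sure norm bound is where the ``heavy-tailed but bounded-per-row'' flavor of Theorem \ref{thm.vershynin.541} enters, giving the $\sqrt{T}$ factor in the deviation term. Once these preliminaries are dispatched, the rest of the proof is a direct transcription of the Gaussian argument with the Gordon bounds swapped out.
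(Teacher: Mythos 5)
Your proposal is correct and matches the paper's intended argument exactly: the paper proves this corollary by rerunning the proof of Theorem \ref{thm.singval.gauss} with Theorem \ref{thm.vershynin.541} substituted for Gordon's theorem, which is precisely what you do. The only quibbles are cosmetic ones you already flag yourself (the union bound over the two events nominally gives $1-4T\exp(-ct^2)$ rather than $1-2T\exp(-ct^2)$, and the scale $s$ and the stray expectation in the statement must be absorbed by the isotropy normalization), so no substantive gap remains.
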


In Section \ref{sec.theory.improve}, the bound of Theorem \ref{thm.singval.gauss} will be used to show improvement in SC performance from using ClusterSC in the presence of Gaussian noise terms; if needed, one could instead adopt Corollary \ref{cor.singval.subgauss} and Corollary \ref{cor.singval.heavy} depending on the relevant assumptions about the noise distributions (sug-Gaussian or heavy-tailed) in a given application.

\subsection{Improvement in SC Performance}\label{sec.theory.improve}

Finally, we translate the effect of subgroup specialization induced by ClusterSC into an improvement in the upper bound of the prediction error. 
We compare the performance of ClusterSC, which uses only the selected donors $A$ as an input for $\mathcal{M}$, against the performance of SC using the whole donor pool $X$, and give results for pre-intervention (Theorem \ref{thm.pre.bound}) and post-intervention error (Theorem \ref{thm.post.bound}).

Let $x_0 = m_0 + e_0$ be the placebo target unit that did not receive the intervention. Then, our goal is to construct an SC prediction $\hat{m}_0$ that approximate $m_0$ as accurately as possible.

Our first main result in this section is Theorem \ref{thm.pre.bound}, which bounds the improvement in pre-intervention MSE from using the selected donor pool $A$ instead of the full donor pool $X$. To do this, Lemma \ref{lem.pre.mse} first gives an upper bound on the pre-intervention MSE of standard synthetic control without clustering (i.e., Algorithm \ref{alg.sc.core}).

\begin{lemma}[Pre-intervention MSE of SC]\label{lem.pre.mse}
Given donor matrix $X\in \mathbb{R}^{n \times T}$, target unit $x_0=m_0+e_0$, rank parameter $r$, noise distribution  $E_{i,t} \sim \mathcal{N}(0, s^2)$, and SC weights $\hat{f} \leftarrow \mathcal{M}(X, x_0^-;r)$ learned using Algorithm \ref{alg.sc.core}, then,
\begin{align*}
    \MSE(\hat{m}_0^-;X)\leq \frac{\mu^2}{T_0}\mathbb{E}[ ( \sigma^*_X + 2s(\sqrt{n} + \sqrt{T})) ^2 ] + \frac{2s^2r}{T_0}.
\end{align*}
\end{lemma}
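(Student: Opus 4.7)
The plan is to apply a bias--variance decomposition to the OLS step inside $\mathcal{M}$. Since $\hat{M}^-$ has rank at most $r$, the minimizer $\hat{f}$ makes $\hat{M}^{-\top}\hat{f}$ equal to the orthogonal projection $P\,x_0^-$ of $x_0^-$ onto $\mathrm{range}(\hat{M}^{-\top})\subseteq \mathbb{R}^{T_0}$, a subspace of dimension at most $r$. Substituting $x_0^- = m_0^- + e_0^-$ produces
\begin{equation*}
m_0^- - \hat{M}^{-\top}\hat{f} \;=\; (I-P)m_0^- \;-\; P\,e_0^-,
\end{equation*}
and since these two summands lie in orthogonal subspaces (one in $\mathrm{range}(I-P)$, the other in $\mathrm{range}(P)$), Pythagoras gives the exact decomposition
\begin{equation*}
\|m_0^- - \hat{M}^{-\top}\hat{f}\|^2 \;=\; \|(I-P)m_0^-\|^2 \;+\; \|P\,e_0^-\|^2.
\end{equation*}

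For the \textbf{bias} term, I would use the model assumption $m_0 = M^\top f^*$ with $\|f^*\|\le \mu$ together with the identity $(I-P)\hat{M}^{-\top}=0$ to rewrite $(I-P)m_0^- = (I-P)(M^{-\top} - \hat{M}^{-\top})f^*$, yielding $\|(I-P)m_0^-\| \le \mu\,\|M-\hat{M}\|$. Since $\hat{M}$ is the rank-$r$ truncation of $X$, we have $\|X-\hat{M}\| = \sigma_{r+1}(X) = \sigma^*_X$, so the triangle inequality gives
\begin{equation*}
\|M - \hat{M}\| \;\le\; \|M - X\| + \|X - \hat{M}\| \;=\; \|E\| + \sigma^*_X .
\end{equation*}
For Gaussian $E$, Gordon's theorem controls $\mathbb{E}\|E\| \le s(\sqrt{n}+\sqrt{T})$, and a standard second-moment/concentration argument upgrades this to an effective replacement $\|E\| \mapsto 2s(\sqrt{n}+\sqrt{T})$ inside the squared expectation, producing the bias contribution $\mu^2\,\mathbb{E}[(\sigma^*_X + 2s(\sqrt{n}+\sqrt{T}))^2]/T_0$.

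For the \textbf{variance} term, $e_0^-$ is independent of the donor noise $E$ and therefore of $P$ (which is measurable with respect to $X$). Conditioning on $X$ and using $e_0^- \sim \mathcal{N}(0, s^2 I_{T_0})$,
\begin{equation*}
\mathbb{E}\!\left[\|P e_0^-\|^2 \,\bigm|\, X\right] \;=\; s^2\,\mathrm{tr}(P) \;\le\; s^2\, r ,
\end{equation*}
since $P$ is an orthogonal projector of rank at most $r$. Taking outer expectations, dividing by $T_0$, and summing the two pieces produces the stated bound; the factor of $2$ multiplying $s^2 r/T_0$ absorbs the slack from combining the bias and variance contributions (e.g., an AM--GM step used when going from the Pythagorean split to the final inequality).

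The routine parts (Pythagoras, the HSVT triangle inequality, and $\mathrm{tr}(P)\le r$) are mechanical given the setup in Section \ref{sec.model} and Algorithm \ref{alg.sc.core}. The main obstacle is the bookkeeping for the noise term on the bias side: rigorously replacing the random $\|E\|$ by the deterministic quantity $2s(\sqrt{n}+\sqrt{T})$ inside the squared expectation requires either a ``good event'' argument (conditioning on $\|E\|\le 2s(\sqrt{n}+\sqrt{T})$, which holds with high probability by Gaussian concentration of the spectral norm around its Gordon mean) or a direct $L^2$ bound using the Gaussian concentration inequality for 1-Lipschitz functions of $E$. One also has to be mindful that $\sigma^*_X$ and $\|E\|$ are correlated, so the square is kept intact rather than expanded before taking expectations.
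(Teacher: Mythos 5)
Your proposal is correct in substance, but it reaches the bound by a genuinely different route for the key step. The paper invokes Lemma \ref{lemma.pre.mse.universal} (Lemma 25 of \cite{rsc}), whose proof exploits the optimality of $\hat{f}$ and a trace computation with the pseudoinverse $\hat{Q}\hat{Q}^\dagger$ to produce the additive $2s^2 r$ term, and then bounds $\|M^- - \hat{M}^-\|$ via Lemma \ref{lemma:general_threshold}, giving $\sigma^*_X + 2\|E\|$. You instead argue directly: writing $\hat{M}^{-\top}\hat{f} = P x_0^-$ with $P$ the orthogonal projector onto $\mathrm{range}(\hat{M}^{-\top})$ (rank at most $r$), the Pythagorean split $\|m_0^- - \hat{M}^{-\top}\hat{f}\|^2 = \|(I-P)m_0^-\|^2 + \|P e_0^-\|^2$ is exact, the variance term is $\mathbb{E}[\|Pe_0^-\|^2\mid X] \le s^2 r$ by independence of the target noise from $X$, and the bias term is handled with $(I-P)\hat{M}^{-\top}=0$, $\|f^*\|\le\mu$, and Eckart--Young ($\|X-\hat{M}\| = \sigma^*_X$), yielding $\sigma^*_X + \|E\|$. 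This is a cleaner, self-contained derivation that is in fact slightly tighter than the paper's ($s^2 r$ vs.\ $2s^2 r$, and a coefficient $1$ rather than $2$ on $\|E\|$), and it trivially sits below the stated bound; your aside about an AM--GM step being needed to reach the factor $2$ on $s^2 r/T_0$ is unnecessary, since your decomposition already gives the smaller constant. The one step you flag as the main obstacle --- replacing the random $\|E\|$ by $2s(\sqrt{n}+\sqrt{T})$ inside the squared expectation, given that $\sigma^*_X$ and $\|E\|$ are correlated --- is exactly the step the paper itself performs with only the remark that $\mathbb{E}\|E\| \le s(\sqrt{n}+\sqrt{T})$, so you are not missing anything the paper supplies; your suggested fixes (a good-event/concentration argument, or an $L^2$ bound via the $1$-Lipschitz property of the spectral norm, possibly combined with Weyl's inequality giving $\sigma^*_X \le \|E\|$) would actually make this step more rigorous than the published proof.
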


\begin{proof}
From Lemma \ref{lemma.pre.mse.universal} presented in Appendix \ref{app.useful.3},
\begin{align}\label{eq.lem21.0}
    \mathbb{E}[\|m_0^--\hat{m}_0^-\|^2] \leq \mathbb{E}[\|(M^- - \hat{M}^-)^\top f^*\|^2]+2s^2r.
\end{align}
We bound the first term inside the expectation by
\begin{align}\label{eq.lem21.1}
    \|(M^- - \hat{M}^-)^\top f^*\|^2 \leq \|M^- - \hat{M}^-\|^2 \| f^*\|^2, 
\end{align}
using the property of the operator norm: $\|Ax\| \leq \|A\|\cdot\|x\|$ for any matrix $A$ and vector $x$.
We bound the first term of \eqref{eq.lem21.1} by
\begin{align}
\|M^- - \hat{M}^-\| \leq \|M - \hat{M}\| &\leq \sigma^*_X + 2\|X-M\| \nonumber\\
&\leq \sigma^*_X + 2\|E\|. \label{eq.lem21.2}
\end{align}
Combing these bounds and the assumption $\|f^*\|\leq \mu$, we obtain 
\begin{align*}
    \MSE(\hat{m}_0^-;X) 
    &= \mathbb{E}[\frac{1}{T_0}\| m_0^- - \hat{M}^{-\top}\hat{f}\|^2] \\
    &= \frac{1}{T_0}\mathbb{E}[\| m_0^- - \hat{m}_0^-\|^2] \\
    &\leq \frac{1}{T_0} \mathbb{E}\left[ (\sigma^*_X + 2\|E\|)^2 \right] \mu^2 + \frac{2s^2r}{T_0}. && \mbox{(by Equations \eqref{eq.lem21.0}, \eqref{eq.lem21.1}, and \eqref{eq.lem21.2})}
\end{align*}
Using the fact that $\mathbb{E}[\|E\|] \leq s(\sqrt{n} + \sqrt{T})$ completes the proof.
\end{proof}

Combining Lemma \ref{lem.pre.mse} with the bounds on singular values in Theorem \ref{thm.singval.gauss} allows us to show that the upper bound on pre-intervention MSE decreases when using selected donor pool $A$  instead of the full donor pool $X$ (Theorem \ref{thm.pre.bound}).

\begin{restatable}{theorem}
{thmprebound}\label{thm.pre.bound}
If $n_A < n + 4T -4 \sqrt{nT}$, then the upper bound on pre-intervention MSE of Algorithm \ref{alg.csc} is strictly smaller than that of Algorithm \ref{alg.sc.core}, and the difference in the upper bounds is $\Omega(s^2 n)$.    
\end{restatable}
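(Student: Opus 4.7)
The plan is to apply Lemma \ref{lem.pre.mse} to both donor matrices (the full $X$ and the ClusterSC sub-matrix $A \in \mathbb{R}^{n_A \times T}$), subtract the two resulting upper bounds, and lower bound the gap using the singular-value estimates from the proof of Theorem \ref{thm.singval.gauss}.

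First I would instantiate Lemma \ref{lem.pre.mse} with $A$ in place of $X$ (so $n \mapsto n_A$, $r \mapsto r_S = \rank(S) \leq r$, and $\sigma^*_X \mapsto \sigma^*_A$). Subtracting from the Lemma \ref{lem.pre.mse} bound for $X$ gives
\[
D = \frac{\mu^2}{T_0}\,\mathbb{E}\!\left[(\sigma^*_X + C_n)^2 - (\sigma^*_A + C_{n_A})^2\right] + \frac{2s^2(r - r_S)}{T_0},
\]
where $C_m := 2s(\sqrt{m} + \sqrt{T})$ for $m \in \{n, n_A\}$. Since $r_S \leq r$, the second term is nonnegative and can be dropped for a lower bound.

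Next I would bound the two squared expectations in opposite directions. For the $X$-term, Jensen's inequality together with the mean lower bound $\mathbb{E}[\sigma^*_X] \geq s(\sqrt{n} - \sqrt{T})$ derived inside Theorem \ref{thm.singval.gauss} gives $\mathbb{E}[(\sigma^*_X + C_n)^2] \geq (s(3\sqrt{n} + \sqrt{T}))^2$. For the $A$-term I would use the pathwise bound $\sigma^*_A \leq \|E_S\|$ from the same proof, and then Gaussian-matrix concentration of the spectral norm (mean at most $s(\sqrt{n_A} + \sqrt{T})$ with $O(s^2)$-sub-Gaussian fluctuations, hence second moment at most $s^2(\sqrt{n_A} + \sqrt{T})^2 + O(s^2)$) to obtain $\mathbb{E}[(\sigma^*_A + C_{n_A})^2] \leq 9s^2(\sqrt{n_A} + \sqrt{T})^2 + O(s^2)$. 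Subtracting,
\[
(3\sqrt{n} + \sqrt{T})^2 - 9(\sqrt{n_A} + \sqrt{T})^2 = 9(n - n_A) + 6\sqrt{nT} - 18\sqrt{n_A T} - 8T,
\]
and under the hypothesis $n_A < n + 4T - 4\sqrt{nT}$ combined with the paper's operating regime $n \gg T$, the $9(n - n_A)$ piece dominates, delivering $D = \Omega(s^2 n)$.

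The principal obstacle is the upper bound on $\mathbb{E}[(\sigma^*_A + C_{n_A})^2]$: convexity of $x \mapsto x^2$ makes Jensen point the wrong way, so the mean estimate $\mathbb{E}[\sigma^*_A] \leq s(\sqrt{n_A} + \sqrt{T})$ is not by itself sufficient and one has to import a genuine second-moment concentration bound on $\|E_S\|$. A secondary subtlety worth flagging is that the stated hypothesis alone only guarantees $D > 0$; lifting this to $\Omega(s^2 n)$ implicitly uses that in the clustering regime $n - n_A$ is itself of order $n$ (as when $A$ is one of $k \geq 2$ balanced subgroups), so that the $\sqrt{nT}$- and $T$-order correction terms are absorbed.
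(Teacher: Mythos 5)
Your proposal is correct and follows the same skeleton as the paper's proof: instantiate Lemma \ref{lem.pre.mse} for both donor pools, subtract, drop the nonnegative $\tfrac{2s^2(r-r_S)}{T_0}$ term, feed in the singular-value estimates behind Theorem \ref{thm.singval.gauss}, and let the $s^2 n$ term dominate under $n \gg T$. Where you diverge is the middle technical step. The paper keeps the two squared bounds together, writes $\sigma_X^{*2}-\sigma_A^{*2}=(\sigma^*_X+\sigma^*_A)(\sigma^*_X-\sigma^*_A)$, parameterizes $n_A=\alpha^2 n$ with $\alpha\in(0,1)$, and lower-bounds the expected difference by invoking $\mathbb{E}[\sigma^*_X-\sigma^*_A]\geq s(\sqrt{n}-\sqrt{n_A}-2\sqrt{T})$ from Theorem \ref{thm.singval.gauss}, identifying $3(\alpha+3)(1-\alpha)s^2n$ as the dominating term; it never needs a second moment of the noise spectral norm. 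You instead bound the two expectations separately: Jensen plus $\mathbb{E}[\sigma^*_X]\geq s(\sqrt{n}-\sqrt{T})$ on the $X$ side (fine, since that mean bound follows pathwise from $\sigma^*_X\geq\sigma_T(E)$ and Gordon's theorem), and on the $A$ side the pathwise bound $\sigma^*_A\leq\|E_S\|$ together with a variance/second-moment bound on $\|E_S\|$ via Gaussian Lipschitz concentration. That extra ingredient is standard and correct, and it buys you a cleaner treatment---you avoid taking expectations of products of correlated random quantities, which the paper's displayed manipulation of $\mathbb{E}[\Delta]$ handles rather loosely---at the price of importing a tool outside the paper's stated toolkit. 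Your two flagged caveats are exactly the paper's implicit assumptions: the $\Omega(s^2 n)$ rate there likewise relies on $n\gg T$ and on $\alpha$ (equivalently $n-n_A=\Omega(n)$) being bounded away from the degenerate case, so your accounting of what the hypothesis $n_A < n+4T-4\sqrt{nT}$ alone does and does not give is accurate.
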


Next, we analyze the post-intervention root mean squared error (RMSE), and show similar improvements when changing from $X$ to $A$ (Theorem \ref{thm.post.bound}). First, Lemma \ref{lemma.post.rmse} gives an upper bound on the post-intervention error of SC without clustering (Algorithm \ref{alg.sc.core}), under the standard assumption that the SC weights $\hat{f} \leftarrow \mathcal{M}(X, x_0^-;r)$ satisfy $\|\hat{f}\|_2 \leq \eta$ for some $\eta \geq 0$ \citep{rsc}.

\begin{lemma}[Post-intervention RMSE of SC]\label{lemma.post.rmse}
Given a donor matrix $X\in \mathbb{R}^{n \times T}$, a target $x_0$, rank parameter $r$, noise distribution $E_{i,t} \sim \mathcal{N}(0, s^2)$, and SC weights $\hat{f} \leftarrow \mathcal{M}(X, x_0^-;r)$ learned using Algorithm \ref{alg.sc.core},
\[
    \RMSE(\hat{m}_0^+;X) 
    \leq \frac{\eta}{\sqrt{T-T_0}}\mathbb{E}[\sigma^*_X + 2s(\sqrt{n}+\sqrt{T}) ] \sqrt{n}(\mu + \eta).
\]
\end{lemma}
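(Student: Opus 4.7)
The plan is to parallel the proof of Lemma \ref{lem.pre.mse}, but applied to the post-intervention block $M^+$ and with the additional complication that $\hat{f}$ is only known to satisfy $\|\hat{f}\| \leq \eta$ rather than equal $f^*$. Starting from $m_0^+ = M^{+\top}f^*$, I would add and subtract $M^{+\top}\hat{f}$ inside the residual to get the decomposition
\begin{align*}
m_0^+ - \hat{M}^{+\top}\hat{f} \;=\; M^{+\top}(f^* - \hat{f}) \;+\; (M^+ - \hat{M}^+)^\top \hat{f}.
\end{align*}
Triangle inequality followed by the operator-norm bound $\|Av\| \leq \|A\|\|v\|$ applied to each summand then yields
\begin{align*}
\|m_0^+ - \hat{M}^{+\top}\hat{f}\| \;\leq\; \|M^+\|\,\|f^* - \hat{f}\| \;+\; \|M^+ - \hat{M}^+\|\,\|\hat{f}\|.
\end{align*}

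Next I would control each of the four factors individually using ingredients already in hand. The vector norms are immediate: $\|f^* - \hat{f}\| \leq \|f^*\| + \|\hat{f}\| \leq \mu + \eta$ by triangle inequality and the standing hypotheses, while $\|\hat{f}\| \leq \eta$ by assumption. For the signal-block norm, the model assumption $M_{i,t} \in [-1,1]$ gives $\|M^+\| \leq \|M^+\|_F \leq \sqrt{n(T-T_0)}$; this is the source of the $\sqrt{n}$ once the $1/\sqrt{T-T_0}$ normalization from the RMSE is applied. For the perturbation term, the same Weyl-inequality chain used in the proof of Lemma \ref{lem.pre.mse} (equation \eqref{eq.lem21.2}) gives $\|M^+ - \hat{M}^+\| \leq \|M - \hat{M}\| \leq \sigma^*_X + 2\|E\|$.

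Finally, I would divide through by $\sqrt{T-T_0}$ to pass to RMSE, invoke Gordon's theorem $\mathbb{E}[\|E\|] \leq s(\sqrt{n}+\sqrt{T})$ on the noise term, and use linearity of expectation (noting that RMSE here is defined with the square root \emph{inside} the expectation, so no Jensen step is needed). The first summand collapses to $\sqrt{n}(\mu+\eta)$ after the $\sqrt{T-T_0}$ factors cancel, and the second collapses to $\frac{\eta}{\sqrt{T-T_0}} \mathbb{E}[\sigma^*_X + 2s(\sqrt{n}+\sqrt{T})]$, which assembles into the stated bound.

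The main bookkeeping obstacle is tracking where the $\sqrt{n}$ and $\sqrt{T-T_0}$ factors enter: the bound on $\|M^+\|$ carries both, and only the specific RMSE normalization (division by $\sqrt{T-T_0}$ \emph{inside} the expectation) makes the final expression collapse to a clean $\sqrt{n}(\mu + \eta)$ term. A secondary subtlety is that the hypothesis $\|\hat{f}\| \leq \eta$ must be imposed as a modeling assumption (as noted in the statement), since the OLS output of Algorithm \ref{alg.sc.core} is not automatically norm-bounded without additional regularization or well-conditioning of $\hat M^-$.
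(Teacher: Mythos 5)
Your proposal is correct and takes essentially the same route as the paper's own proof: the identical decomposition $m_0^+ - \hat{M}^{+\top}\hat{f} = M^{+\top}(f^*-\hat{f}) + (M^+-\hat{M}^+)^\top\hat{f}$, the operator-norm and Frobenius bounds $\|M^+\|\leq\|M^+\|_F\leq\sqrt{n(T-T_0)}$, the HSVT perturbation bound $\|M^+-\hat{M}^+\|\leq\sigma^*_X+2\|E\|$, Gordon's theorem, and the assumptions $\|f^*\|\leq\mu$, $\|\hat{f}\|\leq\eta$. Note that both your argument and the paper's yield the additive bound $\frac{\eta}{\sqrt{T-T_0}}\mathbb{E}[\sigma^*_X+2s(\sqrt{n}+\sqrt{T})] + \sqrt{n}(\mu+\eta)$, so the apparent product in the lemma statement is just a missing plus sign.
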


\begin{proof}
We use triangle inequality and the property of induced norm to upper bound the following quantity:
\begin{align*}
    \|m_0^+-\hat{m}_0^+\| &= \|(M^+)^\top f^* - (\hat{M}^+)^\top \hat{f} \| \\
    &\leq \|(M^+ - \hat{M}^+)^\top \hat{f} \| + \|(M^+)^\top (f^* -\hat{f} )\|\\
    &\leq \|M^+ - \hat{M}^+\| \cdot \|\hat{f} \| + \|M^+\|\cdot\|f^* -\hat{f}\|\\
    &\leq \|M^+ - \hat{M}^+\|  \eta + \|M^+\|_F (\|f^*\| + \|\hat{f}\|).
\end{align*}
Taking the expectation of both sides and applying Lemma \ref{lemma:general_threshold}, which shows that $\mathbb{E}[\|M^+ - \hat{M}^+\|] \leq\mathbb{E}[\sigma^*_X + 2\|E\|_2]$, and the assumptions that $\|f^*\|\leq \mu$ and $\|\hat{f}\|\leq \eta$, yields:
\begin{align*}
    \mathbb{E}[\|m_0^+-\hat{m}_0^+\|] &\leq \mathbb{E}[\sigma^*_X + 2\|E\|_2]  \eta + \|M^+\|_F(\mu+\eta).
\end{align*}
Since $\|M^+\|_F \leq \sqrt{n(T-T_0)}$, we obtain, 
\begin{align}\label{eq.lem23.1}
    \frac{1}{\sqrt{T-T_0}}\mathbb{E}[\|m_0^+-\hat{m}_0^+\|] &\leq \frac{\eta}{\sqrt{T-T_0}}\mathbb{E}[\sigma^*_X + 2\|E\|_2]  + \sqrt{n}(\mu+\eta).
\end{align}
Finally, we can bound the post-intervention RMSE as follows.
\begin{align*}
    \RMSE(\hat{m}_0^+;X) 
    &= \mathbb{E}[\frac{1}{\sqrt{T-T_0}}\| m^+ - \hat{M}^{+\top}\hat{f}\|]\\
    &= \frac{1}{\sqrt{T-T_0}}\mathbb{E}[\| m_0^+ - \hat m_0^+\|]\\
    &\leq \frac{\eta}{\sqrt{T-T_0}} \mathbb{E}[\sigma^*_X + 2\|E\|_2]   + \sqrt{n}(\mu+\eta). && \mbox{(by Equation \eqref{eq.lem23.1})} \qedhere
\end{align*}
\end{proof}

The bound on RMSE from Lemma \ref{lemma.post.rmse} can be combined with the bound on the difference in singular values from Theorem \ref{thm.singval.gauss} to analyze the difference in post-intervention error between using $A$ versus $X$ as the donor matrix.
Again, the upper bound stated in Lemma \ref{lemma.post.rmse} has three elements that changes when the donor matrix becomes $A$ instead of $X$: $\sigma^*_X$ to $\sigma^*_A$, $n$ to $n_A$, and $r$ to $r_S$. All three changes reduce the bound, and hence the upper bound on post-intervention error strictly decreases when using Algorithm \ref{alg.csc}.
Theorem \ref{thm.post.bound} gives a lower bound on the improvement of the post-intervention error bound.

\begin{restatable}{theorem}{postbound}\label{thm.post.bound}
If $n_A < n + 4T -4 \sqrt{nT}$, then 
the upper bound on post-intervention RMSE of Algorithm \ref{alg.csc} is strictly smaller than that of Algorithm \ref{alg.sc.core}, and the difference in the upper bounds is  $\Omega(s\sqrt{n})$.
\end{restatable}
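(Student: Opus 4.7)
The plan is to invoke Lemma \ref{lemma.post.rmse} twice---once with donor pool $X$ and once with the selected subset $A$---subtract the resulting upper bounds, and then substitute the singular value gap supplied by Theorem \ref{thm.singval.gauss}. Writing $U_X$ and $U_A$ for the two RMSE upper bounds, three quantities differ between them: $\sigma^*_X$ vs.\ $\sigma^*_A$, $\sqrt{n}$ vs.\ $\sqrt{n_A}$ inside the expected spectral-norm term, and the additive $\sqrt{n}(\mu+\eta)$ vs.\ $\sqrt{n_A}(\mu+\eta)$ (the implicit rank $r$ likewise shrinks to $r_S$, which can only further reduce the bound). Collecting terms,
\[
U_X - U_A \;=\; \tfrac{\eta}{\sqrt{T-T_0}}\bigl(\mathbb{E}[\sigma^*_X - \sigma^*_A] + 2s(\sqrt{n}-\sqrt{n_A})\bigr) + (\sqrt{n}-\sqrt{n_A})(\mu+\eta).
\]

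Next I would substitute the lower bound $\mathbb{E}[\sigma^*_X - \sigma^*_A] \geq s(\sqrt{n}-\sqrt{n_A}-2\sqrt{T})$ from Theorem \ref{thm.singval.gauss}, yielding
\[
U_X - U_A \;\geq\; \tfrac{\eta s}{\sqrt{T-T_0}}\bigl(3(\sqrt{n}-\sqrt{n_A}) - 2\sqrt{T}\bigr) + (\sqrt{n}-\sqrt{n_A})(\mu+\eta).
\]
The hypothesis $n_A < n + 4T - 4\sqrt{nT} = (\sqrt{n}-2\sqrt{T})^2$ rearranges to $\sqrt{n}-\sqrt{n_A} > 2\sqrt{T}$, so the bracketed factor exceeds $4\sqrt{T}$ and both summands on the right are strictly positive. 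This establishes the strict-improvement half of the claim.

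For the $\Omega(s\sqrt{n})$ magnitude I would isolate the contribution $2s(\sqrt{n}-\sqrt{n_A})\,\eta/\sqrt{T-T_0}$ from the display above. In the intended cluster regime with $k \geq 2$ non-degenerate clusters, the target's cluster $C_t$ excludes a constant fraction of the donor pool, so $n_A \leq \alpha n$ for some fixed $\alpha < 1$, forcing $\sqrt{n}-\sqrt{n_A} = \Omega(\sqrt{n})$; absorbing $\eta/\sqrt{T-T_0}$ into the hidden constant then delivers the $\Omega(s\sqrt{n})$ improvement. The main obstacle is precisely this last step: the stated hypothesis alone only guarantees $\sqrt{n}-\sqrt{n_A} > 2\sqrt{T}$, which on its own yields only $\Omega(s\sqrt{T})$. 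Upgrading to $\Omega(s\sqrt{n})$ requires the implicit clustering assumption that $n_A$ is bounded away from $n$ by a constant fraction---or, equivalently, reading the $\Omega(s\sqrt{n})$ claim as the asymptotic gap between the leading $O(s\sqrt{n})$ term of $U_X$ and the analogous $O(s\sqrt{n_A})$ term of $U_A$. Making this justification clean, without introducing new hypotheses, is the delicate bookkeeping part of the proof.
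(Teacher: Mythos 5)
Your proposal is correct and follows essentially the same route as the paper: invoke Lemma \ref{lemma.post.rmse} under $X$ and under $A$, subtract, and lower-bound $\mathbb{E}[\sigma^*_X - \sigma^*_A]$ via Theorem \ref{thm.singval.gauss}, arriving at the identical expression $\frac{\eta s}{\sqrt{T-T_0}}\bigl(3(\sqrt{n}-\sqrt{n_A})-2\sqrt{T}\bigr) + (\sqrt{n}-\sqrt{n_A})(\mu+\eta)$. The ``delicate bookkeeping'' you flag is resolved in the paper exactly the way you anticipate: its proof opens by setting $n_A = \alpha^2 n$ for a fixed $\alpha \in (0,1)$ and uses $n \gg T$ with $\mu,\eta$ constant, i.e., it treats the constant-fraction cluster size as an implicit standing assumption rather than deriving $\Omega(s\sqrt{n})$ from the hypothesis $n_A < n + 4T - 4\sqrt{nT}$ alone.
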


\section{Empirical Evaluations}\label{sec.empirical}

In this section, we test various design choices of ClusterSC on simulated datasets (Section \ref{sec.simu}), and demonstrate ClusterSC on a real-world dataset (Section \ref{sec.realworld}).

\subsection{Evaluation on Synthetic Datasets}\label{sec.simu}

To estimate a realistic size for the synthetic dataset, we turn to the literature that has applied SC on disaggregated datasets.
\cite{abadie2021penalized} adopted SC to measure the effect of participation in a government program on an individual's yearly income. They constructed SC instances out of $n=2490$ individuals as a donor pool and with $10$ covariates (equivalent to $T_0$).
\cite{robbins2017framework} examined the effect of a crime intervention on crime levels measured at the census block level. With $3535$ donor units, SC was constructed with $T_0=12$ pre-intervention time-series measurements along with auxiliary variables.
\cite{vagni2021earnings} showed that having a child reduces womens' earnings by constructing SC with $n=630$ women as donors.
$T_0$ varied depending on the woman's first childbirth year, and was at most $7$.

Based on this, we choose $T=10$ and $n \in \{1000, 2000\}$ and set $T_0 = 8$.
We construct a dataset $X$ with two subgroups $A = S + E_A$ and $B = S' + E_B$, with even split ($n_A/n_B =1$).
The signal $S$ (or $S'$) is made by sum of multiple sinusoidal time series.
Let the rank of $S$ be $r_S$. Then, we sample three parameters—$\alpha_i$ (magnitude), $\omega_i$ (frequency), and $\phi_i$ (delay)—to generate a sine wave signal $v_{i,t} = \alpha_i \sin{(2\pi\omega_i t + \phi_i)}, \; \forall i \in [r_S]$. These parameters were independently sampled from the following distributions: $\alpha_i \sim \Beta(2,2)$, $\omega_i \sim \Unif(1,3)$, $\phi_i \sim \mathcal{N}(0,1)$ for $A$, and $\alpha_i \sim \Beta(2,5)$, $\omega_i \sim \Unif(3,6)$, $\phi_i \sim \mathcal{N}(0,1)$ for $B$.
The observation matrix is then constructed with elements $S_{i,t} = \sum_{i=1}^k w_i \cdot v_{i,t}$, where $w_i \sim \Unif([0,1]), \; \forall i \in [r_S]$. 
Finally, we introduce observational noise to yield $A_{i,t} = S_{i,t} + E_{i,t}$, where $E_{i,t} \sim \mathcal{N}(0, s^2)$ for varying levels of $s$ from $0.1$ to $0.4$ with $0.05$ interval.
We repeat the same process for $B$, and concatenate the two matrices to make 
$X=[A^\top,B^\top]^\top$. 
$500$ datasets were generated for the experiment, and the \texttt{sklearn} implementation of Lloyd's $k$-means algorithm with \texttt{k-means++} initialization and silhouette scores\footnote{\url{https://scikit-learn.org/1.5/modules/generated/sklearn.metrics.silhouette_score.html}} to find $k$ were used.

For each dataset, we perform a leave-one-out placebo test on $30\%$ of $A$,
by choosing one unit in $A$ as a target and the rest in $X$ as a potential donor pool.
For each target, we test two methods: (i) ClusterSC, using a subset of the donors $A$ selected via Algorithm \ref{alg.csc}, 
and (ii) SC without clustering, using the whole donor pool $X$.
Our method can flexibly adopt different versions of SC methods, and we present the results using Ridge regression in this section. 
We provide additional empirical evaluations with different choice of regression methods (OLS, Ridge, and Lasso) in Appendix \ref{app.simulation}.

Figure \ref{fig.box} shows the distribution of median MSE for the two algorithms, when $n_A=n_B=500$. 
The boxplot shows the quartiles of MSE, the whiskers extend to the furthest datapoint within $1.5$ times the interquartile range, and the rest are shown as small dots.
We observe that ClusterSC consistently outperforms SC, across all noise levels.
This aligns with our Theorem \ref{thm.post.bound}, which promises a tighter error bound.

\begin{figure}[h]
\CommonHeightRow{%
    \begin{floatrow}[2]%
        \ffigbox[\FBwidth]
        {\includegraphics[height=\CommonHeight]{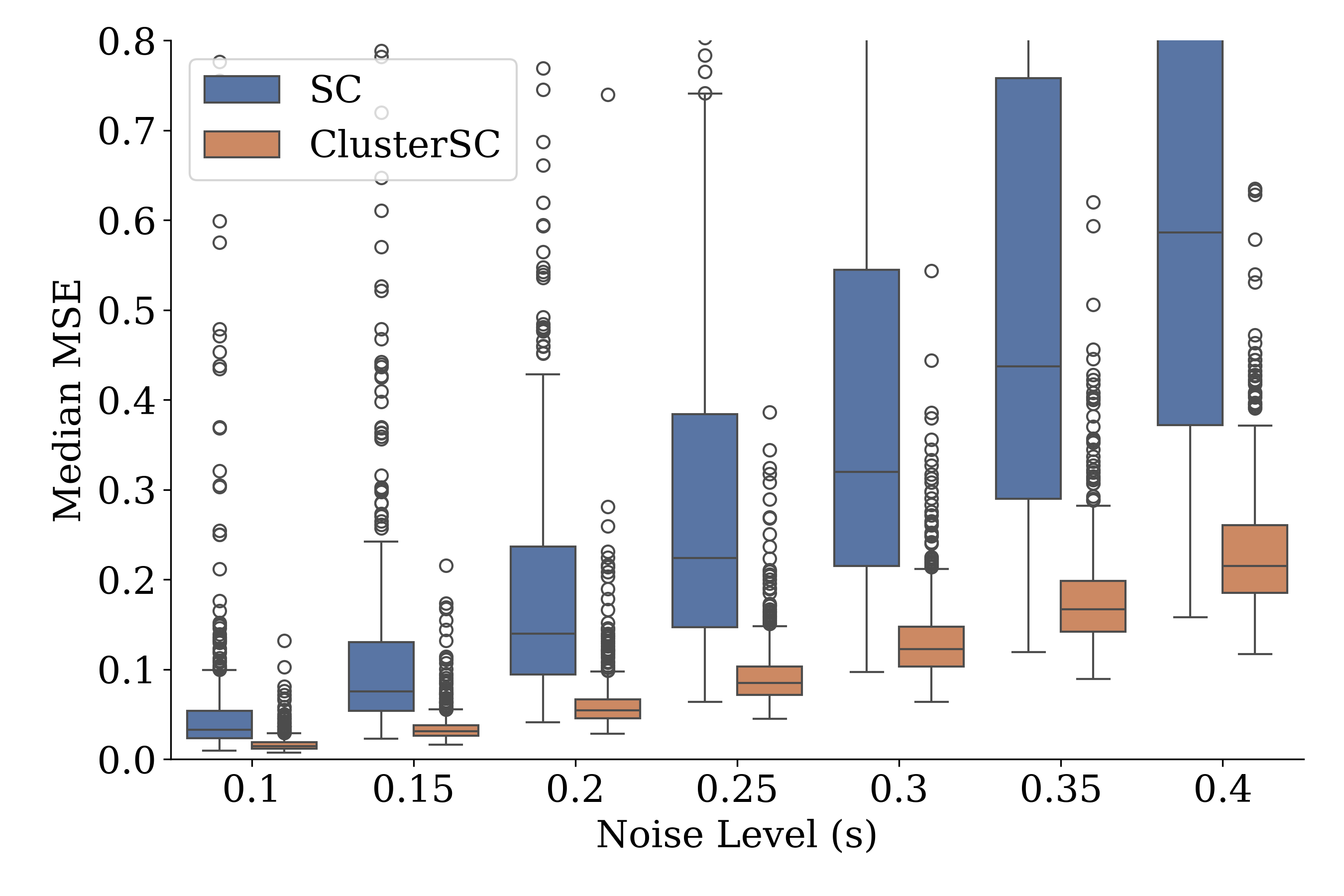}}
        {\caption{Median Post-intervention MSE using the classical SC without our clustering step (blue) and ClusterSC (orange) for varying levels of noise.
}\label{fig.box}}
        \ffigbox[\FBwidth]
        {\includegraphics[height=\CommonHeight]{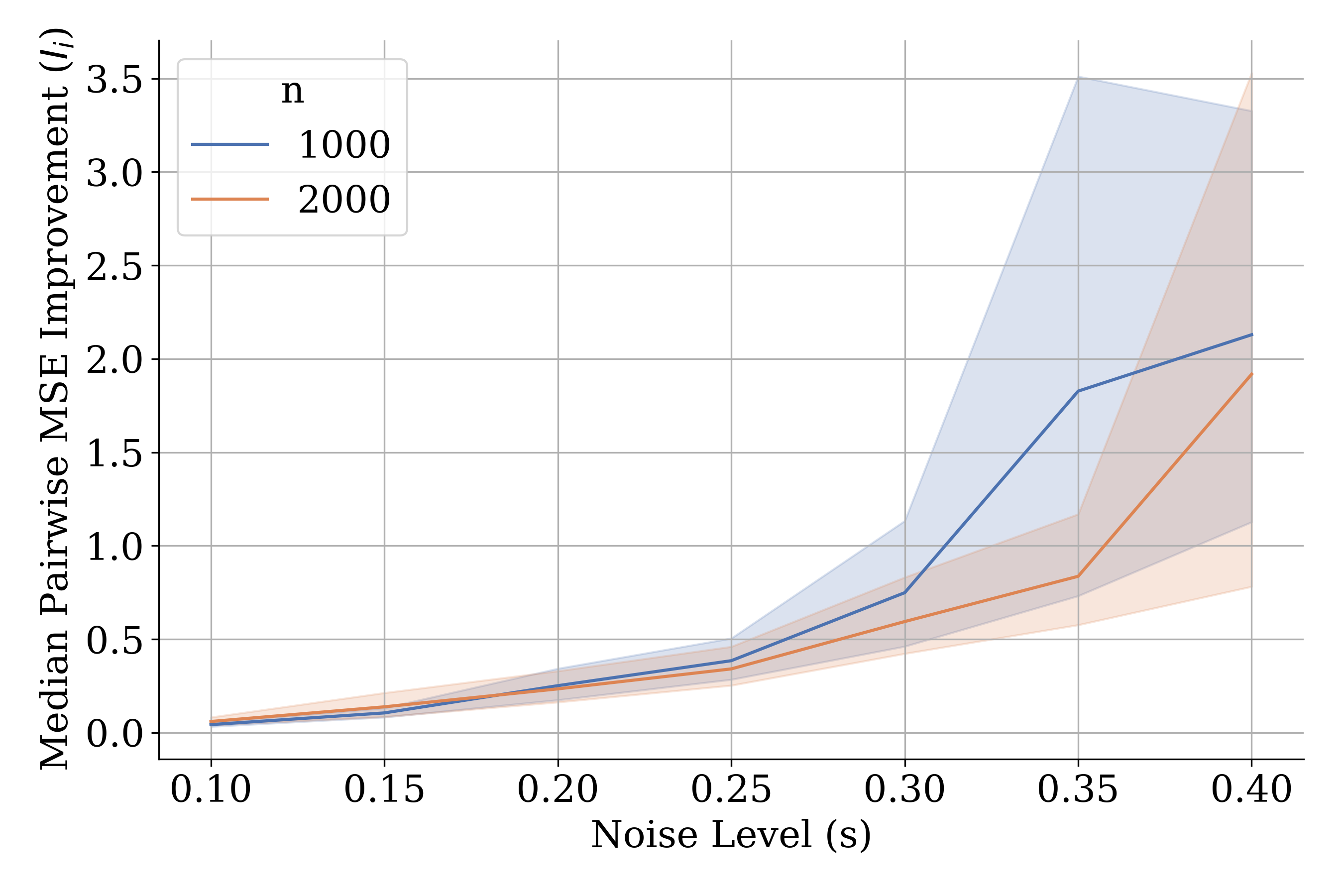}}
        {\caption{Median of the pairwise improvement $I_i$, measured for each dataset, for different noise levels ($s$). Shades represent 95\% confidence interval.
}\label{fig.noise}}
    \end{floatrow}}%
\end{figure}

Next, we define the pairwise improvement for a target $i$ as the difference in post-intervention MSE scores between SC and ClusterSC: $I_i= \MSE(\hat{m}_i^+;X)-\MSE(\hat{m}_i^+;A)$.
Then, we take $\text{median}(I_i)$ as a metric to assess the overall improvement measured from the SC instances constructed from one dataset (under a leave-one-out placebo test).
Figure \ref{fig.noise} shows the median pairwise improvement, $\text{median}(I_i)$, induced by ClusterSC at varying noise level. 
We observe that the median improvement is almost always positive, meaning that more than half of the individuals benefit from using ClusterSC instead of classical SC without clustering.
The improvement grows as noise increases, aligning with our Theorem \ref{thm.post.bound}.
We provide additional empirical evaluations with different choice of regression methods (OLS, Ridge, and Lasso) in Appendix \ref{app.simulation}.

\subsection{Evaluation on Real-world Dataset}
\label{sec.realworld}

Next, we evaluate ClusterSC using housing price index (HPI) data from the U.S. Federal Housing Finance Agency.\footnote{https://www.fhfa.gov/data/hpi/datasets} To avoid the effects of the subprime mortgage crisis ($2007-2010$), we use ten years of quarterly HPI data from $1997$ to $2006$, yielding a total of $T=40$ time points. 
The dataset was preprocessed to retain only metropolitan areas without missing data, resulting in $n=400$ units.

To evaluate ClusterSC, we conduct a placebo test to assess the model's ability to accurately predict \emph{observations} that would serve as \emph{counterfactual} outcomes in the presence of an intervention.
We formulate $100$ iterations of test, each involving a random split of the units into a donor set ($80\%$) and a target set ($20\%$). In each iteration, a unique SC model is fitted for each target using the corresponding donor set. The accuracy of an iteration is measured by the median MSE of the post-intervention predictions.

\begin{wrapfigure}{r}{0.5\textwidth}
  \begin{center}
    \includegraphics[width=\textwidth]{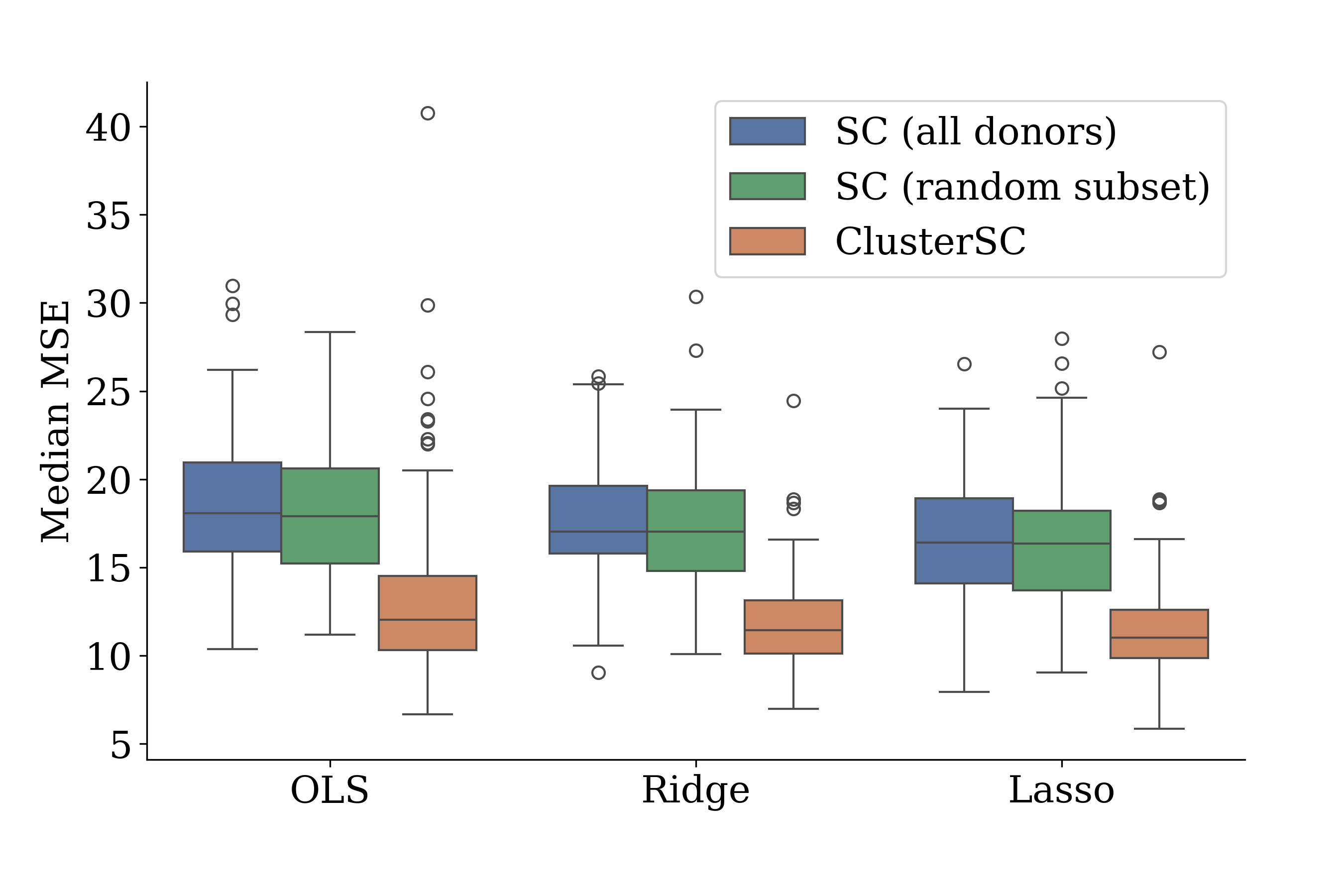}
  \end{center}
    \caption{Comparison of ClusterSC and two SC benchmarks on different regression methods (OLS, Ridge, and Lasso). Each boxplot contains 100 points representing the median MSE of each iteration.
    }
    \label{fig.hpi_results}
\end{wrapfigure}

Figure \ref{fig.hpi_results} presents boxplots of the median MSE measured over $100$ iterations.
We fix the post-intervention period to year $2006$ (four quarterly data points) and use nine years prior to $2006$ as pre-intervention time points ($T_0=36$).
The performance of ClusterSC (orange) is compared against two benchmarks. The first benchmark applies SC using the entire donor pool (blue), while the second benchmark selects a randomly subsampled donor set of the same size as the ClusterSC-selected donor set (green). For example, if ClusterSC selects a cluster of $50$ donors for a given target, the second benchmark randomly selects $50$ donors from the full donor pool.
The plot consists of nine boxplots with different choice of learning method for SC weights (e.g., Step 3 of Algorithm \ref{alg.sc.core}):
OLS (first three), Ridge (middle three), and Lasso (last three).
Regularization coefficients for Ridge and Lasso were set to $0.1$ after testing values of $0.01$, $0.05$, $0.1$ and $0.2$, which resulted in minimal performance differences.
For all SC instances, we used $k=2$ clusters\footnote{Based on silhouette scores.} and determined the rank cutoff for HSVT at the 95\% threshold.

Compared to the first benchmark, SC with all donors (blue), the second benchmark, SC with a random subset of donors (green), does not show any meaningful change in Figure \ref{fig.hpi_results}.
However, ClusterSC (orange) consistently achieves lower median MSE compared to both benchmarks over all learning methods (OLS, Ridge, or Lasso).
This indicates that the clustering approach in ClusterSC improves prediction accuracy by selecting a more relevant donor pool, not just by using fewer donors.
The thinner orange boxes, which indicate lower variance in MSE, also suggest that the clustering approach not only improves accuracy but also enhances stability.

\section{Discussion and Future Work}

This paper presents a novel approach to synthetic control (SC) on disaggregate-level datasets, addressing the challenges of higher noise and increased dimensionality by incorporating a clustering step. To the best of our knowledge, this is the first method to directly reduce the dimension of regression weights, in contrast to approaches that rely on regularization to suppress the number of active donors \citep{abadie2021penalized,chernozhukov2021exact, rsc, doudchenko2016balancing}.
Our ClusterSC advances synthetic control methodology to be better suited for applications where individual-level conditional treatment effects are of interest, such as in drug trials or targeted marketing analyses.

ClusterSC is supported by two main theoretical guarantees. Theorem \ref{thm.cluster_consistency} demonstrates the accuracy of our clustering step in identifying intrinsic cluster structure among the donor latent variables $\Theta$.
Theorems \ref{thm.pre.bound} and \ref{thm.post.bound} establish a tighter upper bound on prediction error induced by our algorithm, which is empirically validated in Section \ref{sec.simu} with simulation data and Section \ref{sec.realworld} with a housing price index dataset. In both experiments, ClusterSC consistently shows significant improvement across the choice of the learning algorithm for SC weights (e.g., OLS, Ridge, and Lasso).

The improved signal detection in ClusterSC, achieved by partitioning the donors, relies on the assumption that each \emph{cluster} exhibits a more pronounced low-rank structure than the combined matrix. This aligns with prior findings suggesting that, in models such as Gaussian mixtures, the middle components of singular value decomposition can carry more informative signals than the principal component \citep{nadakuditi2013most}. By incorporating a clustering step, ClusterSC effectively isolates these mixtures, ensuring that the principal components remain the most informative.
A similar approach has been explored in matrix completion, where rows are iteratively partitioned based on their projections onto the principal component \citep{ruchansky2017targeted}. In the same spirit, ClusterSC leverages the top few principal components to identify meaningful clusters.

Conceptually, ClusterSC shares similarities with Lasso in that it selects a small subset of donors. While Lasso provides effective regularization when the donor pool is large, ClusterSC has demonstrated further refinement in the experiments presented in Appendix \ref{sec.lasso} and Section \ref{sec.realworld}.  
The impact of the clustering step on different variations of synthetic control requires further analysis. However, the fundamental principle of concentrating on the most informative singular values remains valid under the common assumption that the underlying data follows a latent variable model, resulting in an approximately low-rank matrix.

Lastly, we acknowledge a potential fairness issue in our approach. As shown empirically in Section \ref{sec.empirical}, our method guarantees improved overall performance of the SC algorithm. However, it does not ensure that the prediction error will decrease for every individual target unit---while the majority of units may benefit, some could experience worse outcomes. This uneven distribution of benefits raises concerns about fairness, especially in individual-level datasets.
Investigating the potential disproportionate effects on minority groups presents an avenue for future research.

\bibliography{references}

\appendix
\onecolumn

\section{Technical Definitions}

Throughout the paper, we use lower-case letters to denote a vector $x$ and upper-case letters to denote a matrix $X$. The norms $\|X\|$ and $\|x\|$ refer to the spectral norm and $\ell_2$ norm, respectively.

In this section, we summarize important definitions used in our paper. (Some are repeated in the main part too.)

\begin{definition}[Sub-gaussian norm]
The sub-gaussian norm of $X$, denoted by $||X||_{\psi_2}$ is defined as
\[
||X||_{\psi_2} = \sup_{p\geq 1} \frac{1}{\sqrt{p}} \left( \mathbb{E}[|X|^{p}] \right)^{1/p}.
\]
\end{definition}

\begin{definition}[Bilipschitz continuity]
Let \((X,d), (Y,\rho)\) be metric spaces. A map \(g:(X,d) \mapsto (Y,d')\) is \(L\)-bilipschitz, for \(L > 0\), if, for all \(x,x'\in X\)
\[
\frac{1}{L} \; d(x,x') \leq \rho(g(x),g(x')) \leq L \;  d(x,x')
\]
\end{definition}

\section{Useful Theorems and Lemmas from Prior Work}\label{app.useful}

\subsection{Related to Theorem \ref{thm.cluster_consistency}}

\begin{theorem}[Theorem 5.1 (i) of \cite{ostrovsky2013effectiveness}]\label{thm.ostrovsky}
Suppose that $X$ is $\varepsilon$-separated with $k$ clusters. If there is a Voronoi partition $P=\{P_1, \ldots, P_k\}$ such that 
\[
\Delta_k^2(X; P) \leq \alpha \Delta_{k-1}^2(X)
\]
for some $\alpha \in (0, \frac{1-401\varepsilon^2}{400}]$,
then for each cluster $P_i$, there is a cluster $P'_i$ induced by a distinct optimal center, such that:
\[|P_i \ominus P'_i| \leq 161\varepsilon^2 |P'_i|\]
where $A \ominus B$ denotes the symmetric difference between sets $A$ and $B$.
\end{theorem}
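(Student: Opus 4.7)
The plan is to exploit the $\varepsilon$-separation of $X$ to show that the optimal $k$-means clusters concentrate tightly around their centers, and then argue that any Voronoi partition $P$ whose cost lies in the stated window must essentially replicate the optimal centers --- from which agreement on the concentrated cores follows almost automatically.

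First, I would reuse the core-set construction of Lemma \ref{lem.core.set}, now instantiated on $X$ itself: for each optimal cluster $P^X_j$, define $\mathrm{core}(P^X_j) := \{x \in P^X_j : \|x - c^X_j\|^2 \leq \tfrac{\varepsilon}{1-\varepsilon^2}\min_{l \neq j}\|c^X_j - c^X_l\|^2\}$. The $\varepsilon$-separation together with Lemma \ref{lem.ri.bound} forces each core to contain a $(1-\varepsilon)$-fraction of its cluster (Markov applied to the cluster variance $r_j^2(X)$), and a computation analogous to Lemma \ref{lem.p.prime.2} gives every core point a strict projection advantage of $(1 - 2\sqrt{\varepsilon/(1-\varepsilon^2)})\|c^X_j - c^X_l\|$ toward its own optimal center over any competing optimal center.

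Second, I would establish that the $k$ centers inducing the Voronoi partition $P$ are in one-to-one correspondence with the optimal centers, up to perturbations small compared to the inter-center gaps. The argument is by contrapositive: if some optimal center $c^X_j$ were unmatched (two Voronoi centers gravitating to a single optimal one), one could absorb the orphaned optimal cluster into the nearest surviving Voronoi cell to yield a $(k-1)$-partition of cost at most a bounded multiple of $\Delta_k^2(X;P)$. Chained with the hypothesis $\Delta_k^2(X;P) \leq \alpha\Delta_{k-1}^2(X)$ and the separation $\Delta_k^2(X) \leq \varepsilon^2\Delta_{k-1}^2(X)$, the precise calibration $\alpha \leq (1-401\varepsilon^2)/400$ is exactly what makes the resulting chain contradict itself. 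With the bijection in hand, label each $P_i$ by the optimal cluster $P'_i$ whose center is paired with the $i$-th Voronoi center.

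The projection advantage from Step 1 then forces every core point of $P'_i$ to lie in $P_i$, so $P_i \ominus P'_i$ is confined to the non-core points of the optimal clusters plus a thin boundary of misassignment produced by the Step 2 center perturbation; aggregating these contributions cluster-by-cluster yields the claimed $|P_i \ominus P'_i| \leq 161\varepsilon^2 |P'_i|$, with the constant $161$ absorbing both the core-fraction slack and the perturbation blow-up. The hard part will be Step 2: unlike Lemma \ref{lem.p.prime.1}, where the alternative centers are constructed explicitly from $P^\Theta$, here the partition $P$ is constrained only through its cost, so the center-matching must proceed indirectly via the cost penalty of cluster merging. Carefully quantifying that merge cost in terms of $\min_{j\neq l}\|c^X_j - c^X_l\|$ and balancing it against the $(k-1)$-means benchmark is exactly what produces the peculiar constants $1/400$ and $401$ appearing in the hypothesis on $\alpha$.
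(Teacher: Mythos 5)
First, note that the paper itself does not prove this statement: it is imported verbatim from \cite{ostrovsky2013effectiveness} and used as a black box, so your sketch has to be measured against the original ORSS argument, whose broad outline (core sets via Lemma~\ref{lem.ri.bound} and Markov, matching of the given centers to optimal centers, then a misclassification count) you do echo. The first genuine gap is in your Step 2. You propose to reach a contradiction by ``absorbing the orphaned optimal cluster into the nearest surviving Voronoi cell'' to produce a $(k-1)$-partition of cost at most a bounded multiple of $\Delta_k^2(X;P)$. If the merge is performed on cells of $P$ (delete one of the two centers crowding a single optimal center, reassign its cell to the other), the reassignment cost is of order $n_2\|p_1-p_2\|^2$ with $n_2$ the size of the deleted cell; this is dictated by the inter-center geometry, not by the cost of $P$, and cannot in general be bounded by a constant multiple of $\Delta_k^2(X;P)$ (the deleted cell may be extremely cheap under $P$ while $\|p_1-p_2\|$ is a constant fraction of the optimal separation). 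The comparison has to run the other way: lower-bound $\Delta_k^2(X;P)$ by the payment of the orphaned core, each of whose $(1-\varepsilon)|P'_j|$ points sits at distance $\Omega(d_j)$ from every center of $P$, where $d_j=\min_{l\neq j}\|c^X_j-c^X_l\|$, and upper-bound $\Delta_{k-1}^2(X)$ by merging optimal cluster $j$ into its nearest optimal neighbour, at cost at most $\Delta_k^2(X)+O(|P'_j|d_j^2)\leq \varepsilon^2\Delta_{k-1}^2(X)+O(|P'_j|d_j^2)$; chaining these two inequalities is what produces a condition of the form $400\alpha+401\varepsilon^2\leq 1$. Moreover, Step 3 needs more than a bijection: it needs each center of $P$ to lie within roughly a third of $d_i$ of its matched optimal center, a quantitative closeness that your contrapositive, as stated, never delivers.

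The second gap is that your counting gives the wrong order in $\varepsilon$. The core sets of Step 1 capture only a $(1-\varepsilon)$ fraction of each cluster, so the ``non-core'' slack is an $\varepsilon$-fraction of $|P'_i|$, and the points flowing into $P_i$ from other optimal clusters are bounded only globally (by roughly $\varepsilon n$), which need not be $O(\varepsilon^2)|P'_i|$ when other clusters are much larger. No constant $161$ can absorb an $\varepsilon|P'_i|$ term into a $161\varepsilon^2|P'_i|$ bound. The correct mechanism is a Markov argument at the scale of the inter-center distance rather than at the core radius: once $p_i$ is within about $d_i/3$ of $c^X_i$, any point of optimal cluster $i$ that $P$ assigns elsewhere must lie at distance $\Omega(d_i)$ from $c^X_i$, and since $r_i^2(X)\leq \frac{\varepsilon^2}{1-\varepsilon^2}d_i^2$ by Lemma~\ref{lem.ri.bound}, Markov bounds the number of such points by $O(\varepsilon^2)|P'_i|$; the inflow side of the symmetric difference requires its own threshold argument (the natural bound there is relative to $|P'_j|$, not $|P'_i|$), which your sketch does not address at all. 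So the skeleton is recognizable, but the two quantitatively essential steps---the cost-comparison that forces the center matching and the $\varepsilon^2$-level misclassification count---are exactly the pieces that are missing or would fail as written.
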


\begin{theorem}[Theorem 5.1 (ii) of \cite{ostrovsky2013effectiveness}]\label{thm.ostrovsky.ii}
Let \(X= \{x_1,...,x_n\} \subset \mathbb{R}^d\) be \(\varepsilon\)-separated and let \(X' = \{x_1',...,x_n'\}\) such that \(\|x_i-x_i'\|\leq \frac{\epsilon \Delta_{k-1}(X)}{\sqrt{n}}\). Then \(\Delta_{k}^2(X') \leq \frac{8\varepsilon^2}{1-2\varepsilon^2} \Delta_{k-1}^2(X')\).
\end{theorem}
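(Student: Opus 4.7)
The plan is to exploit a Lipschitz-type stability property of the $k$-means cost under coordinate-wise perturbations, combined with the given $\varepsilon$-separation of $X$. The key observation is that if we view the stacked vector of per-point deviations $(x_i - c_{\pi(i)})_{i=1}^n \in \mathbb{R}^{nd}$, then $\sqrt{\Delta_k^2(X;P)}$ is the $\ell_2$ norm of this stacked vector, so the triangle inequality in $\mathbb{R}^{nd}$ directly applies.

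First, I would establish a perturbation lemma. Fix the optimal partition $P^X$ of $X$ with optimal centers $c_1,\ldots,c_k$, and evaluate the cost of $X'$ using this same partition and these centers (which are suboptimal for $X'$ in general):
\[
\Delta_k^2(X') \;\leq\; \sum_i \|x'_i - c_{\pi(i)}\|^2 \;=\; \sum_i \|(x_i - c_{\pi(i)}) + \delta_i\|^2,
\]
where $\delta_i = x'_i - x_i$. Applying the triangle inequality to the stacked vectors in $\mathbb{R}^{nd}$ yields $\sqrt{\Delta_k^2(X')} \leq \sqrt{\Delta_k^2(X)} + \sqrt{\sum_i \|\delta_i\|^2}$. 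The symmetric statement with $X$ and $X'$ swapped follows by the same argument.

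Next, I would square the inequality and invoke $(a+b)^2 \leq 2a^2 + 2b^2$ to obtain $\Delta_k^2(X') \leq 2\Delta_k^2(X) + 2\sum_i \|\delta_i\|^2$, together with its $(k{-}1)$-analogue. The perturbation hypothesis gives $\sum_i \|\delta_i\|^2 \leq n \cdot \varepsilon^2 \Delta_{k-1}^2(X)/n = \varepsilon^2 \Delta_{k-1}^2(X)$, while $\varepsilon$-separation gives $\Delta_k^2(X) \leq \varepsilon^2 \Delta_{k-1}^2(X)$. Combining,
\[
\Delta_k^2(X') \;\leq\; 4\varepsilon^2 \Delta_{k-1}^2(X).
\]
It remains to replace $\Delta_{k-1}^2(X)$ on the right by $\Delta_{k-1}^2(X')$. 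Applying the same perturbation inequality at level $k-1$ gives $\Delta_{k-1}^2(X) \leq 2\Delta_{k-1}^2(X') + 2\varepsilon^2 \Delta_{k-1}^2(X)$, which rearranges (for $\varepsilon^2 < 1/2$) to $\Delta_{k-1}^2(X) \leq \tfrac{2}{1-2\varepsilon^2}\,\Delta_{k-1}^2(X')$. Substituting back recovers exactly the claimed bound $\Delta_k^2(X') \leq \tfrac{8\varepsilon^2}{1-2\varepsilon^2}\,\Delta_{k-1}^2(X')$.

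The main obstacle is obtaining precisely the constants $8$ and $2$ in the final expression; a naive deployment of a generic $(a+b)^2 \leq (1+t)a^2 + (1+1/t)b^2$ yields an $O(\varepsilon^2)$ bound whose constants depend on $t$. The specific choice $t=1$ (giving $2a^2 + 2b^2$) is what makes the chained application at both levels $k$ and $k-1$ line up to produce $8\varepsilon^2/(1-2\varepsilon^2)$; any other choice of $t$ yields a weaker or differently-shaped bound. A secondary subtlety is remembering that using the \emph{suboptimal} centers $c_j$ (rather than the recentered means of $X'$ within each $P^X_j$) is precisely what lets the triangle inequality close without a cross term.
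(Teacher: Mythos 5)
Your proof is correct, but note that the paper does not prove this statement at all: it is imported verbatim as Theorem 5.1(ii) of Ostrovsky et al.\ (2013) in the appendix of ``useful theorems from prior work,'' so there is no in-paper argument to compare against. Your derivation is a valid, self-contained reconstruction: bounding $\Delta_k^2(X')$ by the cost of $X'$ under the optimal centers and assignment of $X$ is legitimate (the $k$-means optimum minimizes over all center sets, so fixed suboptimal centers give an upper bound), the stacked-vector Minkowski inequality gives $\sqrt{\Delta_k^2(X')}\leq\sqrt{\Delta_k^2(X)}+\sqrt{\sum_i\|\delta_i\|^2}$, and the hypothesis $\|\delta_i\|\leq \varepsilon\Delta_{k-1}(X)/\sqrt{n}$ together with $\varepsilon$-separation yields $\Delta_k^2(X')\leq 4\varepsilon^2\Delta_{k-1}^2(X)$ after the choice $(a+b)^2\leq 2a^2+2b^2$. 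The reverse-direction perturbation bound at level $k-1$, $\Delta_{k-1}^2(X)\leq 2\Delta_{k-1}^2(X')+2\varepsilon^2\Delta_{k-1}^2(X)$, rearranges exactly to $\Delta_{k-1}^2(X)\leq \tfrac{2}{1-2\varepsilon^2}\Delta_{k-1}^2(X')$, recovering the stated constant $\tfrac{8\varepsilon^2}{1-2\varepsilon^2}$. The only caveat, which you flag yourself, is that the rearrangement requires $\varepsilon^2<1/2$; this is implicit in the theorem as stated (otherwise the right-hand side is nonpositive and the claim is vacuous), so it is not a gap in your argument.
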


\begin{theorem}[Claim 6 of \cite{kumar2010clustering}]\label{thm.kk.claim}
    If \(X\) is $\varepsilon$-separated with $k$ clusters, then all but $\varepsilon^2$ fraction of points in $X$ satisfy the proximity condition.
\end{theorem}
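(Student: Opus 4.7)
The plan is to quantify how $\varepsilon$-separation forces most points to be tightly concentrated around their own cluster centers, and then translate this concentration into the proximity condition. Since the proximity condition for a given point only depends on the $k-1$ unit directions to the other centers, the argument should reduce to a one-dimensional Markov-type counting, carried out pair by pair and then summed.

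First, I would invoke Lemma \ref{lem.ri.bound} to convert $\varepsilon$-separation into a quantitative bound on intra-cluster spread, namely $r_r^2 \leq \frac{\varepsilon^2}{1-\varepsilon^2}\min_{s\neq r}\|c_r - c_s\|^2$ for every $r$. This tells us the mean squared radius of each cluster $P_r$ is smaller by a factor of roughly $\varepsilon^2$ than the squared distance from $c_r$ to any other center. Next, for a fixed pair $(r,s)$, I would set $v_{rs} = (c_s-c_r)/\|c_s-c_r\|$ and study the projections $p_i = \langle X_i - c_r, v_{rs}\rangle$ for $i\in P_r$. A point $X_i \in P_r$ fails the proximity condition with respect to $s$ precisely when $p_i \geq \|c_r-c_s\|/2 - \Delta_{r,s}$, i.e., the projection lands near or past the midpoint of $c_r$ and $c_s$.

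The core step is a Markov bound on the projected residuals: since $\sum_{i\in P_r} p_i^2 \leq \sum_{i\in P_r}\|X_i-c_r\|^2 = |P_r| r_r^2$, the number of failing points in $P_r$ with respect to $s$ is at most $|P_r|r_r^2 / (\|c_r-c_s\|/2 - \Delta_{r,s})^2$. The definition of $\Delta_{r,s}$, with the free large constant $c$ multiplying $k/\sqrt{|P_r|}+k/\sqrt{|P_s|}$, is engineered so that $\Delta_{r,s}$ is a small fraction of $\|c_r-c_s\|$; this uses that $\|X-C\|$ is controlled in terms of the $k$-means cost, which under $\varepsilon$-separation is small relative to the inter-center gaps. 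After absorbing these constants, the denominator is $\Theta(\|c_r-c_s\|^2)$, and combining this with Step 1 yields an $O(\varepsilon^2) |P_r|$ bound on failing points per pair.

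Finally, I would union-bound over the $k-1$ choices of $s$ and sum over $r$. A point violates the proximity condition iff it fails for at least one $s$, so the naive route gives at most $O(k^2\varepsilon^2)n$ violating points; choosing $c$ in the definition of $\Delta_{r,s}$ large enough to absorb the $k$-dependent factors recovers the stated $\varepsilon^2 n$ bound. The main obstacle I anticipate is precisely this constant-management: Lemma \ref{lem.ri.bound} only bounds $r_r^2$ by the \emph{minimum} inter-cluster distance rather than by the specific $\|c_r-c_s\|$ needed for each pair (fine, since $\min$ is the smaller quantity, but to be verified uniformly), the spectral norm $\|X-C\|$ appearing in $\Delta_{r,s}$ must be linked back to the Frobenius-like $k$-means cost to make the Markov denominator compatible with the definition, and the union bound must be executed so that the constant $c$ swallows the $k$-factor rather than allowing it to multiply the $\varepsilon^2$ term.
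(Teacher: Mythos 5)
First, note that the paper itself does not prove this statement: it is imported verbatim as Claim~6 of \cite{kumar2010clustering} and listed in the appendix of results from prior work, so there is no in-paper proof to match your argument against; what follows is an assessment of your sketch on its own terms. Your overall shape --- use Lemma~\ref{lem.ri.bound} to turn $\varepsilon$-separation into small cluster radii relative to inter-center gaps, then count failing points by a Markov/Chebyshev bound on projections along each line $c_r$--$c_s$ --- is the right spirit, but two of your load-bearing steps do not hold as stated.

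The first genuine gap is your treatment of $\Delta_{r,s}$. You claim the large constant $c$ in $\Delta_{r,s}=\bigl(\tfrac{ck}{\sqrt{|P_r|}}+\tfrac{ck}{\sqrt{|P_s|}}\bigr)\|X-C\|$ is ``engineered'' so that $\Delta_{r,s}$ is a small fraction of $\|c_r-c_s\|$, and later that choosing $c$ large enough absorbs the $k$-factors in your union bound. This is backwards: enlarging $c$ enlarges $\Delta_{r,s}$, which makes the proximity condition \emph{harder} to satisfy and strictly increases the number of failing points, so it can never shrink your count. Quantitatively, the best available comparison goes through $\|X-C\|\le\|X-C\|_F=\Delta_k(X)$ together with the merging bound $\Delta_{k-1}^2(X)\le\Delta_k^2(X)+\tfrac{|P_r||P_s|}{|P_r|+|P_s|}\|c_r-c_s\|^2$ (a step you gesture at but never establish), and it yields only $\Delta_{r,s}\le O(ck\varepsilon)\,\|c_r-c_s\|$. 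Hence your Markov denominator $\|c_r-c_s\|/2-\Delta_{r,s}$ (which, incidentally, should be $(\|c_r-c_s\|-\Delta_{r,s})/2$) is only bounded below by a constant multiple of $\|c_r-c_s\|$ when $\varepsilon\lesssim 1/(ck)$, a restriction that appears nowhere in the statement and is not addressed in your sketch; without it the denominator can vanish and the argument collapses. The second gap is the final count: even granting the denominator, your per-pair bound of $O(\varepsilon^2)|P_r|$ summed over the $k-1$ other centers and over $r$ gives $O(k\varepsilon^2 n)$ at best (you yourself write $O(k^2\varepsilon^2)n$), and the only mechanism you offer for removing the $k$-dependence is the invalid ``choose $c$ large'' step. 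Getting the clean $\varepsilon^2$ fraction of the statement (which the paper uses downstream in Lemma~\ref{lem.cluster_3} with $\varepsilon\mapsto 4L^2\varepsilon$, i.e.\ with no hidden $k$-factor) requires a sharper accounting than a pairwise Frobenius--Markov union bound --- for instance exploiting that for each fixed unit direction $v$ one has $\sum_i\langle X_i-C_i,v\rangle^2\le\|X-C\|^2$ (the spectral rather than Frobenius norm, which is precisely why $\Delta_{r,s}$ is defined through $\|X-C\|$ with the $1/\sqrt{|P_r|}$ scaling), combined with a per-point rather than per-pair charge. As written, your proposal proves a weaker statement, of the form ``all but $O(k\varepsilon^2)n$ points satisfy proximity, provided $\varepsilon=O(1/(ck))$,'' not the claim in question.
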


\begin{theorem}[Theorem 2.2 of \cite{kumar2010clustering}]\label{thm.kk.theorem}
    If all but $\varepsilon$ fraction of points satisfy the proximity condition,
    then there exists an algorithm running in polynomial time which correctly partitions all but \(O(k^2\varepsilon n)\) points.
\end{theorem}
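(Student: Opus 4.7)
The plan is to apply Lemma \ref{lemma.post.rmse} twice---once to the full donor pool $X$ (with parameters $n$, $r$, $\sigma^*_X$) and once to the cluster-selected donor pool $A$ (with parameters $n_A$, $r_S$, $\sigma^*_A$)---and then subtract the two upper bounds to isolate the improvement. Since Lemma \ref{lemma.post.rmse} is stated for a generic donor matrix satisfying the SC model assumptions, it applies verbatim to $A$, so the proof reduces to analyzing an explicit difference of two closed-form expressions.

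After the substitution, the gap $\Delta$ between the two upper bounds simplifies to
\begin{align*}
\Delta = \frac{\eta}{\sqrt{T-T_0}}\Big(\mathbb{E}[\sigma^*_X - \sigma^*_A] + 2s(\sqrt{n}-\sqrt{n_A})\Big) + (\sqrt{n}-\sqrt{n_A})(\mu+\eta).
\end{align*}
I would then invoke Theorem \ref{thm.singval.gauss}, which under the hypothesis $n_A < n + 4T - 4\sqrt{nT}$ gives $\mathbb{E}[\sigma^*_X - \sigma^*_A] \geq s(\sqrt{n} - \sqrt{n_A} - 2\sqrt{T})$. Substituting this into the first term yields
\begin{align*}
\Delta \geq \frac{\eta}{\sqrt{T-T_0}}\Big(3s(\sqrt{n}-\sqrt{n_A}) - 2s\sqrt{T}\Big) + (\sqrt{n}-\sqrt{n_A})(\mu+\eta).
\end{align*}

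The hypothesis $n_A < n + 4T - 4\sqrt{nT} = (\sqrt{n}-2\sqrt{T})^2$ is equivalent to $\sqrt{n}-\sqrt{n_A} > 2\sqrt{T}$, so the bracketed quantity is at least $6s\sqrt{T} - 2s\sqrt{T} = 4s\sqrt{T} > 0$ and $(\sqrt{n}-\sqrt{n_A})(\mu+\eta) > 0$, establishing the strict-inequality part of the claim. For the $\Omega(s\sqrt{n})$ scaling, in the regime where the selected cluster contains at most a fixed fraction of the donors (so that $n_A \leq c n$ for some $c<1$, which is the regime relevant to ClusterSC), $\sqrt{n}-\sqrt{n_A} = \Omega(\sqrt{n})$, and the term $\tfrac{2s\eta}{\sqrt{T-T_0}}(\sqrt{n}-\sqrt{n_A})$ alone delivers the claimed $\Omega(s\sqrt{n})$ improvement.

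The main obstacle is more bookkeeping than conceptual: one must carefully track which factors in Lemma \ref{lemma.post.rmse} depend on $n$ versus $n_A$ (both the Gordon-type $\mathbb{E}[\|E\|]$ bound and the $\|M^+\|_F \leq \sqrt{n(T-T_0)}$ bound do) and ensure the potentially sign-ambiguous contribution $\mathbb{E}[\sigma^*_X - \sigma^*_A]$ is controlled via Theorem \ref{thm.singval.gauss}. A subtle point is that only the $s$-dependent terms contribute to the $\Omega(s\sqrt{n})$ rate: the $(\sqrt{n}-\sqrt{n_A})(\mu+\eta)$ piece is strictly positive and aids the strict inequality but does not scale with $s$, so the singular-value concentration result is essential for pinning down the stated order of improvement rather than just positivity.
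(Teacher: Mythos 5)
Your proposal does not address the statement at all. The statement you were asked to prove is Theorem \ref{thm.kk.theorem}, the clustering guarantee imported from \cite{kumar2010clustering}: if all but an $\varepsilon$ fraction of points satisfy the proximity condition (Definition \ref{def.proximity}), then a polynomial-time algorithm correctly partitions all but $O(k^2\varepsilon n)$ points. This is a statement about $k$-means-type spectral clustering --- a proof would need to analyze an explicit algorithm (project the rows onto the top-$k$ singular subspace, seed centers, run Lloyd-style reassignment) and bound the number of misclassified points using the spectral norm $\|X - C\|$ that appears in the proximity condition, which is exactly what Kumar and Kannan do; the present paper does not reprove it but cites it as an external result used inside Lemma \ref{lem.cluster_3}.

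What you wrote instead is a proof sketch for Theorem \ref{thm.post.bound}: you apply Lemma \ref{lemma.post.rmse} to both $X$ and $A$, subtract the bounds, and control $\mathbb{E}[\sigma^*_X - \sigma^*_A]$ via Theorem \ref{thm.singval.gauss}. That argument is reasonable for \emph{that} theorem (and is close to the paper's own proof of it, modulo your extra assumption $n_A \leq cn$ to get the $\Omega(s\sqrt{n})$ rate, which the paper handles by writing $n_A = \alpha^2 n$), but none of it involves the proximity condition, the matrix $C$ of cluster centers, the quantity $\Delta_{r,s}$, or the count of misclassified points. As it stands there is no argument at all for the statement in question, so the gap is total: you would need to either reproduce the Kumar--Kannan analysis or, as the paper does, simply cite it.
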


\subsection{Related to Theorem \ref{thm.singval.gauss}}
First of all, we introduce two version sof Weyl's inequality used in the proof of Theorem \ref{thm.singval.gauss}.

\begin{theorem}[Weyl's Inequality on Singular Values]\label{thm.weyl.sing}
For matrices $A$ and $B$ in $\mathbb{R}^{n \times m}$, let $k = \min(n, m)$. Then, the following holds for all $i, j \in [k]$, $i+j-1\leq k$.
\[
\sigma_{i+j-1}(A+B) \leq \sigma_i(A) + \sigma_j(B)
\]
\end{theorem}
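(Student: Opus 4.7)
The plan is to derive Weyl's inequality on singular values from the Courant--Fischer min-max characterization applied to the positive semidefinite matrix $A^\top A$. Concretely, since $\sigma_i(A)^2 = \lambda_i(A^\top A)$, the Courant--Fischer theorem gives the dual characterizations
\[
\sigma_i(A) \;=\; \max_{\substack{V \subseteq \mathbb{R}^m \\ \dim V = i}} \; \min_{\substack{x \in V \\ \|x\|=1}} \|Ax\|
\;=\; \min_{\substack{S \subseteq \mathbb{R}^m \\ \dim S = i-1}} \; \max_{\substack{x \in S^\perp \\ \|x\|=1}} \|Ax\|.
\]
This holds regardless of whether $A$ is square or rectangular, and reduces Weyl's inequality for singular values to the same-style argument as Weyl's inequality for Hermitian eigenvalues.

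Next, the strategy is to construct an explicit $(i+j-2)$-dimensional test subspace that certifies the bound on $\sigma_{i+j-1}(A+B)$. Let $S_A$ with $\dim S_A = i-1$ realize the outer minimum for $\sigma_i(A)$, and let $S_B$ with $\dim S_B = j-1$ realize the outer minimum for $\sigma_j(B)$. Set $S = S_A + S_B$, which has $\dim S \leq i+j-2$. For any unit vector $x \in S^\perp$, $x$ is orthogonal to both $S_A$ and $S_B$, so the triangle inequality yields
\[
\|(A+B)x\| \;\leq\; \|Ax\| + \|Bx\| \;\leq\; \sigma_i(A) + \sigma_j(B).
\]
Plugging this test subspace into the min-max expression for $\sigma_{i+j-1}(A+B)$ completes the argument, after extending $S$ to have exactly dimension $i+j-2$ if needed (which only enlarges the collection of competitors in the outer min).

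The only subtleties I anticipate are bookkeeping ones. First, the extension step: if $\dim S < i+j-2$, one picks any superspace $S' \supseteq S$ with $\dim S' = i+j-2$, and then $(S')^\perp \subseteq S^\perp$ so the max over unit vectors in $(S')^\perp$ is at most $\sigma_i(A)+\sigma_j(B)$. Second, the hypothesis $i+j-1 \leq k = \min(n,m)$ ensures both that $\sigma_{i+j-1}(A+B)$ is defined and that the subspace $S$ of codimension $\geq m-(i+j-2) \geq 1$ admits nonzero orthogonal vectors to test. With these in place the statement follows; no heavy machinery is required beyond Courant--Fischer and the triangle inequality, so I do not expect a genuine obstacle.
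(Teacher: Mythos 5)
Your proposal is correct: the Courant--Fischer dual characterization $\sigma_i(A) = \min_{\dim S = i-1} \max_{x \in S^\perp, \|x\|=1} \|Ax\|$, the test subspace $S_A + S_B$ of dimension at most $i+j-2$, and the triangle inequality give exactly the classical proof of Weyl's inequality for singular values, and your dimension bookkeeping (extending to dimension exactly $i+j-2$ and using $i+j-1 \leq \min(n,m)$ to guarantee a nonzero vector in the orthogonal complement) is handled properly. Note that the paper does not prove this statement at all --- it is quoted in the appendix as a standard background fact used in the proof of Theorem \ref{thm.singval.gauss} --- so your argument simply supplies the standard textbook derivation that the paper takes for granted.
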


\begin{theorem}[Weyl's Inequality on Eigenvalues]\label{thm.weyl.eigen}
For square matrices $A$ and $B$ in $\mathbb{R}^{n \times n}$, the following holds for all $i, j \in [n]$, $i+j-1\leq k$
\[
\lambda_{i+j-1}(A+B) \leq \lambda_i(A) + \lambda_j(B)
\]
And for all $i \in [n]$,
\[
 \lambda_i(A) + \lambda_n(B) \leq \lambda_{i}(A+B) \leq \lambda_i(A) + \lambda_1(B).
\]
\end{theorem}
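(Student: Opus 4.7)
The plan is to prove this classical theorem via the Courant-Fischer min-max characterization of eigenvalues. Throughout I assume $A$ and $B$ are Hermitian (symmetric) with eigenvalues indexed in nonincreasing order $\lambda_1 \geq \lambda_2 \geq \cdots \geq \lambda_n$, which is the standard setting in which Weyl's eigenvalue inequality holds and matches the ordering used in Theorem \ref{thm.weyl.sing}.

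My starting point is the min-max identity: for any Hermitian $M\in\mathbb{R}^{n\times n}$ and any $k\in[n]$,
\[
\lambda_k(M) \;=\; \min_{\substack{V\subseteq \mathbb{R}^n\\ \dim V = n-k+1}} \; \max_{\substack{x\in V\\ \|x\|=1}} \; x^\top M x,
\]
with the dual max-min form also available. To prove the first inequality $\lambda_{i+j-1}(A+B)\leq \lambda_i(A)+\lambda_j(B)$, I would choose subspaces $V_A$ of dimension $n-i+1$ and $V_B$ of dimension $n-j+1$ that realize the outer minimum in the Courant-Fischer formula for $\lambda_i(A)$ and $\lambda_j(B)$ respectively. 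Then I invoke the dimension inequality $\dim(V_A\cap V_B) \geq \dim V_A + \dim V_B - n = n-(i+j-1)+1$, so $V_A\cap V_B$ is an admissible subspace for the min-max description of $\lambda_{i+j-1}(A+B)$. For any unit vector $x\in V_A\cap V_B$, the quadratic form splits as $x^\top(A+B)x = x^\top A x + x^\top B x \leq \lambda_i(A) + \lambda_j(B)$ by how $V_A,V_B$ were chosen. Taking the max over such $x$ and then using that this particular choice of subspace only upper-bounds the outer minimum gives the inequality.

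The second pair of inequalities follows as two specializations of the first. Setting $j=1$ in $\lambda_{i+j-1}(A+B)\leq \lambda_i(A)+\lambda_j(B)$ immediately gives $\lambda_i(A+B)\leq \lambda_i(A)+\lambda_1(B)$. For the lower bound $\lambda_i(A)+\lambda_n(B)\leq \lambda_i(A+B)$, I apply the upper inequality already proved to the pair $(A+B, -B)$: namely $\lambda_i(A) = \lambda_i((A+B)+(-B)) \leq \lambda_i(A+B) + \lambda_1(-B) = \lambda_i(A+B) - \lambda_n(B)$, and rearranging yields the claim.

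The only subtle point, and the part I would be most careful about, is the subspace-intersection dimension count and making sure the indexing $i+j-1\leq n$ is precisely what is needed for $V_A\cap V_B$ to be nontrivial and admissible in the Courant-Fischer formula. Once that bookkeeping is handled, the rest of the argument is a direct substitution into the variational formulas, so I would present the proof as essentially a one-paragraph min-max computation followed by the two corollary-style specializations for the second display.
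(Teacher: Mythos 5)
Your proof is correct. Note that the paper itself offers no proof of Theorem \ref{thm.weyl.eigen}: it is listed in the appendix among classical results quoted from prior work, so there is no in-paper argument to compare against. Your Courant--Fischer argument is the standard one and is complete: the subspace-intersection dimension count $\dim(V_A\cap V_B)\geq (n-i+1)+(n-j+1)-n = n-(i+j-1)+1$ is exactly the bookkeeping needed, and the condition $i+j-1\leq n$ is what keeps this intersection admissible (the paper's statement writes ``$i+j-1\leq k$'' with $k$ undefined for square matrices; it should read $n$, as you implicitly assume). Your derivation of the two-sided bound is also the standard route: the upper bound is the case $j=1$, and the lower bound follows by applying the first inequality to the pair $(A+B,-B)$ together with $\lambda_1(-B)=-\lambda_n(B)$. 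You were also right to add the hypothesis that $A$ and $B$ are symmetric (Hermitian) with eigenvalues in nonincreasing order: the statement as printed omits this, but without it the claim is false for general square matrices, and symmetry is how the result is actually used in the proof of Theorem \ref{thm.singval.gauss} (applied to Gram-type matrices such as $X^\top X$).
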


Next, we introduce Gordon's theorem that bounds the singular values of Gaussian matrices, also used in the proof of Theorem \ref{thm.singval.gauss}.

\begin{theorem}[Gordon's theorem for Gaussian matrices] \index{Gordon's theorem} \label{Gaussian}
  Let $A$ be an $N \times n$ matrix whose entries
  are independent standard normal random variables. Then
  $$
  \sqrt{N} - \sqrt{n} \le \E[\sigma_{min}(A)] \le \E [\sigma_{max}(A)] \le \sqrt{N} + \sqrt{n}.
  $$
\end{theorem}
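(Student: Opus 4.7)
The plan is to prove Gordon's theorem for Gaussian matrices via the Gaussian min--max comparison inequality (Gordon's inequality, together with its symmetric predecessor, Slepian's inequality). The basic idea is to represent both $\sigma_{\min}(A)$ and $\sigma_{\max}(A)$ as extrema of the bilinear Gaussian process $X_{u,v} := u^\top A v$ indexed by unit vectors $(u,v) \in S^{N-1}\times S^{n-1}$, and then to compare this process against a much simpler auxiliary Gaussian process $Y_{u,v} := g^\top u + h^\top v$, where $g \sim \mathcal{N}(0, I_N)$ and $h \sim \mathcal{N}(0, I_n)$ are independent. The extrema of $Y$ decouple into $\|g\| \pm \|h\|$, which reduces the theorem to estimating expected chi norms.

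First, I would record the variational characterizations $\sigma_{\max}(A) = \max_{u,v} X_{u,v}$ and, assuming $N \ge n$, $\sigma_{\min}(A) = \min_v \max_u X_{u,v}$. Next I would verify the incremental variance comparison that drives the Gordon/Slepian machinery: a direct computation using that the entries of $A$ are i.i.d.\ standard Gaussians gives $\mathbb{E}[(X_{u,v}-X_{u',v'})^2] = 2 - 2(u^\top u')(v^\top v')$, while $\mathbb{E}[(Y_{u,v}-Y_{u',v'})^2] = 4 - 2(u^\top u') - 2(v^\top v')$, so the difference equals $2(1-u^\top u')(1-v^\top v') \ge 0$ and vanishes whenever $u = u'$. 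This is exactly the hypothesis needed for Gordon's inequality (for the min--max) and its weaker consequence Slepian's inequality (for the pure max).

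Applying Slepian's inequality yields the upper bound $\mathbb{E}[\sigma_{\max}(A)] \le \mathbb{E}[\max_{u,v}(g^\top u + h^\top v)] = \mathbb{E}\|g\| + \mathbb{E}\|h\|$, and applying Gordon's inequality yields the lower bound $\mathbb{E}[\sigma_{\min}(A)] \ge \mathbb{E}[\min_v\max_u(g^\top u + h^\top v)] = \mathbb{E}\|g\| - \mathbb{E}\|h\|$ (the inner max is $\|g\|$, independent of $v$, and the resulting minimum over $v$ of $h^\top v$ equals $-\|h\|$). For the upper bound, Jensen's inequality applied to the concave square root gives $\mathbb{E}\|g\| \le \sqrt{\mathbb{E}\|g\|^2} = \sqrt{N}$, finishing the $\sqrt{N}+\sqrt{n}$ side immediately.

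The main obstacle is the lower bound: the Gordon comparison produces $\gamma_N - \gamma_n$ where $\gamma_k := \mathbb{E}\|\mathcal{N}(0, I_k)\| = \sqrt{2}\,\Gamma((k+1)/2)/\Gamma(k/2)$, and upgrading this to $\sqrt{N}-\sqrt{n}$ requires showing $\gamma_N - \sqrt{N} \ge \gamma_n - \sqrt{n}$, i.e.\ that $\sqrt{k} - \gamma_k$ is nonincreasing in $k$. I would establish this by deriving a closed form for $\gamma_{k+1}^2 - \gamma_k^2$ via the Gamma-function recursion (which gives a ratio below $1$) and comparing with the telescoping identity $(\sqrt{k+1})^2 - (\sqrt{k})^2 = 1$, so that consecutive differences of $\gamma$ are at least as large as those of $\sqrt{\cdot}$. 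This monotonicity lemma is the only non-routine analytic step; everything else follows mechanically from the Gordon/Slepian setup.
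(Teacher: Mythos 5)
First, note that the paper does not prove this theorem at all: it is imported as prior work (Vershynin 2010, and ultimately Gordon / Davidson--Szarek), so you are being compared against the standard literature proof. Your architecture is exactly that standard proof: the variational characterizations, the comparison of $X_{u,v}=u^\top A v$ with the decoupled process $Y_{u,v}=g^\top u+h^\top v$, the increment computation $\mathbb{E}(Y_{u,v}-Y_{u',v'})^2-\mathbb{E}(X_{u,v}-X_{u',v'})^2=2(1-u^\top u')(1-v^\top v')\ge 0$, Sudakov--Fernique/Slepian for the max, Gordon's min--max comparison for the min, and Jensen for $\mathbb{E}\|g\|\le\sqrt{N}$. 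All of that is correct (one small imprecision: with $\sigma_{\min}=\min_v\max_u$, the equality case that Gordon's hypothesis cares about is $v=v'$, the outer index, not $u=u'$; since the increment gap vanishes in both cases this does no harm).

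The genuine gap is in the step you yourself identify as the crux: proving $\gamma_{k+1}-\gamma_k\ge\sqrt{k+1}-\sqrt{k}$, i.e.\ that $\sqrt{k}-\gamma_k$ is nonincreasing. Your proposed route cannot work as described. From the Gamma recursion one gets the identity $\gamma_k\gamma_{k+1}=k$, hence $\gamma_{k+1}^2-\gamma_k^2=(k^2-\gamma_k^4)/\gamma_k^2$, and this quantity is strictly \emph{less} than $1$ for every $k$ (e.g.\ $\gamma_2^2-\gamma_1^2=\pi/2-2/\pi\approx 0.93$). Since also $\gamma_{k+1}+\gamma_k<\sqrt{k+1}+\sqrt{k}$, writing $\gamma_{k+1}-\gamma_k=(\gamma_{k+1}^2-\gamma_k^2)/(\gamma_{k+1}+\gamma_k)$ pits a smaller numerator against a smaller denominator than in $\sqrt{k+1}-\sqrt{k}=1/(\sqrt{k+1}+\sqrt{k})$, so no conclusion follows; more generally, ``$\gamma_k\le\sqrt{k}$ termwise and smaller squared increments'' does not imply larger increments (take $\gamma_k=\sqrt{k}/2$ as a sanity check of the inference pattern). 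The inequality you need is in fact extremely tight: using $\gamma_{k+1}-\gamma_k=(k-\gamma_k^2)/\gamma_k$, the excess over $\sqrt{k+1}-\sqrt{k}$ is only of order $k^{-3/2}$, and even the standard Wendel-type bounds $k/\sqrt{k+1}\le\gamma_k\le\sqrt{k}$, or the sharper $\gamma_k\ge\sqrt{k-1/2}$ used mechanically, just barely fail to close it. So this ``only non-routine analytic step'' is left unproved in your proposal; to finish you need either genuinely sharp two-sided Gamma-ratio estimates (Kershaw-type) combined with $\gamma_k\gamma_{k+1}=k$, a direct monotonicity argument for $\gamma_k-\sqrt{k}$ as in Gordon's original paper or Davidson--Szarek, or an explicit citation of that lemma.
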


\subsection{Related to Theorem \ref{thm.pre.bound} and Theorem \ref{thm.post.bound}}\label{app.useful.3}

In this section, we introduce auxiliary results from the literature that will be used in the proofs of Theorem \ref{thm.pre.bound} and Theorem \ref{thm.post.bound}. The first such result is from \cite{chatterjee2015matrix}.

\begin{theorem}[Perturbation of Singular Values, \cite{chatterjee2015matrix}]\label{thm.singval.perturb}
Let $A$ and $B$ be two $m \times n$ matrices. Let $k = \min\{m,n\}$. Let $\sigma_1(A),\dots, \sigma_k(A)$ be the singular values of $A$ in decreasing order and repeated by multiplicities. Similarly, we define $\sigma_1(B),\dots, \sigma_k(B)$ for $B$ and $\sigma_1(A-B),\dots, \sigma_k(A-B)$ for matrix $A-B$. Then,
\begin{align*}
\max_{1 \le i \le k} | \sigma_i(A) - \sigma_i(B) |&\le \max_{1 \le i \le k} |\sigma_i(A-B)|.
\end{align*}
\end{theorem}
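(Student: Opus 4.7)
The plan is to derive the bound as a direct two-line corollary of Weyl's inequality for singular values (Theorem \ref{thm.weyl.sing}), which is already stated earlier in the paper. The only conceptual content is applying Weyl in both directions and using symmetry between $A-B$ and $B-A$.

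First, I would fix an index $i\in [k]$ and write $A = B + (A-B)$. Applying Theorem \ref{thm.weyl.sing} with the pair $(i,1)$ to the matrices $B$ and $A-B$ gives
\[
\sigma_i(A) \;=\; \sigma_i\bigl(B + (A-B)\bigr) \;\leq\; \sigma_i(B) + \sigma_1(A-B).
\]
Interchanging the roles of $A$ and $B$ by writing $B = A + (B-A)$ and applying Weyl to the pair $(i,1)$ with $A$ and $B-A$ gives
\[
\sigma_i(B) \;\leq\; \sigma_i(A) + \sigma_1(B-A) \;=\; \sigma_i(A) + \sigma_1(A-B),
\]
where the last equality uses the fact that $A-B$ and $B-A=-(A-B)$ have identical singular values (negation does not change singular values, since they are the square roots of the eigenvalues of $(A-B)^\top(A-B)$).

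Combining the two inequalities yields
\[
|\sigma_i(A) - \sigma_i(B)| \;\leq\; \sigma_1(A-B).
\]
Since singular values are non-negative and sorted in decreasing order, $\sigma_1(A-B) = \max_{1\le j\le k} |\sigma_j(A-B)|$. Finally, I would take the maximum over $i\in[k]$ on the left-hand side to obtain the claimed inequality. There is essentially no obstacle: the argument is a textbook consequence of Weyl's inequality, and the only point worth mentioning explicitly in the writeup is that $A-B$ and $B-A$ share the same singular values, so the same spectral-norm bound applies in both directions.
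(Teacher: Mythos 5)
Your proof is correct. Note that the paper itself does not prove Theorem \ref{thm.singval.perturb} at all: it is imported as a black box from \cite{chatterjee2015matrix} and only used downstream (in the proof of Lemma \ref{lemma:general_threshold}). Your argument supplies the missing content using only Theorem \ref{thm.weyl.sing}, which the paper already states: applying Weyl with the index pair $(i,1)$ to the decompositions $A = B + (A-B)$ and $B = A + (B-A)$ gives the two one-sided bounds, the observation that $B-A = -(A-B)$ has the same singular values closes the symmetry, and since singular values are nonnegative and decreasing the bound $\sigma_1(A-B)$ coincides with $\max_{1\le j\le k}|\sigma_j(A-B)|$ as stated. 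The side condition $i+j-1\le k$ in Theorem \ref{thm.weyl.sing} is automatically satisfied with $j=1$, so there is no gap; this is the standard derivation and it has the additional merit of making the appendix self-contained rather than relying on the external citation.
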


Using Theorem \ref{thm.singval.perturb}, we derive the following lemma. We provide a proof for completeness, but the proof is also presented in \cite{chatterjee2015matrix} and \cite{rsc}.

\begin{lemma}[Approximation Bound Between Two Matrices, Lemma 20 of \cite{rsc}.]\label{lemma:general_threshold}
Let $A$ and $B$ be two matrices of the same size.
Let $A = \sum_{i=1}^{m} \sigma_i(A) u_i v_i^\top$ be the singular value decomposition of $A$ with $\sigma_1(A), \dots, \sigma_m(A)$ in decreasing order and with repeated multiplicities. For any choice of $\mu \geq 0$, let $S = \{i : \sigma_i \geq \mu \}$. Then, define
\begin{align*}
    \hat{B} &= \sum_{i \in S} \sigma_i(A) u_i v_i^\top.
\end{align*}
Let $\sigma_i(B)$ be the singular values of $B$ in decreasing order and repeated by multiplicities, with $\sigma^*_B = \max_{i \notin S} \sigma_i(B)$. Then
\begin{align*}
    ||\hat{B} - B|| &\leq \sigma^*_B + 2||A - B||. 
\end{align*}
\end{lemma}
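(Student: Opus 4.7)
\textbf{Proof plan for Lemma \ref{lemma:general_threshold}.} The plan is to combine a triangle inequality with the singular value perturbation result stated in Theorem \ref{thm.singval.perturb}. The key observation is that $\hat B$, although notationally associated with $B$, is actually a hard singular value truncation of $A$ itself: using the singular triple $(\sigma_i(A),u_i,v_i)$ of $A$, we have $\hat B = \sum_{i\in S} \sigma_i(A)\, u_i v_i^\top$, so $A - \hat B = \sum_{i\notin S} \sigma_i(A)\, u_i v_i^\top$ is an orthogonal-sum expansion in the leftover singular directions of $A$.

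First I would write the triangle inequality
\[
\|\hat B - B\| \;\leq\; \|\hat B - A\| \;+\; \|A - B\|.
\]
The first term on the right can be evaluated exactly from the SVD of $A$: because the $u_i$ and $v_i$ are orthonormal, the spectral norm of a truncated SVD equals the largest retained singular value. Hence
\[
\|\hat B - A\| \;=\; \max_{i\notin S} \sigma_i(A).
\]
Since $S = \{i : \sigma_i(A)\geq \mu\}$ and singular values are in decreasing order, $S = \{1,\dots,r\}$ for some $r$, so this maximum is attained at the smallest index outside $S$.

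Next I would apply Theorem \ref{thm.singval.perturb} to bound $\max_{i\notin S} \sigma_i(A)$ in terms of the singular values of $B$. The theorem gives $|\sigma_i(A) - \sigma_i(B)| \leq \|A-B\|$ for every $i$, which yields $\sigma_i(A) \leq \sigma_i(B) + \|A-B\|$ index by index. Taking the max over $i\notin S$ on both sides (and using the definition $\sigma^*_B = \max_{i\notin S}\sigma_i(B)$) gives
\[
\max_{i\notin S} \sigma_i(A) \;\leq\; \sigma^*_B + \|A-B\|.
\]
Plugging this back into the triangle inequality yields $\|\hat B - B\| \leq \sigma^*_B + 2\|A-B\|$, which is exactly the claimed bound.

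There is no real obstacle here beyond getting the two ingredients lined up correctly; the only thing to be careful about is the identification $\|\hat B - A\| = \max_{i\notin S}\sigma_i(A)$, which relies on $\hat B$ being built from $A$'s own left and right singular vectors (so that $A-\hat B$ is itself in SVD form with orthonormal factors). Once that is noted, the proof is essentially a two-line application of the triangle inequality and Weyl-type stability of singular values.
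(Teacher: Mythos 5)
Your argument is correct and is essentially identical to the paper's proof: both split $\|\hat B - B\|$ via the triangle inequality through $A$, identify $\|\hat B - A\| = \max_{i\notin S}\sigma_i(A)$ from the truncated SVD, and then apply Theorem \ref{thm.singval.perturb} to convert $\sigma_i(A)$ into $\sigma_i(B) + \|A-B\|$, yielding $\sigma^*_B + 2\|A-B\|$. No gaps.
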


\begin{proof}
By Theorem \ref{thm.singval.perturb}, we know that $\sigma_i(A) \le \sigma_i(B) + ||A - B||$ for all $i$. First applying the triangle inequality and then using this fact gives:
\begin{align*}
    \norm{ \hat{B} - B} &\le \norm{\hat{B} - A} + \norm{A - B}
    \\ &= \max_{i \notin S} \sigma_i(A) + \norm{A - B}
    \\ &\le \max_{i \notin S} \Big(\sigma_i(B) + \norm{A - B} \Big) + \norm{A - B}
    \\ &= \sigma_B^* + 2 \norm{A - B}. 
\end{align*} 
\end{proof}

The following lemma comes from \cite{rsc}. We provide a simplified version of the proof here using our notation for completeness.

\begin{lemma}[Universal Bound on Pre-intervention MSE of OLS, Lemma 25 of \cite{rsc}.]\label{lemma.pre.mse.universal}
Suppose $x_0^- = m_0^- + \epsilon_0^-$ with $\mathbb{E}[\epsilon_{0,j}] = 0$ and $\text{Var}(\epsilon_{0,j}) \le s^2$ for all $j \in [T_0]$. Let $f^*$ be the true weights assumed in Section \ref{sec.model} and $\hat{f}$ be the output of Algorithm \ref{alg.sc.core}. Then, 
\begin{equation} \label{eq:x.20}
    \mathbb{E}\norm{ m_0^- - \hat{m}_0^-}^2 \leq \mathbb{E} \norm{(M^- - \hat{M}^-)^\top  f^*}^2  + 2s^2 r,
\end{equation}
where $r = rank(M)$.
\end{lemma}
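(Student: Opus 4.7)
The plan is to exploit the fact that OLS on $\hat{M}^{-\top}$ is exactly an orthogonal projection, and to decompose the error into a signal-approximation piece and a noise-projection piece that turn out to be orthogonal. Write $P$ for the orthogonal projection onto the column space of $\hat{M}^{-\top}$ (equivalently, the row space of $\hat{M}^-$), so that by definition of OLS, $\hat{m}_0^- = \hat{M}^{-\top}\hat{f} = P x_0^-$. Since $\hat{M}^-$ is a deterministic function of the donor matrix $X$, it is independent of the target noise $\epsilon_0^-$; we will use this independence when taking expectations.

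First I would substitute the assumed structure $m_0^- = M^{-\top} f^*$ and $x_0^- = m_0^- + \epsilon_0^-$ to get
\[
m_0^- - \hat{m}_0^- = (I-P)m_0^- - P\epsilon_0^- = (I-P)M^{-\top} f^* - P\epsilon_0^-.
\]
Next, I would use that $P\hat{M}^{-\top} = \hat{M}^{-\top}$ (columns of $\hat{M}^{-\top}$ lie in its own column space), hence $(I-P)\hat{M}^{-\top}=0$, which allows me to replace $M^{-\top}$ by $M^{-\top}-\hat{M}^{-\top}$ inside $(I-P)$:
\[
m_0^- - \hat{m}_0^- = (I-P)(M^{-\top} - \hat{M}^{-\top}) f^* - P\epsilon_0^-.
\]

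The key observation is that since $P$ is an orthogonal projection ($P^\top = P$ and $P^2 = P$), we have $(I-P)P = 0$, so the two vectors on the right-hand side are orthogonal. Therefore
\[
\|m_0^- - \hat{m}_0^-\|^2 = \|(I-P)(M^{-\top} - \hat{M}^{-\top}) f^*\|^2 + \|P\epsilon_0^-\|^2.
\]
Bound the first summand by $\|(M^{-\top}-\hat{M}^{-\top})f^*\|^2$ since $\|I-P\|\le 1$. For the second, condition on $\hat{M}^-$ and use the trace trick together with independence and the coordinate-wise variance bound $\mathrm{Var}(\epsilon_{0,j})\le s^2$:
\[
\mathbb{E}\|P\epsilon_0^-\|^2 = \mathbb{E}\bigl[\mathrm{tr}\bigl(P\,\mathbb{E}[\epsilon_0^-\epsilon_0^{-\top}\,|\,\hat M^-]\bigr)\bigr] \le s^2\,\mathbb{E}[\mathrm{tr}(P)] \le s^2\,r,
\]
where the last inequality uses $\mathrm{rank}(P)\le \mathrm{rank}(\hat M^-)\le r$ by the HSVT truncation step of Algorithm \ref{alg.sc.core}. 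Combining these pieces yields the claimed bound (with a slightly better constant of $s^2 r$; the factor of $2$ in the stated lemma is not tight in this approach).

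The only non-routine step is the orthogonality reduction: verifying that $(I-P)\hat M^{-\top}=0$ and $(I-P)P=0$ are what allow us to (i) insert the subtraction $M^{-\top}-\hat M^{-\top}$ for free and (ii) split the squared error cleanly without a cross term. If one instead tries the naive decomposition $m_0^- - \hat m_0^- = (M^{-\top}-\hat M^{-\top})f^* + \hat M^{-\top}(f^*-\hat f)$ and applies $(a+b)^2 \le 2a^2 + 2b^2$, one recovers the stated $2 s^2 r$ constant at the cost of losing tightness; either route closes the proof, but the projection-based route is cleaner and also makes the role of independence between donor noise $E$ and target noise $\epsilon_0^-$ transparent.
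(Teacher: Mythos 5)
Your proof is correct, and it takes a genuinely different route from the paper's. The paper (following Lemma 25 of \cite{rsc}) never uses the closed form of the OLS solution beyond $\hat f = \hat Q^\dagger x_0^-$: it invokes only the fact that $\hat f$ minimizes $\|x_0^- - \hat Q f\|$ to swap in $f^*$, then expands the square and kills the two cross terms in expectation (one via $\mathbb{E}[\epsilon_0^-]=0$ and independence from the donor data, the other via the trace of the projection $\hat Q\hat Q^\dagger$), which is where the factor $2s^2 r$ and the extra $2\|\epsilon_0^-\|^2$ bookkeeping come from. You instead use the projection characterization $\hat m_0^- = Px_0^-$, insert $\hat M^{-\top}$ for free via $(I-P)\hat M^{-\top}=0$, and get an exact Pythagorean split $\|(I-P)(M^{-\top}-\hat M^{-\top})f^*\|^2+\|P\epsilon_0^-\|^2$ with no cross terms at all, after which contractivity of $I-P$ and the trace trick (using, as the paper also implicitly does, that the coordinates of $\epsilon_0^-$ are uncorrelated with covariance bounded by $s^2 I$ and independent of the donor noise, and that $\rank(\hat M^-)\le r$) give the sharper constant $s^2 r$, which of course implies the stated $2s^2 r$ bound. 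What each buys: your argument is cleaner, deterministic in its orthogonality step, and tighter; the paper's argument uses only the optimality of $\hat f$, so it survives verbatim when the least-squares step is replaced by a constrained or regularized variant where the fitted value is no longer an orthogonal projection. One small caveat: your closing aside that the naive decomposition with $(a+b)^2\le 2a^2+2b^2$ ``recovers the stated $2s^2 r$'' is not quite right as stated --- bounding $\|\hat M^{-\top}(f^*-\hat f)\|^2$ still requires either the projection identity or the minimizer property, and done naively it inflates the signal term's constant --- but this does not affect your main proof, which is complete.
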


\begin{proof}
For easier notation, we define the following:
\begin{align*}
	Q := (M^{-})^\top, \;\; \hat{Q} := (\hat{M}^{-})^\top.
\end{align*}
Then, the following is true:
\begin{align*}
	m_0^- := Q f^*, \;\; x_0^- := \hat{Q} \hat{f}.
\end{align*}
Let $\dagger$ to denote a pseudoinverse, then,
\begin{align*}
	f^*= Q^\dagger m_0^-, \;\; \hat{f}= \hat{Q}^\dagger x_0^- .
\end{align*}

Recall that the target pre-intervention data decomposes into: $x_0^{-} = m_0^- + \epsilon_0^{-}$
and $m_0^- = Q f^*$. 
Since $\hat{f}$ minimizes $\norm{x_0^{-} - \hat{Q} f}$ over all $f \in \mathbb{R}^{n}$, then:
\begin{align*}
	\norm{m_0^- - \hat{m}_0^-}^2 &= \norm{ (x_0^{-} - \epsilon_0^{-}) - \hat{Q} \hat{f}}^2
	\\ &= \norm{(x_0^{-} - \hat{Q}\hat{f}) + (- \epsilon_0^{-}) }^2
	\\ &= \norm{x_0^{-} - \hat{Q} \hat{f}}^2 + \norm{ \epsilon_0^{-}}^2 + 2 \langle -\epsilon_0^{-}, x_0^{-}- \hat{Q} \hat{f} \rangle
	\\ &\le \norm{x_0^{-} - \hat{Q} f^*}^2 + \norm{ \epsilon_0^{-}}^2 + 2 \langle -\epsilon_0^{-}, x_0^{-}- \hat{Q} \hat{f} \rangle
	\\ &= \norm{(Qf^* + \epsilon_0^{-}) - \hat{Q} f^*}^2 + \norm{ \epsilon_0^{-}}^2 + 2 \langle -\epsilon_0^{-}, x_0^{-}- \hat{Q} \hat{f} \rangle
	\\ &= \norm{(Q - \hat{Q}) f^* + \epsilon_0^{-}}^2 + \norm{ \epsilon_0^{-}}^2 + 2 \langle -\epsilon_0^{-}, x_0^{-}- \hat{Q} \hat{f} \rangle
	\\ &= \norm{(Q - \hat{Q})f^*}^2 + 2\norm{ \epsilon_0^{-}}^2 +   2 \langle \epsilon_0^{-}, (Q - \hat{Q})f^* \rangle + 
	 2 \langle -\epsilon_0^{-}, x_0^{-}- \hat{Q} \hat{f} \rangle.
\end{align*}

By taking expectations, we have,
\begin{align}\label{eq:mse}
	\mathbb{E}\norm{ \hat{m}_0^- - m_0^-}^2  & \le \mathbb{E} \norm{(Q - \hat{Q}) f^*}^2 + 2\mathbb{E}\norm{ \epsilon_0^{-}}^2 + 2\mathbb{E}[ \langle \epsilon_0^{-}, (Q - \hat{Q})f^* \rangle]   + 2\mathbb{E}[ \langle -\epsilon_0^{-}, x_0^{-}- \hat{Q} \hat{f} \rangle].
\end{align}

We next bound the two inner products on the right hand side of Equation \eqref{eq:mse}:
	\begin{align*}
	\mathbb{E}[ \langle \epsilon_0^{-}, (Q - \hat{Q})f^* \rangle] &=  \mathbb{E}[(\epsilon_0^{-})^\top] Q f^* - \mathbb{E}[ (\epsilon_0^{-})^\top \hat{Q}]f^*
	\\ &=  -\mathbb{E}[( \epsilon_0^{-})^\top] \mathbb{E}[\hat{Q}]f^*
	\\ &= 0. && \mbox{(because $\mathbb{E}[ (\epsilon_0^{-})^\top=0$)}
	\end{align*}

For the second inner product, using $\hat{f} = \hat{Q}^{\dagger} x_0^{-}$, 
	\begin{align}
	 \mathbb{E}[\langle -\epsilon_0^{-}, x_0^{-} - \hat{Q}\hat{f} \rangle ] 
	& \quad = \mathbb{E}[(\epsilon_0^{-})^\top \hat{Q} \hat{f}] -\mathbb{E}[(\epsilon_0^{-})^\top x_0^{-}] \notag
	\\ & \quad =  \mathbb{E}[(\epsilon_0^{-})^\top \hat{Q} \hat{Q}^{\dagger} x_0^{-}] -\mathbb{E}[(\epsilon_0^{-})^\top]m_0^- - \mathbb{E}[(\epsilon_0^{-})^\top \epsilon_0^{-}] \notag
	\\ & \quad  = \mathbb{E}[(\epsilon_0^{-})^\top \hat{Q} \hat{Q}^{\dagger}]m_0^- + \mathbb{E}[(\epsilon_0^{-})^\top \hat{Q} \hat{Q}^{\dagger}\epsilon_0^{-}]  - \mathbb{E}[(\epsilon_0^{-})^\top \epsilon_0^{-}] \notag
	\\ &\quad  \stackrel{(a)}{=} \mathbb{E}[(\epsilon_0^{-})^\top] \mathbb{E}[\hat{Q} \hat{Q}^{\dagger}]m_0^- + \mathbb{E}[(\epsilon_0^{-})^\top \hat{Q} \hat{Q}^{\dagger}\epsilon_0^{-}]  - \mathbb{E}[(\epsilon_0^{-})^\top \epsilon_0^{-}] \notag
	\\ &\quad  = \mathbb{E}[(\epsilon_0^{-})^\top \hat{Q} \hat{Q}^{\dagger}\epsilon_0^{-}] - \mathbb{E}\norm{ \epsilon_0^{-}}^2, \label{eq.expectation}
\end{align}
where $(a)$ follows from the independence of noise.

We next bound the remaining term in Equation \eqref{eq.expectation}:
\begin{align*}
\mathbb{E}[(\epsilon_0^{-})^\top \hat{Q} \hat{Q}^{\dagger}\epsilon_0^{-}] &=  \mathbb{E}[ \text{tr}((\epsilon_0^{-})^\top \hat{Q} \hat{Q}^{\dagger}\epsilon_0^{-})]
	\\ &=  \mathbb{E}[ \text{tr}(\hat{Q} \hat{Q}^{\dagger}\epsilon_0^{-}(\epsilon_0^{-})^\top )]
	\\ &=  \text{tr}\Big(\mathbb{E}[ \hat{Q} \hat{Q}^{\dagger}\epsilon_0^{-}(\epsilon_0^{-})^\top ]\Big)
	\\ &= \text{tr}\Big(\mathbb{E}[ \hat{Q} \hat{Q}^{\dagger}] \mathbb{E}[\epsilon_0^{-}(\epsilon_0^{-})^\top ]\Big)
	\\ &\le  \text{tr}\Big(\mathbb{E}[ \hat{Q} \hat{Q}^{\dagger}] s^2 I \Big)
	\\ &= s^2 \mathbb{E}[ \text{tr}(\hat{Q} \hat{Q}^{\dagger}) ]
	\\ &\stackrel{(b)}{=} s^2 \mathbb{E}[\text{rank}(\hat{Q})]
	\\ &\le s^2 r,
\end{align*}
where $(b)$ follows from the fact that $\hat{Q} \hat{Q}^{\dagger}$ is a projection matrix with rank $r$.

Plugging this bound back into Equation \eqref{eq.expectation} gives:
\[
\mathbb{E}[\langle -\epsilon_0^{-}, x_0^{-} - \hat{Q}\hat{f} \rangle ]  \leq s^2 r - \mathbb{E}\norm{ \epsilon_0^{-}}^2.
\]

Finally, we can combine all these bounds back into Equation \eqref{eq:mse} and plug in the expressions for $Q$ and $\hat{Q}$ to get the desired bound:
	\begin{align*}
	\mathbb{E}\norm{ \hat{m}_0^- - m_0^-}^2 &\le \mathbb{E} \norm{(Q - \hat{Q}) f^*}^2 + 2\mathbb{E} \norm{\epsilon_0^{-}}^2 + 2(s^2 r - \mathbb{E} \norm{ \epsilon_0^{-}}^2)
	\\ &=  \mathbb{E} \norm{(Q - \hat{Q}) f^*}^2  + 2s^2 r
    \\ &= \mathbb{E} \norm{((M^{-})^\top - (\hat{M}^{-})^\top) f^*}^2  + 2s^2 r.
	\end{align*}
\end{proof}

\section{Omitted proofs from Section \ref{sec.theory.subgroup}}\label{app.proofs}

In addition to the $k$-means objective defined in Section \ref{sec.subgroup}, 
 we additionally define more variations of the $k$-means objective, that will be used in the analysis.
For a set of points $A={a_1, \ldots, a_n}$, we define the $k$-mean optimal cluster centers $C^A = \{c^A_i\}_{i=1}^{k}$ and the induced Voronoi partition $P^A = \{P^A_i\}_{i=1}^{k}$.
Note that the optimal $k$-means objective can be defined in two equivalent ways,
\[
\Delta_k^2(A)
= \sum_{i\in[n]} \min_{j\in[k]} ||a_i-c^A_j||^2 = \sum_{l\in[k]} \frac{1}{2|P^A_l|}\sum_{i,j \in P^A_l} ||a_i-a_j||^2.
\]
In the course of the proofs, we often use non-optimal $k$-means cost by introducing artificial partitions $\hat{P}$ and cluster centers $\{\hat{c}_i\}_{i=1}^{k}$.
When the new cluster centers are the mean of all points belonging to each cluster, we omit the center and denote,
\begin{align*}
\Delta_k^2(A; \hat{P})
&= \sum_{l\in[k]} \frac{1}{2|\hat{P}_l|}\sum_{i,j \in \hat{P}_l} ||a_i-a_j||^2.
\end{align*}
Hence, $\Delta_k^2(A ; P^A) = \Delta_k^2(A)$ by definition.
When the new cluster centers are not the mean of points in each cluster, we explicitly denote,
\begin{align*}
\Delta_k^2(A; \hat{P}, \{\hat{c}_i\}_{i=1}^{k})&= \sum_{j\in[k]} \sum_{i\in \hat{P}_j} ||a_i-\hat{c}_j||^2.
\end{align*}

\subsection{
Proof of Lemma \ref{lem.bilipseffect}
}

\bilipseffect*
\begin{proof}
We begin with the optimal $k$-means objective in $\Theta$:  
\begin{align}
\Delta_k^2(\Theta) = \Delta_k^2(\Theta; P^\Theta) & \leq \Delta_k^2(\Theta;P^M)  \label{eq.kmeans.bounding.1}\\
&= \sum_{l=1}^k \frac{1}{2|P^M_l|}\sum_{i, j \in P^M_l} \|\theta_i - \theta_j\|^2 \nonumber \\
&\leq L^2 \sum_{l=1}^k \frac{1}{2|P^M_l|}\sum_{i, j \in P^M_l} ||g(\theta_i) - g(\theta_j)||^2 \nonumber \\
&= L^2\Delta_k^2(g(\Theta);P^M) = L^2\Delta_k^2(g(\Theta))\label{eq.kmeans.bounding.2}\\
&\leq L^2\Delta_k^2(g(\Theta); P^\Theta) = L^2 \sum_{l=1}^k \frac{1}{2|P^\Theta_l|}\sum_{i, j \in P^\Theta_l} ||g(\theta_i) - g(\theta_j)||^2 \label{eq.kmeans.bounding.intermediate} \\
&\leq L^4 \sum_{l=1}^k \frac{1}{2|P^\Theta_l|}\sum_{i, j \in P^\Theta_l} \|\theta_i-\theta_j\|^2 = L^4 \Delta_k^2(\Theta ; P^\Theta) =L^4 \Delta_k^2(\Theta)
\label{eq.kmeans.bounding.3}
\end{align}
The first step is because of the optimality of $P$, the second step is from the definition of $\Delta_k^2(\Theta;P^M)$, the third step is because $g$ is $L$-bilipschitz, the fourth step is by definition,
the fifth step is because $P^M$ is optimal for $g(\Theta)$, 
and the fifth step is again because $g$ is $L$-bilipschitz.

Combining Equations \eqref{eq.kmeans.bounding.1}, \eqref{eq.kmeans.bounding.2}, and \eqref{eq.kmeans.bounding.3} and dividing by $L^2$ yields:
\[
\frac{1}{L^2}\Delta^2_k(\Theta) \leq \Delta^2_k(g(\Theta)) \leq L^2 \Delta^2_k(\Theta).
\]
\end{proof}


\subsection{Proof of Lemma \ref{lem.p.prime.1}}

\pprime*

\begin{proof} The proof of this lemma first expands the definition of \(c_i'\) and then rearranges terms:  
    \begin{align*}
       \|g(c_i^\Theta) - c'_i\| &=  
       \Big\|g(c_i^\Theta) - \frac{1}{|P_i^\Theta|}\sum_{j\in P_i^\Theta} g(\theta_j)\Big\|  \\
       &= \frac{1}{|P_i^\Theta|} \Big\|\sum_{j\in P_i^\Theta} (g(c_i^\Theta) - g(\theta_j))\Big\| \\
       &\leq \frac{1}{|P_i^\Theta|} \sum_{j\in P_i^\Theta} \|g(c_i^\Theta) - g(\theta_j)\| && \text{(triangle inequality)}\\
   &\leq \frac{L}{|P_i^\Theta|} \sum_{j\in P_i^\Theta} \|c_i^\Theta - \theta_j\| && \text{(Lipschitzness of \(g\))}  \\
 &\leq L \sqrt{\frac{1}{|P_i^\Theta|} \sum_{j\in P_i^\Theta} \|c_i^\Theta - \theta_j\|^2} = L\cdot r_i(\Theta) && \text{(Jensen's inequality)} 
    \end{align*}
\end{proof}


\subsection{Proof of Lemma \ref{lem.core.set}}

\coreset*

\begin{proof}
Define $d_l = \|\theta_l - c_i^\Theta\|^2$ for alll $l\in P_i^\Theta$, and let $Y$ be a random variable with probability mass distributed uniformly over $l\in P_i^\Theta$.
 Then, 
 \[\mathbb{E}_{l\sim Y}[d_l] = \frac{1}{|P_i^\Theta|} \sum_{l\in P_i^\Theta} \|\theta_l - c_i^\Theta\|^2 = r_i^2(\Theta),\]
 by construction.
 Using Markov's inequality, for all \(t > 0\), we have
 \[\mathbb{P}_{l\sim Y}(d_l \geq t) \leq \frac{\mathbb{E}_{l \sim Y}[d_l]}{t} 
 = \frac{r_i^2(\Theta)}{t} 
 \leq \frac{\frac{\epsilon^2}{1-\epsilon^2} \min_{j\neq i} \|c_i^\Theta - c_j^\Theta\|^2}{t},
 \]
 where the last step comes from Lemma \ref{lem.ri.bound}.
 Take $t = \frac{r_i^2(\Theta)}{\epsilon}= \frac{\epsilon}{1-\epsilon^2} \min_{j\neq i} \|c_i^\Theta - c_j^\Theta\|^2 $.
 Then,
 \[
 \mathbb{P}_{l\sim Y}\left(\|\theta_l - c_i^\Theta\|^2 \geq \frac{r_i^2(\Theta)}{\epsilon}\right) 
 = \mathbb{P}_{l\sim Y}\left(\|\theta_l - c_i^\Theta\| \geq \frac{r_i(\Theta)}{\sqrt{\epsilon}}\right) 
 =  \mathbb{P}_{l\sim Y}\left(\|\theta_l - c_i^\Theta\| \geq \sqrt{\frac{\epsilon}{1-\epsilon^2} }\min_{j\neq i} \|c_i^\Theta - c_j^\Theta\|\}\right) 
 \leq \epsilon
 \]
Hence, there are at most an $\varepsilon$-fraction of points $l$ in $P^\Theta_i$ that will not belong to $core(P^\Theta_i)$.
Thus $|core(P^\Theta_i)|\geq (1- \varepsilon) |P^\Theta_i|$.
\end{proof}


\subsection{Proof of Lemma \ref{lem.p.prime.2}}

\pprimeagain*

\begin{proof}
    Fix \(i\in [k]\). For all \(j \in [k]\setminus \{i\}\) and all \(l\in core(P_i^\Theta) \), by the triangle inequality,
\begin{equation}
    \|c_j^\Theta - \theta_l \| + \|\theta_l - c_i^\Theta\| \geq \|c_j^\Theta - c_i^\Theta\|.
\end{equation} 
Subtracting $2\|\theta_l - c_i^\Theta\|$ from each side gives,
\begin{align*}
    \|c_j^\Theta - \theta_l \| - \|\theta_l - c_i^\Theta\| &\geq \|c_j^\Theta - c_i^\Theta\| - 2\|\theta_l - c_i^\Theta\| \\
    &\geq \|c_j^\Theta - c_i^\Theta\| - 2\sqrt{\frac{\varepsilon}{1-\varepsilon^2}} \min_{i \neq j}\|c_j^\Theta - c_i^\Theta\| && \mbox{(by definition of $Core(P_i^\Theta)$)} \\
    &\geq \left(1-2\sqrt{\frac{\varepsilon}{1-\varepsilon^2}} \right) \|c_j^\Theta - c_i^\Theta\|
\end{align*}    
    Note the equality is achieved in the last step if $c^\Theta_j$ is the closest center to $c^\Theta_i$.
\end{proof}


\subsection{Proof of Lemma \ref{lem.p.prime.small}}

\pprimesmall*

\begin{proof}
Let $core(P^\Theta_i) = \{l\in P_i^\Theta: \|\theta_l - c_i^\Theta\| \leq \sqrt{\frac{\epsilon}{1-\epsilon^2} }\min_{j\neq i} \|c_i^\Theta - c_j^\Theta\|\}$ as in Lemma \ref{lem.core.set} and consider the points in $core(P^\Theta_i)$ plotted in $M$ space. Suppose towards contradiction that there exists \(l\in core(P_i^\Theta)\) such that $l\in P'_j$ for some $j \neq i$. That is, a point in $core(P^\Theta_i)$ belongs to a different cluster $j$ under $P'$. Then $l$ must satisfy the following condition:
\begin{equation}\label{eq.lemma57.beginning}
  \|m_l - c'_i\| \geq \|m_l - c'_j\|
\end{equation}

We can bound the left hand side from above:
\begin{align*}
    \|m_l - c'_i\| 
    &\leq \|m_l -g(c_i^\Theta) \| + \|g(c_i^\Theta) - c_i'\| && \mbox{(Triangle inequality)}\\
    &\leq \|m_l - g(c_i^\Theta) \| + Lr_i(\Theta). && \mbox{(Lemma \ref{lem.p.prime.1})}
\end{align*}

To bound the right hand side from below, we start from the distance between $m_l$ and $g(c^\Theta_j)$
\begin{align*}
    && \|m_l - g(c_j^\Theta)\| 
    &\leq \|m_l - c_j'\| + \|c_j' - g(c_j^\Theta) \| && \mbox{(Triangle inequality)}\\
    && &\leq \|m_l - c_j'\| + Lr_j(\Theta). && \mbox{(Lemma \ref{lem.p.prime.1})}\\
    \Longrightarrow && \|m_l - g(c_j^\Theta)\|  -  Lr_j(\Theta)&\leq \|m_l - c_j'\| 
\end{align*}

Combining both these bounds with inequality in Equation \eqref{eq.lemma57.beginning}, we have the following bound:
\begin{align*}
   && \|m_l - g(c^\Theta_i)\| + Lr_i(\Theta) 
    &\geq \|m_l - g(c^\Theta_j)\| - Lr_j(\Theta)\\
    \Longleftrightarrow && \|g(\theta_l) - g(c^\Theta_i)\| + Lr_i(\Theta) 
    &\geq \|g(\theta_l) - g(c^\Theta_j)\| - Lr_j(\Theta)
    && \mbox{(by } m_l = g(\theta_l) \mbox{)}\\  
    \Longleftrightarrow && L\|\theta_l - c^\Theta_i\| + Lr_i(\Theta) 
    &\geq \frac{1}{L}\|\theta_l - c^\Theta_j\| - Lr_j(\Theta)
    && \mbox{($L$-bilipschitzness of $g$)}\\
    \Longleftrightarrow && Lr_i(\Theta) + Lr_j(\Theta) 
    &\geq \frac{1}{L}\|\theta_l - c_j^\Theta\| - L\|\theta_l - c_i^\Theta\| &&  \\
   \Longleftrightarrow && r_i(\Theta) + r_j(\Theta) 
    &\geq \frac{1}{L^2}\|\theta_l - c_j^\Theta\| - \|\theta_l - c_i^\Theta\| \\
\end{align*}

The left hand side can be bounded below by Lemma \ref{lem.ri.bound}:
\[
r_i(\Theta) + r_j(\Theta) \leq \frac{2 \varepsilon}{\sqrt{1-\varepsilon^2}}\|c^\Theta_i-c^\Theta_j\|.
\]
The right hand side can be rearranged and bounded below by Lemma \ref{lem.p.prime.2}:
\begin{align*}    
    \frac{1}{L^2}\|\theta_l - c_j^\Theta\| - \|\theta_l - c_i^\Theta\| &= \left(\frac{1}{L^2} - 1 \right)\|\theta_l - c_j^\Theta\| + \|\theta_l - c_j^\Theta\| - \|\theta_l - c_i^\Theta\| \\
     &\geq \left(\frac{1}{L^2} - 1 \right)\|\theta_l - c_j^\Theta\| + \Big(1-2\sqrt{\frac{\varepsilon}{1-\varepsilon^2}}\Big)\|c_j^\Theta -c_i^\Theta\|.
\end{align*}

Combining the upper and the lower bounds and rearranging terms,
\begin{align}
    && \frac{2\varepsilon}{\sqrt{1-\varepsilon^2}}\|c_i^\Theta-c_j^\Theta\| 
    &\geq \left(\frac{1}{L^2} - 1 \right) \|\theta_l - c_j^\Theta\| + \Big(1-2\sqrt{\frac{\varepsilon}{1-\varepsilon^2}}\Big)\|c_j^\Theta -c_i^\Theta\| \nonumber \\
    \Longleftrightarrow && \left(1 -\frac{1}{L^2} \right) \|\theta_l - c_j^\Theta\| &\geq  \Big(1-2\sqrt{\frac{\varepsilon}{1-\varepsilon^2}} - \frac{2\varepsilon}{\sqrt{1-\varepsilon^2}}\Big)\|c_i^\Theta -c_j^\Theta\| \nonumber \\
    \Longleftrightarrow && \left(1 -\frac{1}{L^2} \right) \|\theta_l - c_j^\Theta\| &\geq  \Big( 1 - \frac{ 2 \sqrt{\varepsilon} + 2\varepsilon}{\sqrt{1-\varepsilon^2}} \Big)\|c_i^\Theta -c_j^\Theta\|.  \label{eq.proof.8}
\end{align}

By triangle inequality and since $l$ is in the set $core(P^\Theta_i)$,
\[
\|\theta_l - c_j^\Theta\| \leq
\|\theta_l - c_i^\Theta\| + \|c_i^\Theta - c_j^\Theta\| \leq \Big(1 + \sqrt{\frac{\varepsilon}{1-\varepsilon^2}} \Big)\|c_i^\Theta - c_j^\Theta\|.
\]

We can plug in this bound to Equation \eqref{eq.proof.8} to replace $\|\theta_l - c_j^\Theta\|$ by a function of $\|c_i^\Theta - c_j^\Theta\|$:
\[\left(1 -\frac{1}{L^2} \right)\Big(1 +\sqrt{\frac{\varepsilon}{1-\varepsilon^2}} \Big)\|c_i^\Theta - c_j^\Theta\| \geq \Big(1 - \frac{2\varepsilon + 2\sqrt{\varepsilon}}{\sqrt{1-\varepsilon^2}}\Big) \|c_i^\Theta - c_j^\Theta\|.\]

Dividing both sides by \(\|c_i^\Theta - c_j^\Theta\|\) yields
\begin{align*}
   && \left(1 -\frac{1}{L^2} \right)\Big(1 + \sqrt{\frac{\varepsilon}{1-\varepsilon^2}} \Big) &\geq \Big(1 - \frac{2\varepsilon + 2\sqrt{\varepsilon}}{\sqrt{1-\varepsilon^2}}\Big) \\
  \Longrightarrow && 1 -\frac{1}{L^2}  &\geq \frac{1 - \frac{2\varepsilon + 2\sqrt{\varepsilon}}{\sqrt{1-\varepsilon^2}}}{1 + \frac{\sqrt{\varepsilon}}{\sqrt{1-\varepsilon^2}}} \\
   \Longrightarrow &&  \frac{1}{L^2}  &\leq 1-\frac{1 - \frac{2\varepsilon + 2\sqrt{\varepsilon}}{\sqrt{1-\varepsilon^2}}}{1 + \frac{\sqrt{\varepsilon}}{\sqrt{1-\varepsilon^2}}} = 1 - \frac{\sqrt{1-\varepsilon^2} - 2\varepsilon - 2\sqrt{\varepsilon}}{\sqrt{1-\varepsilon^2} + \sqrt{\varepsilon}} = \frac{2\varepsilon + 3 \sqrt{\varepsilon}}{\sqrt{1-\varepsilon^2} + \sqrt{\varepsilon}} \\
    \Longrightarrow && L  &\geq 
    \left( \frac{\sqrt{1-\varepsilon^2}+\sqrt{\varepsilon}}{2\varepsilon+3\sqrt{\varepsilon}} \right)^{1/2}.  \\
\end{align*}

Since we have assumed that $L < \left( \frac{\sqrt{1-\varepsilon^2}+\sqrt{\varepsilon}}{2\varepsilon+3\sqrt{\varepsilon}} \right)^{1/2}$, this contradicts our assumption. Hence, there is no point $l$ such that  $\theta_l \in core(P^\Theta_i)$ and $m_l \notin P_i'$.

Recall that we are bounding the difference between the partition $P'$ and $P^\Theta$. We showed that if a point belongs to the core set $core(P_i^\Theta)$, then it must belong to $P'_i$ as well. Since the core set contains all but $\varepsilon$ fraction of points, we have $|P_i' \ominus P^\Theta_i| \leq \varepsilon |P^\Theta_i|$. Thus we can finally conclude that $\sum_{i=1}^k|P_i' \ominus P^\Theta_i| \leq 2\varepsilon n$. The final inequality come from the fact that there are at most $n$ points dispersed among the $k$ clusters, so $\sum_{i=1}^k | P^\Theta_i| = n$. Additionally, the symmetric difference counts an error both in the cluster a point came from under $P'$ and the cluster it was added to under $P^{\Theta}$, resulting in the additional factor of 2.
\end{proof}


\subsection{Proof of Lemma \ref{lem.cluster_1}}

\clustconsist*

\begin{proof}
First, we use the second part of Lemma \ref{lem.kmeanscostbound}: $\Delta^2_k(M; P^\Theta) \leq L^4\varepsilon^2 \Delta^2_{k-1}(M)$.
We define the center of the partition $P^\Theta$ in $M$ space as $\{c'_i\}_{i=1}^k = \{ \frac{1}{|P^\Theta_i|} \sum_{l\in P^\Theta_i} m_l \}_{i=1}^k$. 
Note that this partition may no longer be a Voronoi partition in $M$ space. 
Recall that $P'$ is the Voronoi partition induced by centers $\{c'_i\}_{i=1}^k$. Then,
\begin{align*}
    \Delta_k^2(M;P') &= \Delta_k^2(M;P', \{c'_i\}_{i=1}^k) \\
    &\leq \Delta_k^2(M;P^\Theta, \{c'_i\}_{i=1}^k)\\
    &= \Delta_k^2(M;P^\Theta)\\
    &\leq L^4\varepsilon^2 \Delta^2_{k-1}(M)
\end{align*}
The first step is by definition, the second step is because $P'$ is the induced Voronoi partition of $\{c'_i\}_{i=1}^k$, the third step is because $\{c'_i\}_{i=1}^k$ are included cluster means of $P^\Theta$, and the last step is by the second part of Lemma \ref{lem.kmeanscostbound}.
Taking the first, third, and last parts of these inequalities, we obtain,
\[
\Delta^2_k(M; P') \leq \Delta^2_k(M; P^\Theta) \leq L^4\varepsilon^2 \Delta^2_{k-1}(M).
\]

If $P^\Theta$ represented in $M$ space is still a Voronoi partition, then $P'=P^\Theta$. If not, the difference between $P'$ and $P^\Theta$ is bounded by $\sum_{i=1}^k|P_i' \ominus P^\Theta_i| \leq 2\varepsilon n$ by Lemma \ref{lem.p.prime.small}.
We will first bound the difference between $P'$ and $P^M$, and then use this fact to conclude that $P^\Theta \approx P^M$.

Lemma \ref{lem.kmeanscostbound} gives the $L^2\varepsilon$-separation of $M$ and that  $\Delta^2_k(M; P') \leq L^4\varepsilon^2 \Delta^2_{k-1}(M)$.
With these two conditions, we can now instantiate Theorem \ref{thm.ostrovsky} from \citet{ostrovsky2013effectiveness}, which shows that
for a well-separated space (such as $M$), if there is a partition that yields sufficiently small $k$-means cost (such as $P'$), then there exists another nearly identical partition with distinct centers.
We instantiate the theorem with $\alpha=L^4\varepsilon^2$.
To satisfy the theorem's condition on $\alpha$, we impose $\varepsilon \leq \frac{1}{\sqrt{801}L^2}$, so that $\alpha = L^4\varepsilon^2 \leq \frac{1-401L^4 \varepsilon^2}{400}$.
Then, Theorem \ref{thm.ostrovsky} tells us that for each cluster $P'_i$, we can match it with one of the optimal clusters $P^M_{\sigma(i)}$, with only a small error:
\[
|P'_i \ominus P^M_{\sigma(i)}| \leq 161L^4\varepsilon^2|P^M_{\sigma(i)}|,
\]
where $\sigma(i)$ is some bijection. By summing this over all $i \in [k]$ and using the assumption $L^2\varepsilon < \frac{1}{\sqrt{801}}$, we get
\begin{equation}\label{eq.pprimeandpm}
\sum_{i=1}^k|P'_i \ominus P^M_{\sigma(i)}| \leq \sum_{i=1}^k  161L^4\varepsilon^2|P^M_{\sigma(i)}| \leq 161L^4\varepsilon^2n \leq 6L^2\varepsilon n ,
\end{equation}
where $n = |\Theta| = \sum_{i=1}^k|P^M_{\sigma(i)}|$. 

Combining this with the bound of Lemma \ref{lem.p.prime.small} showing that  $\sum_{i=1}^k|P_i' \ominus P^\Theta_i| \leq 2\varepsilon n$, gives the final desired bound:
\[
\sum_{i=1}^k|P^\Theta_i \ominus P^M_{\sigma(i)}| \leq \sum_{i=1}^k|P^\Theta_i \ominus P'_{i}| + \sum_{i=1}^k|P'_i \ominus P^M_{\sigma(i)}| \leq 2\varepsilon n + 6L^2\varepsilon n \leq 8L^2\varepsilon n = O(L^2\varepsilon n).
\]
The first step is the triangle inequality, the second step applies Lemma \ref{lem.p.prime.small} and Equation \eqref{eq.pprimeandpm}, and the remainder combines and simplifies terms.

The application of Lemma \ref{lem.p.prime.small} requires $\varepsilon<0.1$ and $L^2 \leq \frac{\sqrt{1-\varepsilon^2}+\sqrt{\varepsilon}}{2\varepsilon+3\sqrt{\varepsilon}}$.
Combining this with the assumption required to apply Theorem \ref{thm.ostrovsky} that
$L^2\varepsilon \leq \frac{1}{\sqrt{801}}$,
yields the requirement that 
$L^2 \leq \min(\frac{1}{\sqrt{801}\varepsilon}, \frac{\sqrt{1-\varepsilon^2}+\sqrt{\varepsilon}}{2\varepsilon+3\sqrt{\varepsilon}} )$.
Note that for $\varepsilon>0.011$, it always holds that $L^2 \leq \frac{1}{\sqrt{801}\varepsilon} \leq \frac{\sqrt{1-\varepsilon^2}+\sqrt{\varepsilon}}{2\varepsilon+3\sqrt{\varepsilon}}$, and for smaller $\varepsilon$ the condition on $L$ becomes relaxed, making it easier to satisfy.
\end{proof}


\subsection{Proof of Lemma \ref{lem.eta.bound}}

\etabound*

\begin{proof}
Observe that for any row of a matrix $A$,
$\|A_i\| = \|U_i \Sigma V^\top \| = \|U_i \Sigma \|$,
where $U, \Sigma, V$ describe the singular value decomposition of $A$.
Then,
$ \max_i \|A_i\| \leq \sigma_1 \sqrt{\sum_{j=1}^r U_{i,j}^2} = \sigma_1 = ||A||$.
Using this, we bound $\eta$ from above as follows.
\begin{align}
\eta &= \max_i \| m_i - \tilde{m}_i \|\\
&\leq \| M - \tilde{M} \| \notag\\
&= \| M - (M+E) + (M+E) - \tilde{M} \| \notag\\
&\leq \| M - (M+E)\| + \|(M+E) - \tilde{M} \| \notag\\
&= \| E\| + \|(M+E) - \tilde{M} \|,\label{eq.intermediate}
\end{align}
where the first inequality is by the fact shown above (that $ \max_i \|A_i\| \leq ||A||$), 
and the second inequality is from the triangle inequality.

Next we bound the second term in Equation \eqref{eq.intermediate}.
Let the true rank of $M$ be $r$. Then by Eckart-Young Theorem, we have
\begin{align*}
\|(M+E) - \tilde{M} \|
&= \min_{A: \; rank(A) = r}\|(M+E) - A \|  && \mbox{(Eckart-Young Theorem)}\\
&\leq  \|(M+E) - M\|\\
&= \|E\|.
\end{align*}
Applying this to Equation \eqref{eq.intermediate} to bound $\eta$, we obtain $\eta \leq 2\|E\|$. 

Next we use this upper bound to derive a high probability bound on $\eta$.
By Gordon's theorem (Theorem \ref{Gaussian}), $\E[\|E\|] \leq s(\sqrt{n} + \sqrt{T})$. Thus,
\[
    \E[\eta] \leq \E[2\|E\|] \leq 2s(\sqrt{n} + \sqrt{T}).
\]

Instantiating Markov's inequality on $\eta$, we have that with probability at least $1-\delta$,
\begin{align*}
\eta \leq \frac{2s(\sqrt{n} + \sqrt{T})}{\delta}.
\end{align*}
\end{proof}


\subsection{Proof of Lemma \ref{lem.noisy.intermediate1}}

\noisyintone*

\begin{proof}

First we establish the relationship between $\Delta^2_k(\tilde M; \tilde P)$ and $\Delta^2_{k}(M)$ as follows.
\begin{align}
  \Delta^2_k(\tilde M; \tilde P) \leq
  \Delta^2_k(\tilde M; P^M, \{c_i^{M}\}_{i=1}^k)
  &= \sum_{i=1}^k \sum_{l\in  P_i^M} \|\tilde m_l - c_i^M\|^2 \nonumber\\
   &= \sum_{i=1}^k \sum_{l\in  P_i^M} \|\tilde m_l - m_l + m_l - c_i^M\|^2 \nonumber\\
  &= \sum_{i=1}^k \sum_{l\in  P_i^M} \| m_l - c_i^M\|^2 + \|\tilde{m}_l - m_l\|^2 + 2\langle \tilde{m}_l - m_l , m_{l} - c_i^M \rangle \nonumber\\
   &\leq \sum_{i=1}^k \sum_{l\in  P_i^M} \| m_l - c_i^M\|^2 + \|\tilde{m}_l - m_l\|^2 + 2\| \tilde m_l - m_l\| \cdot \|m_{l} - c_i^M \| \nonumber\\
   &= \Delta_k^2(M) + \sum_{i=1}^k \sum_{l\in  P_i^M}  \underbrace{\|\tilde{m}_l - m_l\|^2}_{\leq \eta^2} + 2\underbrace{\| \tilde m_l - m_l\|}_{\leq \eta} \cdot \underbrace{\|m_{l} - c_i^M \|}_{\leq 2\sqrt{T}} \nonumber\\
   &\leq \Delta_k^2(M) + \sum_{i=1}^k \sum_{l\in P_i^M} \left( \eta^2 + 4\eta \sqrt{T} \right) \nonumber\\
   &= \Delta_k^2(M) + n\eta^2 + 4n\eta \sqrt{T} \label{eq.tilde.first}
\end{align}
The first inequality is by Cauchy-Schwarz and the second is from the worst case bound
$\|m_{l} - c_i^M \| \leq \max_{i \neq j} \|m_{i} - m_j \| 
 \leq \| \mathbf{1}_T - (-\mathbf{1}_T) \| = 2\sqrt{T} $, where $\mathbf{1}_T = (1, 1, 1, \ldots, 1, 1)$ denotes a vector of length $T$ with all elements being $1$, and because there are $n$ points indexed by $l$ across the $k$ clusters.

Next, we compare the bound for $k-1$ clusters.
Let $P^{\tilde{M},k-1}$ be the optimal $(k-1)$-means partition for $\tilde M$ and let $\{c_i^{\tilde{M},k-1}\}_{i=1}^{k-1}$ be the corresponding centers. Then,
\begin{align}
    \Delta_{k-1}^2(M) &\leq 
    \Delta_{k-1}^2(M; P^{\tilde{M},k-1}) \nonumber\\
    &= \sum_{i\in [k-1]} \sum_{l\in P^{\tilde{M},k-1}_i} \|m_l - c^{\tilde M,k-1}_i\|^2 \nonumber\\
    &= \sum_{i\in [k-1]} \sum_{l\in P^{\tilde{M},k-1}_i} \|m_l - \tilde m_l + \tilde m_l - c^{\tilde{M},k-1}_i\|^2 \nonumber\\
    &\leq \sum_{i\in [k]} \sum_{l\in P^{\tilde M,k-1}_i} \|\tilde m_l - c^{\tilde{M},k-1}_i\|^2  + \|m_l - \tilde  m_l\|^2 + 2\|m_l - \tilde  m_l\| \cdot \|\tilde m_l - c^{\tilde{M},k-1}_i\| \nonumber\\
    &\leq \Delta_{k-1}^2(\tilde M) + n\eta^2 + 2 n\eta \max_{i\neq j} \|\tilde m_i - \tilde m_j \| \nonumber\\
    &\leq \Delta_{k-1}^2(\tilde M) + n\eta^2 + 2 n\eta \left( \max_{i\neq j} \|m_i - m_j \| + 2\eta \right) \nonumber\\
    &\leq \Delta_{k-1}^2(\tilde M) + n\eta^2 + 4 n\eta (\sqrt{T}+\eta) \label{eq.tilde.second}
\end{align}

The second inequality is by Cauchy-Schwarz, and the third inequality is because the distance between any point and a center should be smaller than the diameter of points, i.e., the distance between two furthest points: 
$\|\tilde m_l - c^{\tilde{M},k-1}_i\|  \leq \max_{i\neq j} \|\tilde m_i - \tilde m_j \|$.  The fourth inequality is an extended application of the triangle inequality: $\|\tilde m_i-\tilde m_j\| \leq \|\tilde m_i - m_i\|+\|m_i-m_j\|+\|m_j-\tilde m_j\| \leq \|m_i-m_j\| + 2\eta$, where the final step there comes from conditioning on the good event $G$. The last inequality of \eqref{eq.tilde.second} is again because $\max_{i \neq j} \|m_{i} - m_j \| 
 \leq 2\sqrt{T}$.
 
Combining inequalities in \eqref{eq.tilde.first} and \eqref{eq.tilde.second} gives the following bound:
\begin{align*}
\Delta^2_k(\tilde M; \tilde P)
   &\leq \Delta_k^2(M) + n\eta^2 + 4n\eta \sqrt{T} && \mbox{(Inequality \eqref{eq.tilde.first})}\\
   &\leq \varepsilon^2 L^4\Delta_{k-1}^2(M) + n\eta^2 + 4n\eta \sqrt{T} && \mbox{(Lemma \ref{lem.kmeanscostbound})}\\
   &\leq \varepsilon^2 L^4\Delta_{k-1}^2(\tilde M) + \varepsilon^2 L^4(n\eta^2 + 4 n\eta (\sqrt{T}+\eta) )+ n\eta^2 + 4n\eta \sqrt{T} 
   && \mbox{(Inequality \eqref{eq.tilde.second})}\\
   &= \varepsilon^2 L^4\Delta_{k-1}^2(\tilde M)+ \varepsilon^2 L^4( 5n\eta^2 + 4 n\eta \sqrt{T} )+ n\eta^2 + 4n\eta \sqrt{T} \\
   &= \varepsilon^2 L^4\Delta_{k-1}^2(\tilde M)+ n\eta \left( ( 5\varepsilon^2 L^4+1 )\eta + (4\varepsilon^2 L^4 + 4) \sqrt{T} \right) \\
   &\leq \varepsilon^2 L^4 \Delta^2_{k-1}(\tilde M)+ n\eta \left(2\frac{1}{4}\eta +5\sqrt{T} \right)
   &&\mbox{($\varepsilon L^2 < \frac{1}{2}$)}\\
  &\leq \varepsilon^2 L^4 \Delta^2_{k-1}(\tilde M)+ n\eta (3\eta + 5\sqrt{T}).
\end{align*}

The next step is to convert this upper bound to by $2\varepsilon^2 L^4 \Delta^2_{k-1}(\tilde M)$, so then we can bound the $k$-means cost $\Delta_k^2(\tilde M ; \tilde P)\leq \alpha \Delta_{k-1}^2(\tilde M)$ for $\alpha=2\varepsilon^2L^4$. To do so, we want to show that $2n\eta (3\eta + 4\sqrt{T}) \leq \varepsilon^2 L^4 \Delta^2_{k-1}(\tilde M)$.
By Lemma \ref{lem.eta.bound}, conditioned on the good event $G$, the following bound holds:
\begin{align*}
    n\eta (3\eta + 5\sqrt{T})
    &\leq \frac{2ns(\sqrt{n} + \sqrt{T})}{\delta} \left( \frac{6s(\sqrt{n} + \sqrt{T})}{\delta} + 5\sqrt{T} \right).
\end{align*}
Then the goal is to find a condition on $s$ such that
\begin{align*}
     \frac{2ns(\sqrt{n} + \sqrt{T})}{\delta} \left( \frac{6s(\sqrt{n} + \sqrt{T})}{\delta} + 5\sqrt{T} \right)
    & \leq \varepsilon^2 L^4 \Delta^2_{k-1}(\tilde M).
\end{align*}
By rearranging terms to get a quadratic form in $s$,
\begin{align*}
   && s \left( \frac{6s(\sqrt{n} + \sqrt{T})}{\delta} + 5\sqrt{T} \right)
    & \leq \frac{\delta \varepsilon^2 L^4 \Delta^2_{k-1}(\tilde M)}{2n(\sqrt{n} + \sqrt{T})}\\
    \Longleftrightarrow && \frac{6(\sqrt{n} + \sqrt{T})}{\delta}s^2 + 5\sqrt{T}s
   -\frac{\delta \varepsilon^2 L^4 \Delta^2_{k-1}(\tilde M)}{2n(\sqrt{n} + \sqrt{T})} &\leq 0.
\end{align*}

Define this quadratic formula as $f(s) := \frac{6(\sqrt{n} + \sqrt{T})}{\delta}s^2 + 5\sqrt{T}s
   -\frac{\delta \varepsilon^2 L^4 \Delta^2_{k-1}(\tilde M)}{2n(\sqrt{n} + \sqrt{T})}$.
Since $\frac{6(\sqrt{n} + \sqrt{T})}{\delta}>0$ and $f(s=0)<0$, we can find the condition on $s$ that makes $f(s)\leq0$ by finding the positive solution for $f(s)=0$.
Using the quadratic formula, we get
\begin{align*}
    s &= \frac{-5\sqrt{T} \pm \sqrt{25T + \frac{24(\sqrt{n}+\sqrt{T})}{\delta} \frac{\delta \varepsilon^2 L^4 \Delta^2_{k-1}(\tilde M)}{2n(\sqrt{n} + \sqrt{T})} } } {12(\sqrt{n}+\sqrt{T})/\delta}\\
    &=  \frac{ \delta \left(-5\sqrt{T} \pm \sqrt{25T + \frac{ 12\varepsilon^2 L^4 \Delta^2_{k-1}(\tilde M)}{n} } \right)} {12(\sqrt{n}+\sqrt{T})}\\    
    &=  \frac{ \delta \left(-\sqrt{25T} \pm \sqrt{25T + \frac{ 12\varepsilon^2 L^4 \Delta^2_{k-1}(\tilde M)}{n} } \right)} {12(\sqrt{n}+\sqrt{T})} .\\
\end{align*}

Therefore, $2n\eta (3\eta + 4\sqrt{T})\leq \varepsilon^2 L^4 \Delta^2_{k-1}(\tilde M)$ is satisfied for all 
\[
s \in \left( 0, \frac{ \delta \left(-\sqrt{25T} + \sqrt{25T + \frac{ 12\varepsilon^2 L^4 \Delta^2_{k-1}(\tilde M)}{n} } \right)} {12(\sqrt{n}+\sqrt{T})} \right).
\]

Finally, using the fact that $\tilde M$ is \(4L^2\varepsilon\)-separated in the event of $G$, we can use Theorem \ref{thm.ostrovsky} with $\alpha = 2 \varepsilon^2 L^4$ to conclude
that \(|\tilde P \ominus P^{\tilde M}| \leq 161 \cdot 16 L^4 \varepsilon^2 n = 2576 L^4 \varepsilon^2 n\). 

\end{proof}


\subsection{Proof of Lemma \ref{lem.noisy.intermediate2}}

\noisyinttwo*

\begin{proof}
Define
\[
core(P_i^M) = \Big\{l\in P_i^M:  \|m_l - c_i^M\| \leq \sqrt{\frac{L^2\varepsilon}{1-L^4\varepsilon^2}}\min_{i\neq j} \|c_i^M - c_j^M\|\Big\}.
\]

Similar to Lemma \ref{lem.core.set},
we construct a uniform probability distribution over $\forall l\in P_i^M$:
 \[\mathbb{E}[\|m_l - c_i^M\|^2] = \frac{1}{|P_i^M|} \sum_{l\in P_i^M} \|m_l - c_i^M\|^2 = r_i^2(M)\]
Usking Markov's inequality with the fact that
$M$ is $L^2\varepsilon$-separated, we see that,
 \[\mathbb{P}(\|m_l - c_i^M\|^2 \geq t) \leq \frac{r_i^2(M)}{t} 
 \leq \frac{\frac{L^4\epsilon^2}{1-L^4\epsilon^2} \min_{j\neq i} \|c_i^M - c_j^M\|^2}{t}.
 \]
Take $t = \frac{r_i^2(M)}{L^2\varepsilon}= \frac{L^2\varepsilon}{1-L^4\epsilon^2} \min_{j\neq i} \|c_i^M - c_j^M\|^2 $.
Then,
 \[
 \mathbb{P}\left(\|m_l - c_i^M\| \geq \sqrt{t}\right) 
 =  \mathbb{P}\left(m_i \notin core(P_i^M) \right) 
 \leq L^2\varepsilon.
 \]
Hence, there are at most a $L^2\varepsilon$ fraction of points $l$ in $P^M_i$ that will not belong to $core(P^M_i)$.

Similar to Lemma \ref{lem.p.prime.2},
for all $l \in core(P_i^M)$ and for all $j\neq i$,
\begin{equation}\label{eq.to.contradict}
    \|m_l - c_j^M\| - \|m_l - c_i^M\| \geq \Big(1-2\sqrt{\frac{L^2\varepsilon}{1-L^4\varepsilon^2}}\Big)\min_{i\neq j} \|c_i^M - c_j^M\|.
\end{equation}

We will assume towards a contradiction that there exists a $j$ such that,
\[
\|\tilde m_l - c_j^M\|  \leq \|\tilde m_l - c_i^M\|,
\]
and show a contradiction to conclude that all $l \in core(P^M_i)$ should also belong to $\tilde P_i$.
From this assumption, we apply triangle inequality twice to yield:
\[
\|m_l - c_j^M\| - \|\tilde m_l - m_l\|  \leq \|m_l - c_i^M\| + \|\tilde m_l - m_l\|.
\]
Rearranging gives,
\[
\|m_l - c_j^M\| - \|m_l - c_i^M\| \leq 2\|\tilde m_l - m_l\| \leq 2\eta.
\]
Conditioning on the good event $G$, we can apply the bound on $\eta$ from Lemma \ref{lem.eta.bound}:
\[
\|m_l - c_j^M\| - \|m_l - c_i^M\| \leq\frac{4s(\sqrt{n} + \sqrt{T})}{\delta}.
\]
With the assumption that $s < \frac{\delta \Big(1-2\sqrt{\frac{L^2\varepsilon}{1-L^4\varepsilon^2}}\Big)\min_{i\neq j} \|c_i^M - c_j^M\|}{4(\sqrt{n} + \sqrt{T})}$, 
this bound becomes
\[
\|m_l - c_j^M\| - \|m_l - c_i^M\| <  \Big(1-2\sqrt{\frac{L^2\varepsilon}{1-L^4\varepsilon^2}}\Big)\min_{i\neq j} \|c_i^M - c_j^M\|,
\]
which contradicts the inequality in \eqref{eq.to.contradict}.
Hence we conclude \( core(P_i^M) \subseteq \tilde P_i\). 

Recall that \(|core(P_i^M)| \geq (1-L^2\varepsilon)|P_i^M|\).
Hence, all but $L^2\varepsilon$ fraction of the points will not change the cluster assignment, and for those who change, the symmetric set difference will count the error twice. 
Therefore, \(|\tilde P \ominus P^M| = \sum_{i=1}^k |\tilde P_i \ominus P^M_i| \leq 2L^2\varepsilon n\).  
\end{proof}


\subsection{Proof of Lemma \ref{lem.cluster_2}}

\cluster*

\begin{proof}

To instantiate Lemmas \ref{lem.noisy.intermediate1} and \ref{lem.noisy.intermediate2}, we need to assume
\[
s < \min\Biggl\{
\frac{ \delta \left(-\sqrt{25T} + \sqrt{25T + \frac{ 12\varepsilon^2 L^4 \Delta^2_{k-1}(\tilde M)}{n} } \right)} {12(\sqrt{n}+\sqrt{T})}, 
\frac{\delta \Big(1-2\sqrt{\frac{L^2\varepsilon}{1-L^4\varepsilon^2}}\Big)\min_{i\neq j} \|c_i^M - c_j^M\|}{4(\sqrt{n} + \sqrt{T})} 
\Biggl\}.
\]
Since $\tilde M$ is $4L^2\varepsilon$-separated conditioned on $G$, then also,
\[
\Delta^2_{k}(\tilde M) \leq 16\varepsilon^2 L^4 \Delta^2_{k-1}(\tilde M).
\]
Based on this, we use a slightly stronger bound on $s$:
\[
s < \min\Biggl\{
\frac{ \delta \left(-\sqrt{25T} + \sqrt{25T + \frac{ 3\Delta^2_{k}(\tilde M)}{4n} } \right)} {12(\sqrt{n}+\sqrt{T})}, 
\frac{\delta \Big(1-2\sqrt{\frac{L^2\varepsilon}{1-L^4\varepsilon^2}}\Big)\min_{i\neq j} \|c_i^M - c_j^M\|}{4(\sqrt{n} + \sqrt{T})} 
\Biggl\}.
\]
Define $s_1 := \frac{ \delta \left(-\sqrt{25T} + \sqrt{25T + \frac{ 3\Delta^2_{k}(\tilde M)}{4n} } \right)} {12(\sqrt{n}+\sqrt{T})}$ and $s_2 := \frac{\delta \Big(1-2\sqrt{\frac{L^2\varepsilon}{1-L^4\varepsilon^2}}\Big)\min_{i\neq j} \|c_i^M - c_j^M\|}{4(\sqrt{n} + \sqrt{T})} $.
We will simplify this bound by showing that $s_1$ (the left element in minimum) is asymptotically smaller than $s_2$ (the right element).

First, we show an upper bound of $s_1$.
Note that $\sqrt{x+y}-\sqrt{x} \leq \frac{y} {2\sqrt{x}}, \; \forall x, y>0$ because the second derivative is always negative. Hence,
\begin{align*}
s_1 &= \frac{ \delta \left(-\sqrt{25T} + \sqrt{25T + \frac{ 3\Delta^2_{k}(\tilde M)}{4n} } \right)} {12(\sqrt{n}+\sqrt{T})}\\ 
&\leq \frac{\delta} {\sqrt{n}+\sqrt{T}} \cdot \frac{\Delta^2_{k}(\tilde M)} {160n\sqrt{T}} && (\sqrt{x+y}-\sqrt{x} \leq \frac{y} {2\sqrt{x}}, \; \forall x, y>0)\\
&\leq \frac{\delta} {\sqrt{n}+\sqrt{T}} \cdot \frac{2n\sqrt{T}} {160n\sqrt{T}} && (\Delta^2_{k}(\tilde M)\leq 2n\sqrt{T})\\
&=\frac{\delta} {\sqrt{n}+\sqrt{T}} \cdot \frac{1} {80}. 
\end{align*}

Next, we show a lower bound for $s_2$, the second element in the minimum.
\begin{align*}
    s_2 & = \frac{\delta \Big(1-2\sqrt{\frac{L^2\varepsilon}{1-L^4\varepsilon^2}}\Big)\min_{i\neq j} \|c_i^M - c_j^M\|}{4(\sqrt{n} + \sqrt{T})}\\
    &\geq \frac{\delta(1-\frac{\sqrt{89}}{267}) \min_{i\neq j} \|c_i^M - c_j^M\| }{4(\sqrt{n} + \sqrt{T})} && (L^2\varepsilon<1/\sqrt{801})\\
    &\geq \frac{\delta(1-2\frac{\sqrt{89}}{267}) \sqrt{\frac{1-\varepsilon^2}{\varepsilon^2} \min_ir_i^2(M) } }{4(\sqrt{n} + \sqrt{T})} && (\mbox{Lemma \ref{lem.ri.bound}})\\
    &\geq \frac{\delta(1-2\frac{\sqrt{89}}{267}) \sqrt{99} \min_ir_i(M) }{4(\sqrt{n} + \sqrt{T})} && (\varepsilon<0.1)\\
    &\geq \frac{2.311 \cdot \delta \min_ir_i(M) }{(\sqrt{n} + \sqrt{T})} \\
    &\geq \frac{\delta }{\sqrt{n} + \sqrt{T}} \;2 \cdot \min_ir_i(M).
\end{align*}
With our assumption $\min_ir_i(M)\geq1/160$, we can combine the bounds on $s_1$ and $s_2$ to get:
\[
s_1 \leq \frac{\delta} {\sqrt{n}+\sqrt{T}} \cdot \frac{1} {80}  \leq \frac{\delta }{\sqrt{n} + \sqrt{T}} \;2 \cdot \min_ir_i(M) \leq s_2
\]
Hence, by assuming $s<s_1 \leq \frac{\delta} {\sqrt{n}+\sqrt{T}} \frac{\Delta^2_{k}(\tilde M)} {160n\sqrt{T}} = O(\frac{\delta \sqrt{T}}{\sqrt{n}+\sqrt{T}})$ (recall $\Delta_k^2(\tilde M) = O(nT)$), we satisfy the constraint for $s$ and can apply  Lemmas \ref{lem.noisy.intermediate1} and \ref{lem.noisy.intermediate2}.

Instantiating these lemmas yields the desired guarantee:
\begin{align*}
|P^M\ominus P^{\tilde M}| 
&\leq |P^M\ominus \tilde P| + |\tilde P \ominus P^{\tilde M}| && \mbox{(triangle inequality)}\\
&\leq 2L^2\epsilon n + 161 \cdot 16 L^4\epsilon^2n && \mbox{(Lemmas \ref{lem.noisy.intermediate1} and \ref{lem.noisy.intermediate2})}\\
&\leq 2L^2\epsilon n + 92 L^2 \epsilon 
n  && (L^2\varepsilon<1/\sqrt{801})\\
&= 94L^2\epsilon n.
\end{align*}
\end{proof}

\section{Omitted Proofs from Section \ref{sec.theory.improve}}

\subsection{Proof of 
Theorem \ref{thm.pre.bound}}

\thmprebound*

\begin{proof}
Let $n_A = \alpha^2 n$ for some $\alpha \in (0, 1)$. 
From Lemma \ref{lem.pre.mse}, the upper bound on pre-intervention MSE when we use the full donor pool $X$ is,
\begin{align*}
    \MSE(\hat{m}_0^-;X)\leq \frac{\mu^2}{T_0}\mathbb{E}[ ( \sigma^*_X + 2s(\sqrt{n} + \sqrt{T})) ^2 ] + \frac{2s^2r}{T_0}.
\end{align*}
When we adopt ClusterSC, the full donor pool $X$ is substituted by the selected donor set $A$, and several terms will change in this upper bound: $n$ is replaced by $n_A$, $r$ by $r_S$, and $\sigma^*_X$ by $\sigma^*_A$. These changes only decrease the bound, and we provide further analysis on the improvement in this upper bound by showing the dependence on $n$ (the number of units) and $s^2$ (noise).
Specifically, we focus on the terms inside the expectation in this bound since $\frac{\mu^2}{T_0}$ does not change and $\frac{2s^2r}{T_0}$ can only decrease.

By expanding the terms inside the expectation, we get
\begin{align*}
    \left(\sigma^*_X + 2s(\sqrt{n} + \sqrt{T})\right) ^2
    &= \sigma_X^{*2} + 4s (\sqrt{n} + \sqrt{T}) \sigma^*_X + 4s^2(\sqrt{n} + \sqrt{T})^2 \\
    &= \sigma_X^{*2} + 4s (\sqrt{n} + \sqrt{T}) \sigma^*_X 
    + 4s^2n + 8s^2\sqrt{nT} + 4s^2T.
\end{align*}
When we change the donor matrix to $A$ instead of $X$, this changes to
\begin{align*}
    \left(\sigma^*_A + 2s(\alpha \sqrt{n} + \sqrt{T})\right) ^2
    &= \sigma_A^{*2} + 4s (\alpha \sqrt{n} + \sqrt{T}) \sigma^*_A 
    + 4s^2\alpha^2n + 8s^2\alpha \sqrt{nT} + 4s^2T.
\end{align*}
Then, the difference between the two becomes
\begin{align}
    \label{eq.pre.proof.1}
    &\left(\sigma^*_X + 2s(\sqrt{n} + \sqrt{T}) \right)^2 - \left(\sigma^*_A + 2s(\alpha \sqrt{n} + \sqrt{T})\right) ^2\\
    &= \sigma_X^{*2} - \sigma_A^{*2} + 4s(\sqrt{n}+\sqrt{T}) (\sigma^*_X - \sigma^*_A ) - (1-\alpha)4s\sqrt{n}\sigma^*_A
    + (1-\alpha^2)4s^2n + (1-\alpha)8s^2\sqrt{nT}\notag\\
    &= (\sigma^*_X + \sigma^*_A + 4s(\sqrt{n}+\sqrt{T})) (\sigma^*_X - \sigma^*_A ) - (1-\alpha)4s\sqrt{n}\sigma^*_A
    + (1-\alpha^2)4s^2n + (1-\alpha)8s^2\sqrt{nT} \notag
\end{align}
Let $\Delta$ denote the quantity in Equation \eqref{eq.pre.proof.1}, i.e., $\Delta \coloneqq \left(\sigma^*_X + 2s(\sqrt{n} + \sqrt{T}) \right)^2 - \left(\sigma^*_A + 2s(\alpha \sqrt{n} + \sqrt{T})\right) ^2$. To lower bound the expectation of $\Delta$, we apply Theorem \ref{thm.singval.gauss}, which also requires the assumptions that $r<T$ (which is satisfied automatically in our model) and $n_A < n + 4T -4 \sqrt{nT}$: 
\begin{align*}
    \E[\Delta] \geq & \;
    (2 \sigma_A^* + s(5-\alpha)\sqrt{n} + 2s\sqrt{T} )((1-\alpha)s\sqrt{n} - 2s\sqrt{T}) \\
    &- 4(1-\alpha)s\sqrt{n}\sigma^*_A
    + 4(1-\alpha^2)s^2n + 8(1-\alpha)s^2\sqrt{nT}\\
    =& \; 2(1-\alpha)s\sqrt{n}\sigma^*_A -4(1-\alpha)s\sqrt{n}\sigma^*_A -4s\sqrt{T}\sigma^*_A
    + s^2(5-\alpha)(1-\alpha)n + 2s^2(1-\alpha)\sqrt{nT}\\
    &-2s^2(5-\alpha)\sqrt{nT} -4s^2T + (1-\alpha^2)4s^2n + (1-\alpha)8s^2\sqrt{nT}\\
    =& \; -2(1-\alpha)s\sqrt{n}\sigma^*_A -4s\sqrt{T}\sigma^*_A
    + s^2( (9-6\alpha-3\alpha^2)n -4T -8\alpha^2\sqrt{nT})\\
    =& \; -s\left(2 (1-\alpha) \sqrt{n} +4\sqrt{T} \right)\sigma^*_A
    + \underbrace{3(\alpha+3)(1-\alpha)s^2n}_{\text{dominating term}} - \left(4s^2T +8s^2\alpha^2\sqrt{nT}\right).
\end{align*}
The first and the third terms are negative but they are relatively small numbers compared to the middle one (highlighted as dominating term). Hence, for sufficiently large $n$, $\E[\Delta]= \Omega(s^2n)$.

Finally, The difference in the two upper bounds is
\begin{align*}
    & \frac{\mu^2}{T_0} \E[\left(\sigma^*_X + 2s(\sqrt{n} + \sqrt{T})\right)^2] + \frac{2s^2r}{T_0}
    - \frac{\mu^2}{T_0} \E[\left(\sigma^*_A + 2s(\alpha\sqrt{n} + \sqrt{T})\right) ^2] - \frac{2s^2r_S}{T_0}\\
    &=  \frac{\mu^2}{T_0} \E[\left(\sigma^*_X + 2s(\sqrt{n} + \sqrt{T})\right)^2 - \left(\sigma^*_A + 2s(\alpha\sqrt{n} + \sqrt{T})\right)^2] + \frac{2s^2}{T_0}(r-r_S)\\
    &= \Omega(s^2n),
\end{align*}
since $n \gg T>T_0$, $\mu$ is a constant, and $r-r_S \geq 0$.
\end{proof}

\subsection{Proof of 
Theorem \ref{thm.post.bound}}

\postbound*

\begin{proof}
Let $n_A = \alpha^2 n$ for some $\alpha \in (0, 1)$. For this result we want to investigate the gap between post-intervention RMSE upper bounds presented in Lemma \ref{lemma.post.rmse} under the full donor pool $X$ and the sleected donor pool $A$ in ClusterSC.
This upper bound under $X$ is:
\[
RMSE(\hat m_0^+ ; X) \leq
\frac{\eta}{\sqrt{T-T_0}}\mathbb{E}[\sigma^*_X + 2s(\sqrt{n} + \sqrt{T}) ] + \sqrt{n}(\mu + \eta),
\]
We then compare the difference between this bound under $X$ and the same expression under $A$, where $A$ is the selected donor pool by ClusterSC:
\begin{align*}
& \quad \frac{\eta}{\sqrt{T-T_0}}\mathbb{E}[\sigma^*_X -\sigma^*_A + 2s(1-\alpha)\sqrt{n}] + (1-\alpha)\sqrt{n}(\mu + \eta)\\
&\geq \frac{\eta}{\sqrt{T-T_0}}\left(s((1-\alpha)\sqrt{n}-2\sqrt{T}) + 2s(1-\alpha)\sqrt{n}\right) + (1-\alpha)\sqrt{n}(\mu + \eta)\\
&= \frac{\eta}{\sqrt{T-T_0}}\left(s (3(1-\alpha)\sqrt{n}-2\sqrt{T})\right) + (1-\alpha)\sqrt{n}(\mu + \eta),
\end{align*}
where the first inequality comes from Theorem \ref{thm.singval.gauss}. To instantiate this theorem requires the assumptions that $r<T$ (which is satisfied automatically in our model) and $n_A < n + 4T -4 \sqrt{nT}$. 
Since $n\gg T$ and $\mu$ and $\eta$ are constants, we conclude that the difference is $\Omega(s\sqrt{n})$.
\end{proof}


\section{Additional Experiments with Synthetic Datasets}\label{app.simulation}

In this section, we share more detailed results with synthetic simulated datasets.
Appendix \ref{olsridge} shows the performance of ClusterSC with OLS and Ridge regression (via Robust Synthetic Control).
Appendix \ref{sec.lasso} presents an analysis of ClusterSC with Lasso regression.

\subsection{ClusterSC with Robust Synthetic Control (OLS and Ridge regression)}
\label{olsridge}
Robust synthetic control \citep{rsc} first applies de-noising step (HSVT) and then learns weights using OLS or ridge regression. This is simply adopting OLS or ridge regression in step 3 of Algorithm \ref{alg.sc.core}.
In this section, we show the results comparing the performance of robust synthetic control against our ClusterSC. The number of distinct signals in each submatrices $A$ and $B$ is $r_A=r_B=3$, and the ridge coefficient was fixed to $0.01$.

Figure \ref{fig.ridge_ols_box} shows the average MSE per dataset over varying noise levels ($s$), using 1) robust synthetic control with OLS (blue), 2) robust synthetic control with ridge (orange), 3) ClusterSC with OLS (green), and 4) ClusterSC with ridge (red).
We observe that ridge regression performs better than OLS with or without the clustering step.
With ClusterSC, we reduce the expectation of MSE and the variance as well, regardless of the choice of regression method (OLS or ridge).

\begin{figure}[thb]
\centering
\includegraphics[width=0.9\textwidth]{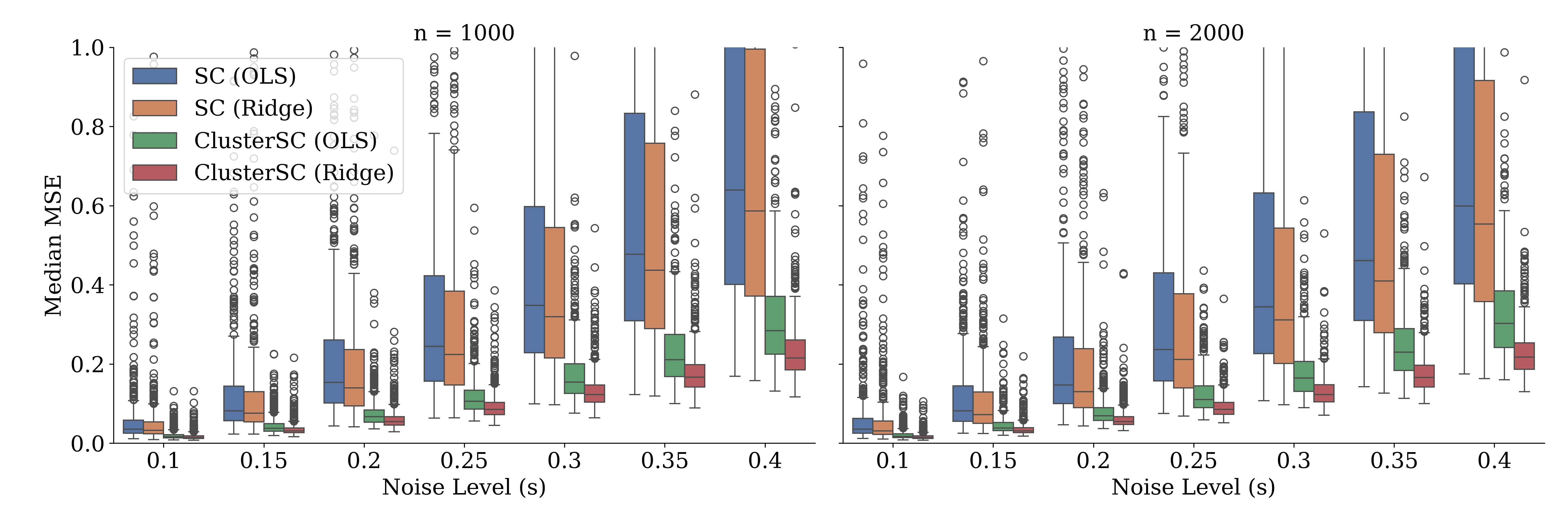}
\caption{Median post-intervention MSE, measured per dataset. Each boxplot corresponds to ridge, OLS, cluster and then ridge, and cluster and then OLS, from left to right, plotted for each noise level. Left plot is with $n=1000$ donor units in total and the right plot is with $n=2000$.
}\label{fig.ridge_ols_box}
\end{figure}

Next, we define the pairwise improvement for a target $i$ as the difference in post-intervention MSE scores between the two methods: $I_i= \MSE(\hat{m}_i^+;X)-\MSE(\hat{m}_i^+;A)$.
Then, we take $\text{median}(I_i)$ as a metric to assess the overall improvement measured from $n$ SC instances constructed from one dataset (leave-one-out placebo test).
Figure \ref{fig.ridge_ols_line} shows the pairwise improvement (i.e., $\text{median}(I_i)$) induced by ClusterSC at varying noise level. 
We observe that the median improvement is almost always positive, meaning that more than half of the individuals benefit from using ClusterSC instead of RSC. 
In the $n=2000$ case, the improvement grows as noise increases, corroborating our Theorem \ref{thm.post.bound}.
On the other hand, when $n=1000$, the improvement continues to increase until $s=0.35$, after which it plateaus at $s=0.4$ for Ridge and decreases for OLS. This may be attributed to the noise level $s=0.4$ being sufficiently high that the donor $n=1000$ is not enough to effectively capture the true signal. Nonetheless, a significant improvement is observed in median MSE for the same setting from Figure \ref{fig.ridge_ols_box}.
The improvement is more stable (low variance) with higher $n$, and the improvement in OLS and ridge regression is not too different.

\begin{figure}[h]
\centering
\includegraphics[width=0.9\textwidth]{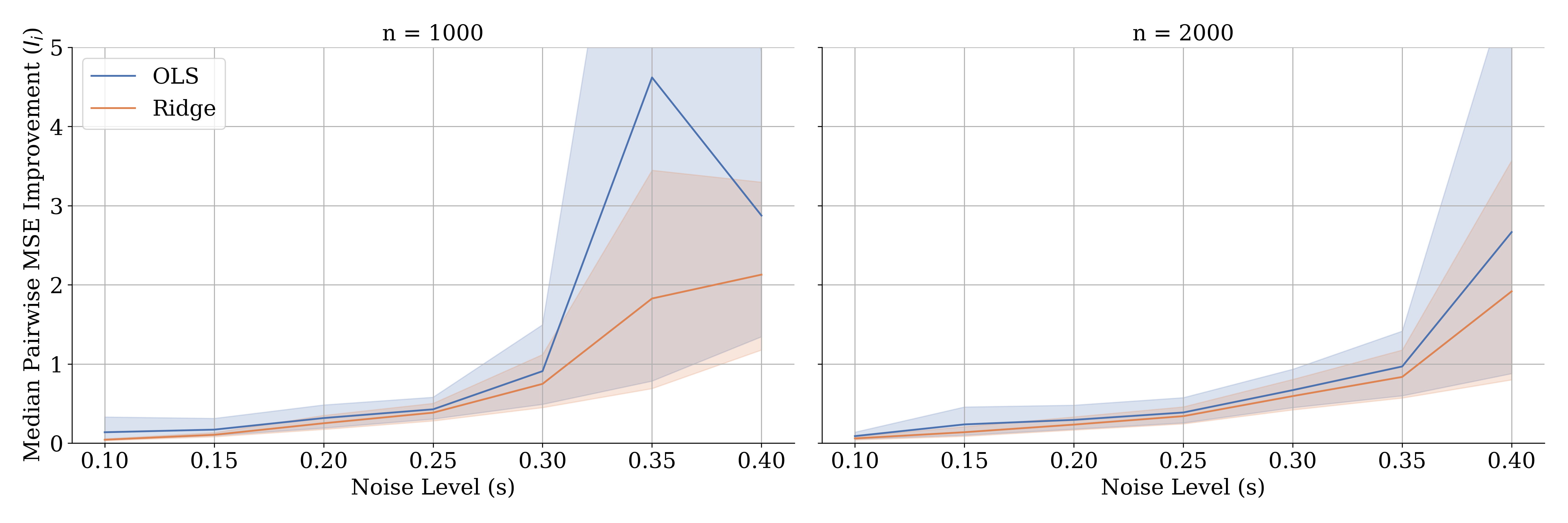}
\caption{Median of the pair-wise improvements measured by comparing SC and ClusterSC, when using OLS (blue) and Ridge regression (orange), over different noise levels.
}\label{fig.ridge_ols_line}
\end{figure}

\subsection{ClusterSC with Lasso regression}\label{sec.lasso}

In this section, we use Lasso regression for step 3 of Algorithm \ref{alg.sc.core}.
Again, we use the same data generating method with the same parameters $n_A=n_B\in\{500, 1000\}$, $T=10$, and $r_A=r_B=3$. The Lasso coefficient was $0.01$ for all experiments.
Due to the high computational cost of Lasso, we only test for noise levels $s\in\{0.1, 0.2, 0.3, 0.4\}$.

Figure \ref{fig.hsvt_lasso} shows the average post-intervention MSE. We observe more improvement with clustering as noise level increases. Compared to the results in Figure \ref{fig.ridge_ols_box}, the improvement induced by the clustering step when regression is performed with Lasso is not as large as compared to OLS or Ridge in absolute value. Still, the improvement is evident.

\begin{figure}[h]
\centering
\includegraphics[width=0.9\textwidth]{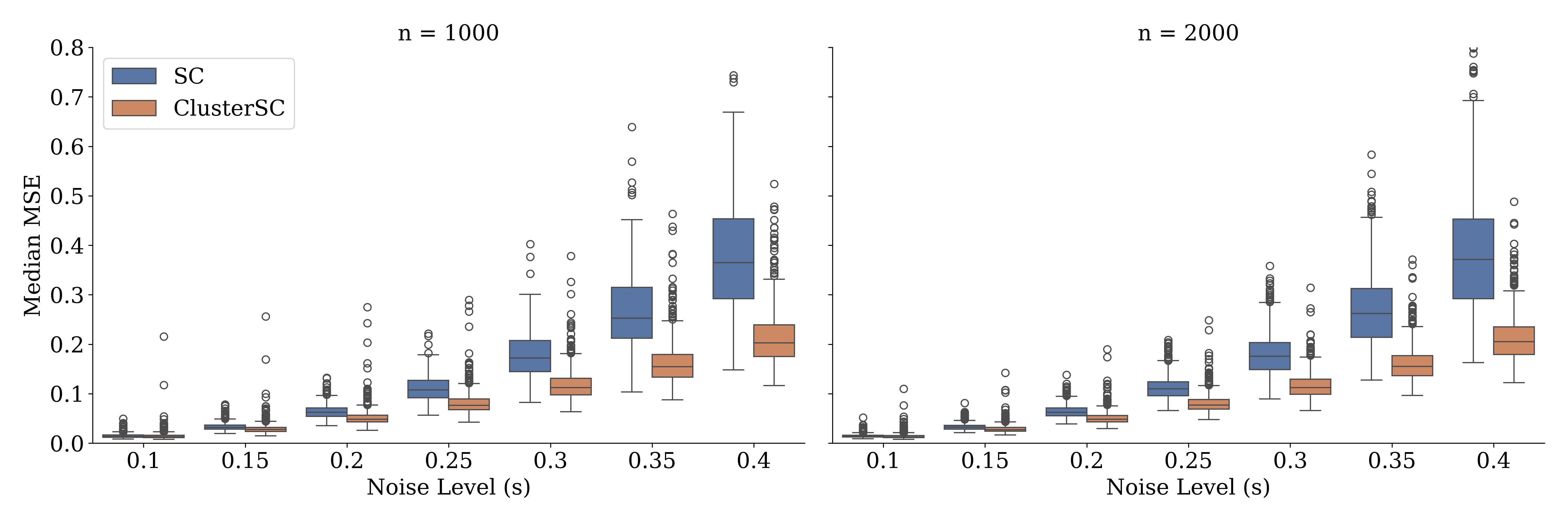}
\caption{Median post-intervention MSE, measured per dataset. We compare using SC with Lasso (blue) and ClusterSC with Lasso (orange).}\label{fig.hsvt_lasso}
\end{figure}

\textbf{Analysis on Active Donors.}
To further investigate, we analyze the \emph{active donors} selected by Lasso regression, which correspond to donor units with non-zero SC weights. Since Lasso produces a sparse vector, it effectively selects a subset of relevant donors to reconstruct the target unit. In our experimental setup, units in group $A$ share the same signals, and all target units are sampled from $A$. Ideally, the relevant donors should be chosen from $A$ rather than $B$.
To quantify this, we use precision scores to assess the proportion of selected donors that correctly belong to group $A$.

\begin{figure}[t]
\centering
\includegraphics[width=0.95\textwidth]{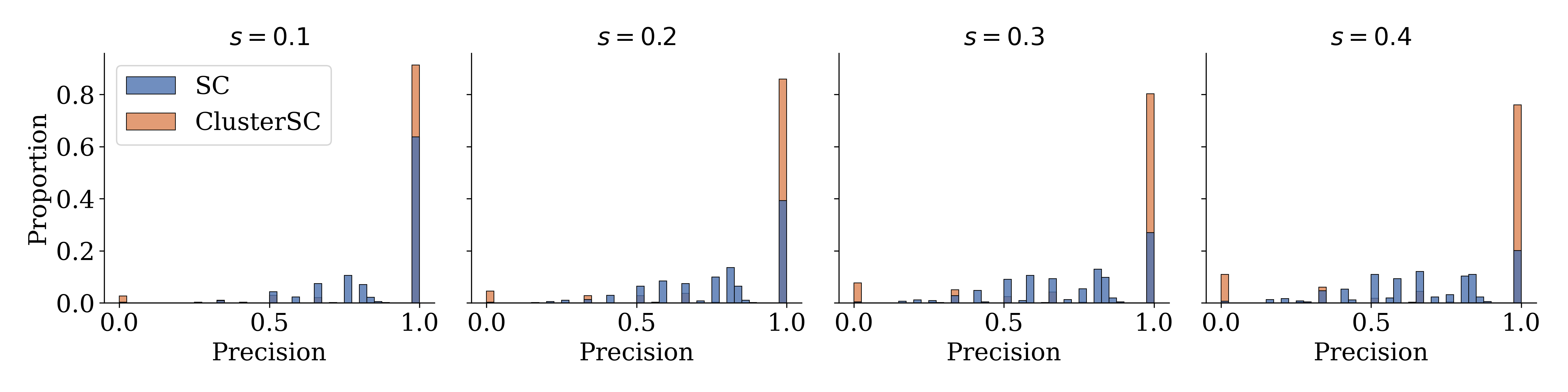}
\caption{Histogram of the precision score of active donor units from using SC with Lasso (blue) and ClusterSC with Lasso (orange). 100 iterations are displayed for $n=1000$, each with 150 leave-one-out scores.
}\label{fig.lasso_precision}
\end{figure}

Figure \ref{fig.lasso_precision} illustrates the distribution of precision scores for active donor units selected by SC with Lasso (blue) and ClusterSC with Lasso (orange).
For all noise regimes, ClusterSC shows more concentrated precision score around $1$ compared to SC.
As noise increases, this difference in concentration increases further, indicating that ClusterSC is more robust to noise in selecting relevant donors.

For SC, a small proportion of donors from group $B$ are incorrectly included, with their precision scores spread relatively evenly across all score ranges.
When ClusterSC fails to achieve high precision, the scores tend to be near $0$, rather than being somewhere in between $0$ and $1$.
This pattern suggests that ClusterSC selected an incorrect cluster (e.g., the selected donor set predominantly consists of group B), and hence it can only choose the donors from B.
Nonetheless, incorrect selection of donor set does not occur frequently, which agrees with ClusterSC's decreased median MSE compared to SC (see Figure \ref{fig.hsvt_lasso}).

\begin{figure}[h]
\centering
\includegraphics[width=0.95\textwidth]{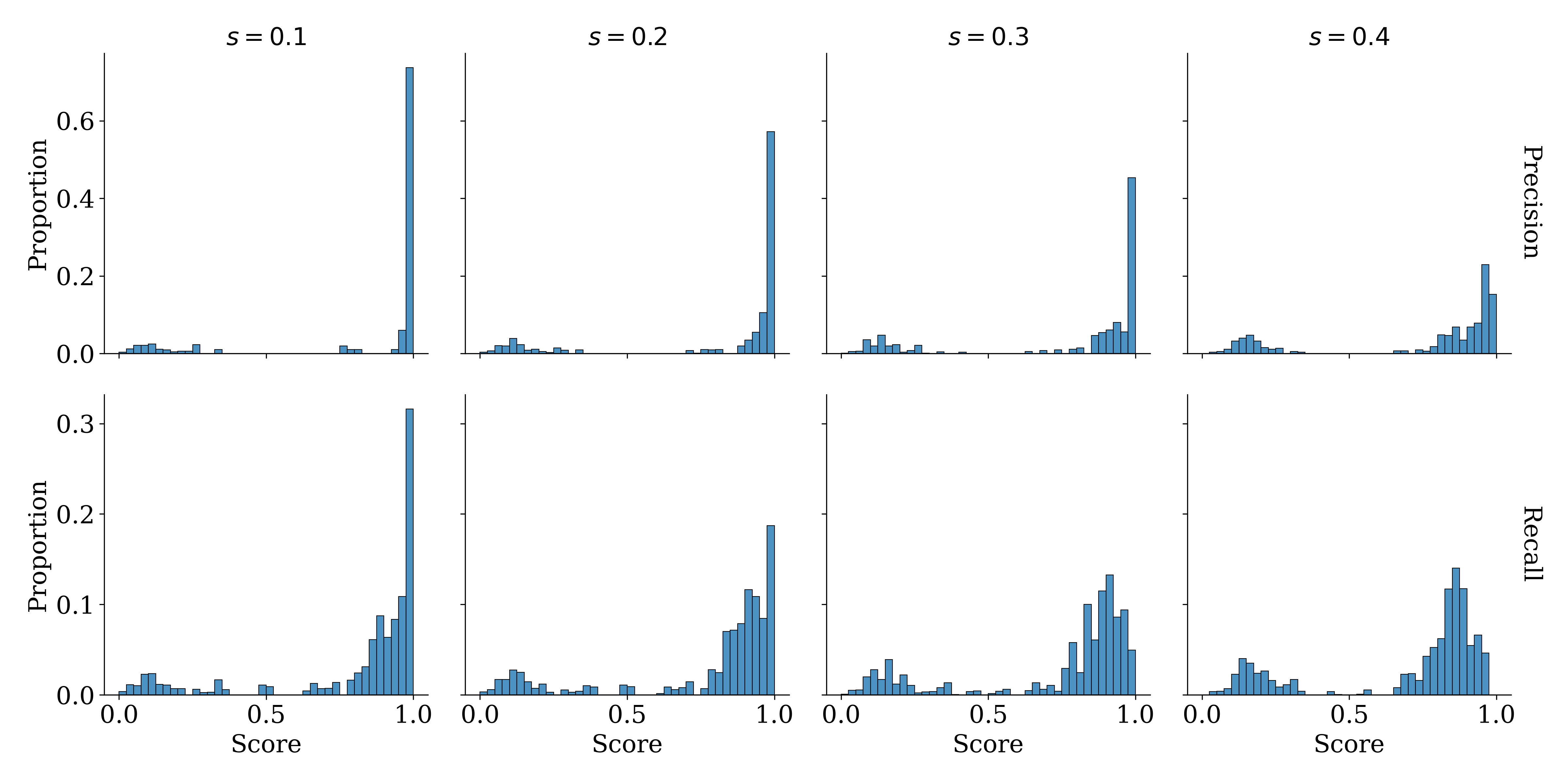}
\caption{Histograms of precision and recall scores of donor units selected by ClusterSC compared to units in $A$, over varying noise levels. 100 iterations are displayed for $n=1000$, each with 150 leave-one-out scores.
}\label{fig.cluster_precision_recall}
\end{figure}

\textbf{Analysis on Clusters.}
We can further analyze this improvement by investigating how effectively the clustering step (k-means) selects a relevant donor set by computing its precision and recall scores with units in group $A$ (from Step 4 of Algorithm \ref{alg.csc}) as the true label. Figure \ref{fig.cluster_precision_recall} presents histograms of precision scores (top row) and recall scores (bottom row) for varying noise levels. 
A precision score close to $1$ indicates that most of the donors selected by ClusterSC are already from the relevant group $A$, making it easier for Lasso to select the best fit among them.
In the low noise regimes, the precision score is $1$ for more than $70\%$ of the cases, and the regression step does not need to filter it any further.
However, the ability of ClusterSC to select only the relevant donors (high precision score) degrades as noise increases. ClusterSC, together with the power of Lasso to learn sparse weights, can significantly improve the precision scores in high noise regimes, from the first row of Figure \ref{fig.cluster_precision_recall} to the orange plots in Figure \ref{fig.lasso_precision}.
In contrast, without clustering, Lasso must filter out irrelevant donors from group $B$ solely through the power of regularization in the regression step, which is shown in blue bars in Figure \ref{fig.lasso_precision}.
This shows that ClusterSC with Lasso has a synergistic effect for only selecting relevant donors, improving from using only Lasso (Figure \ref{fig.lasso_precision}) or clustering (Figure \ref{fig.cluster_precision_recall}) individually.

\textbf{Computational Efficiency.}
Another advantage that ClusterSC brings in with Lasso regression is computational efficiency.
Like Lasso, ClusterSC seeks to isolate only the most important donors for target reconstruction. Unlike Lasso, ClusterSC avoids running a linear regression in the full $n$ dimensions.
ClusterSC comprises of three main parts: 1) SVD on the original matrix, 2) clustering, and 3) regression (on the subsampled donor).
We recall the computational complexity of each of these steps.

\begin{lemma}[\citep{ostrovsky2013effectiveness}]\label{lem:kmeans_runtime}
Fix any \(\omega > 0\) and a dataset \(X\in \mathbb{R}^{n\times d}\). Assuming \(\Delta_k^2(X) \leq \varepsilon^2 \Delta_{k-1}^2(X)\) for \(\varepsilon\) small enough, there is an algorithm which, with constant probability, outputs a partition \(\hat P\) that is \((1+\omega)\)-optimal solution to \(k\)-means on \(X\), meaning \(\Delta_k^2(X;\hat P) \leq (1+\omega)\Delta_k^2(X)\). Furthermore, this algorithm runs in time \(O(2^{O(k(1+\varepsilon^2)/\omega)} nd)\). 
\end{lemma}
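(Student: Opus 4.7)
The plan is to adapt a sampling-based seeding procedure, in the spirit of $D^2$-sampling (as in $k$-means++), to exploit the $\varepsilon$-separation assumption. The intuition is that when $\Delta_k^2(X)\le \varepsilon^2\Delta_{k-1}^2(X)$, the optimal clusters are so well-separated that seeds drawn proportional to squared distance to the already-chosen centers are very likely to land in fresh clusters, rather than in clusters already covered.

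First, I would specify the algorithm in three stages. Stage 1 picks an initial pair of seeds with probability proportional to pairwise squared distance. Stage 2 iteratively draws a new seed with probability proportional to the squared distance to the nearest chosen seed, until $k$ seeds are selected. Stage 3 performs a bounded-size local refinement (for example, enumerating subsets of swaps of size $O(1/\omega)$ around each seed, in the style of a $k$-means PTAS) to boost the approximation factor from a constant to $1+\omega$. The output $\hat P$ is the Voronoi partition induced by the final centers.

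Second, I would establish the key probabilistic claim: under $\varepsilon$-separation, with constant probability the $k$ sampled seeds land one-per-cluster with respect to the optimal partition $P^X$. The core calculation is that once a representative from cluster $P_i^X$ has been picked, the contribution of that cluster to the total squared-distance potential drops to roughly $|P_i^X| r_i^2(X)$, which by the analogue of Lemma \ref{lem.ri.bound} is a factor $O(\varepsilon^2)$ smaller than the between-cluster squared distances. Consequently, each subsequent $D^2$-draw repeats an already-covered cluster only with probability $O(\varepsilon^2)$, and a union bound over the $k-1$ rounds keeps the overall success probability at a constant depending on $\varepsilon$.

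Third, I would convert this structural guarantee into the claimed approximation and runtime. Conditional on one-seed-per-cluster, the induced Voronoi partition already achieves a constant-factor approximation of $\Delta_k^2(X)$; to sharpen this to $1+\omega$, one enumerates $2^{O(k(1+\varepsilon^2)/\omega)}$ candidate refinements of the seed set, each of which requires $O(nd)$ work to assign points and recompute the potential, yielding the stated complexity. The main obstacle is the probabilistic argument in step two: one must carefully track how the residual potential concentrates on the uncovered clusters and use the $\varepsilon$-separation condition to bound the ``bad'' sampling events, which is exactly the technical heart of \citet{ostrovsky2013effectiveness}. Since the lemma is imported directly from their work, the paper would simply cite their result, but the sketch above captures the essential moves.
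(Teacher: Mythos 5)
This lemma is imported verbatim from \citet{ostrovsky2013effectiveness}; the paper gives no proof of its own and simply cites their result, exactly as you anticipated. Your sketch faithfully reconstructs the structure of their argument (distance-squared seeding whose one-seed-per-cluster guarantee follows from the $\varepsilon$-separation condition, followed by a sampling/enumeration stage of size $2^{O(k(1+\varepsilon^2)/\omega)}$ to boost a constant-factor solution to a $(1+\omega)$-approximation), so there is nothing to reconcile with the paper beyond the citation itself.
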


\begin{lemma}[\cite{golub2013matrix}]\label{lem:svd_runtime}
Computing the singular value decomposition for a dense \(m\times n\) matrix takes time \(O(mn\min\{m,n\})\).
\end{lemma}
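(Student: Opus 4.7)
The plan is to follow the classical Golub--Kahan--Reinsch scheme, which proceeds in two stages: first reduce $X$ to an upper bidiagonal matrix by orthogonal transformations, then compute the SVD of that bidiagonal matrix iteratively. Without loss of generality assume $m\geq n$, so $\min\{m,n\}=n$ and the target bound is $O(mn^2)$. I would first write $X = U_1 B V_1^\top$ where $U_1\in \mathbb{R}^{m\times m}$ and $V_1\in \mathbb{R}^{n\times n}$ are orthogonal and $B$ is upper bidiagonal, obtained by alternately applying left and right Householder reflectors: the $k$-th left reflector zeros entries $(k+1,k),\dots,(m,k)$, and the $k$-th right reflector zeros entries $(k,k+2),\dots,(k,n)$.

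The cost of stage one is the main content. A Householder reflector acting on a submatrix of size $p\times q$ can be applied by the rank-one update $A \mapsto A - 2 v(v^\top A)$, costing $O(pq)$ flops rather than the $O(p^2q)$ of naive multiplication. At step $k$, the active submatrix has size $(m-k+1)\times(n-k+1)$, so both the left and right reflectors at step $k$ cost $O((m-k)(n-k))$. Summing,
\[
\sum_{k=1}^{n} O\bigl((m-k)(n-k)\bigr) \;=\; O(mn^2) \;=\; O(mn\min\{m,n\}),
\]
which matches the claimed bound. Accumulating $U_1$ and $V_1$ explicitly (if needed) costs another $O(m^2 n)$ and $O(n^3)$ respectively, which fit within the same asymptotic envelope when one writes the bound as $O(mn\min\{m,n\})$ with the convention that the $U_1$ factor is kept in compact Householder form, or the result is stated for computing only the singular values and right singular vectors.

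Stage two computes the SVD of the $n\times n$ bidiagonal $B$ via the implicit QR algorithm on $B^\top B$ with Wilkinson shifts. Each QR sweep costs $O(n^2)$, and the shifted iteration has asymptotically cubic local convergence, so $O(n)$ sweeps suffice to deflate all singular values to machine precision, yielding $O(n^3)$ total. Since $n^3 \leq mn^2$, this stage is dominated by the bidiagonalization cost, and the overall complexity is $O(mn^2) = O(mn\min\{m,n\})$.

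The only genuine subtlety, and the main ``obstacle'' if one wanted a fully self-contained proof rather than a citation, is giving a clean worst-case iteration bound for stage two; the cubic convergence of shifted QR on bidiagonal matrices is classical but nontrivial. Since we are invoking this as a standard result from \citep{golub2013matrix}, I would simply record the bidiagonalization calculation above and cite Chapter 8 of Golub and Van Loan for the bidiagonal QR analysis.
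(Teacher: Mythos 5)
The paper does not prove this lemma at all—it is invoked purely as a citation to \cite{golub2013matrix}—and your sketch is precisely the standard Golub--Kahan--Reinsch argument from that reference, so you are in agreement with the intended justification. Your handling of the two genuine caveats (the full $U_1$ factor costing $O(m^2 n)$ unless kept in compact Householder form or a thin SVD is computed, and the lack of an elementary worst-case sweep count for the shifted bidiagonal QR stage) is appropriate, and deferring those to the cited text is exactly what the paper itself does.
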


\begin{lemma}[\cite{efron2004least}]\label{lem:lasso_runtime}
    Lasso regression on \(v\) variables (number of features) with sample size \(s\) (number of observations) each takes time \(O(v^3 + v^2s)\).
\end{lemma}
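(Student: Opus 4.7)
The plan is to invoke the LARS (Least Angle Regression) algorithm of Efron et al.\ as the computational vehicle for Lasso. Their central observation is that the Lasso regularization path is piecewise linear in the penalty parameter, with break points occurring exactly when a variable enters or leaves the active set. Consequently, the entire path can be computed by tracing these linear segments and modifying the active set at each break point, so it suffices to bound the number of break points and the per-iteration cost.

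First, I would bound the number of LARS iterations required to traverse the full path by $O(v)$: under generic conditions, each iteration either adds or removes a single coordinate from the active set $A \subseteq [v]$, and the net number of such changes is linear in the total number of features. Second, within each iteration the dominant costs are (i) recomputing the length-$v$ correlation vector between the current residuals and the design matrix columns, which requires $O(vs)$ work, and (ii) maintaining a Cholesky factor of the Gram submatrix restricted to $A$ and solving the associated triangular system for the equiangular direction, which costs $O(|A|^2) \le O(v^2)$ per iteration via a rank-one Cholesky update. Summing over $O(v)$ iterations then yields the advertised $O(v^3 + v^2 s)$ total.

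The main obstacle is the step that bounds the number of LARS iterations, because the Lasso modification of LARS permits coordinates to leave and later re-enter the active set, which in pathological cases could inflate the iteration count. Under a mild non-degeneracy assumption on the design (no ties in the absolute correlation profile at any break point), Efron et al.\ argue that the number of break points remains linear in $v$; absent that assumption one can invoke their ``one-at-a-time'' variant or a standard amortization argument to keep the iteration count at $O(v)$. The remaining steps are routine bookkeeping on rank-one Cholesky updates, which is standard in numerical linear algebra and does not alter the asymptotics.
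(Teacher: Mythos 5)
Your proposal is correct and matches the paper's approach: the paper offers no independent proof of this lemma, it simply cites the LARS analysis of \cite{efron2004least}, and your sketch (piecewise-linear path, $O(v)$ active-set changes under non-degeneracy, $O(vs)$ correlation updates plus rank-one Cholesky maintenance per step) is exactly the argument in that source. Your caveat about drops and re-entries inflating the number of breakpoints for the Lasso variant of LARS is the right one to flag, and it is handled the same way there, via the generic ``one-at-a-time'' condition.
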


Note that in synthetic control, the number of features for regression purposes is actually the number of donors \(n\), as the goal is to predict the behavior of the target donor per-time-step. Suppose we use a constant-factor $k$-means approximation for ClusterSC (i.e., the algorithm will correctly identify clusters for all but $\omega$ fraction of points with $\omega = \Omega(1)$).
We compare the runtime of synthetic control with Lasso versus ClusterSC with Lasso in terms of $n$ (number of all donor units), $n_A$ (number of donors selected by ClusterSC), and $m$ (the number of targets we test).
Note the we do not consider $T$ as we assume a tall matrix ($n \gg T$).

For ClusterSC (Algorithm \ref{alg.csc}), the major computations will be:
\begin{itemize}
    \item Step 1. Learn clusters: $O(n)+O(n)$ \hfill(Lemmas \ref{lem:svd_runtime} and \ref{lem:kmeans_runtime})
    \item Step 3. Construct donor matrix $A$ and denoise: $O(n_A)$ \hfill(Lemma \ref{lem:svd_runtime})
    \item Step 4. ($m$ rounds of) SC Learning: $m\cdot O(n_A^3)$ \hfill(Lemma \ref{lem:lasso_runtime})
\end{itemize}
Note that we only $n$ and $n_A$ to grow (where $n>n_A$), but not $T$. Hence, considering runtime in terms of parameters $n, n_A$, and $m$, the time complexity of ClusterSC is $O(n+ mn_A^3)$.
On the other hand, the classical synthetic control with Lasso will have time complexity of $O(mn^3)$.

\section{Additional Details about the Housing Dataset}\label{app.housing}

We provide more insights into the housing price index dataset used in Section \ref{sec.realworld}.
Figure \ref{fig.hpi} provides a graphical summary of this time series panel dataset.

\begin{figure}[h]
\centering
\includegraphics[width=0.7\linewidth]{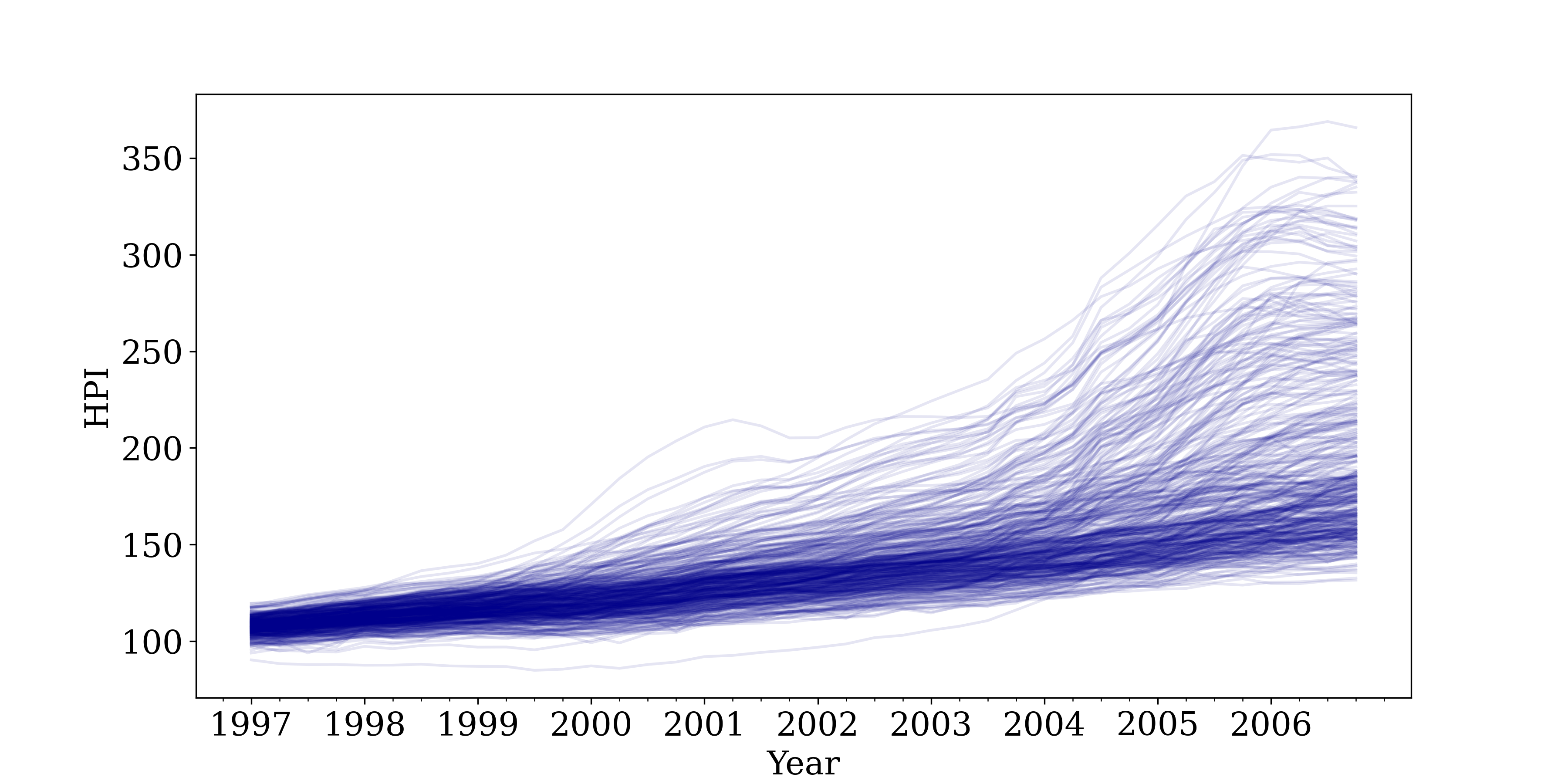}
\caption{Full time series plot of cleaned housing price index (HPI) dataset with $n=400$ metropolitan areas and $T=40$ quarters from $1997$ to $2006$.}
\label{fig.hpi}
\end{figure}

To further understand the importance of the singular value cutoff, we plot the singular value spectrum of the entire dataset in Figure \ref{fig.hpi_singval}.
Note that in every iteration with different train test split, we recompute SVD for the donor matrix, so the spectrum may not be exactly the same across iterations. However, this does still give us an understanding of the dataset, which we assume to be approximately low rank.
On the left side, we plot the cumulative singular value ratio, which shows that the top three or four singular values contain about $95\%$ of the total singular values.
On the right side, we can see the gap in singular values decreases as we increase the index, which shows that the dataset does satisfy the assumption of approximately low-rank structure.

\begin{figure}[h]
\centering
\includegraphics[width=0.8\textwidth]{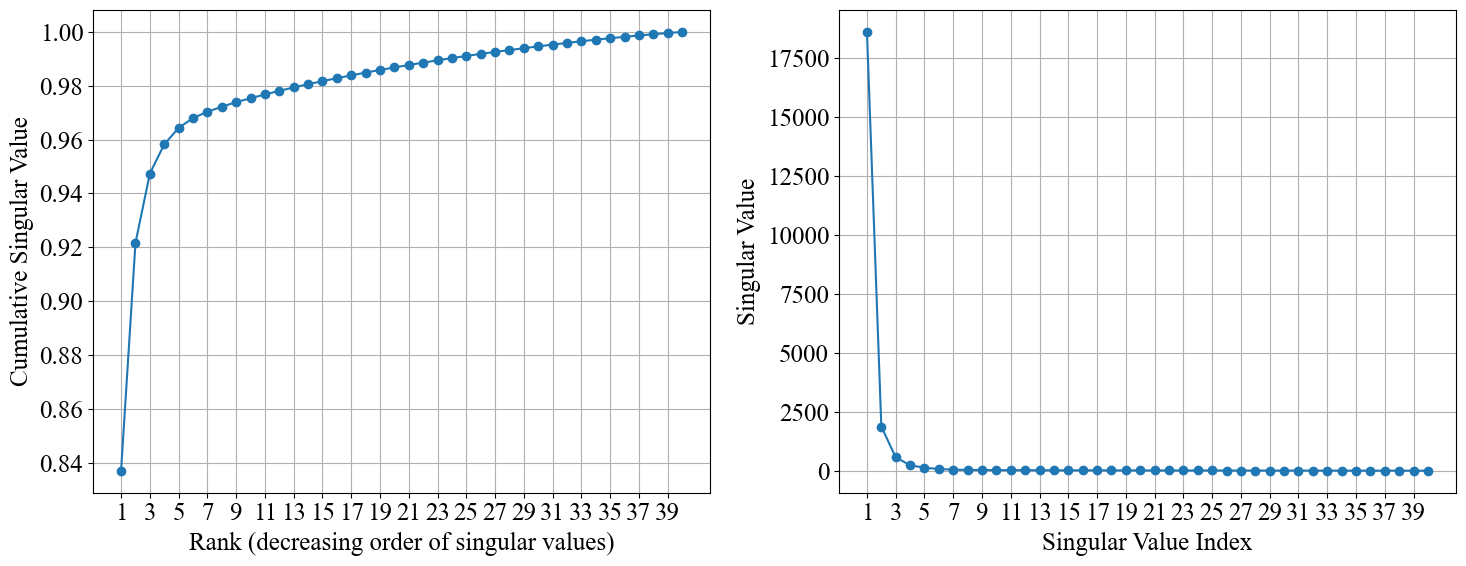}
\caption{Cumulative singular value ratio (left) and the singular value spectrum (right), ordered by decreasing singular values.}\label{fig.hpi_singval}
\end{figure}

\end{document}